\documentclass[journal]{IEEEtran}

\usepackage[ruled]{algorithm2e}
\usepackage{graphicx}
\usepackage{amsmath}
\usepackage{multirow} 
\usepackage{cite}
\usepackage{amsmath,amssymb,amsfonts}
\usepackage{algorithmic}
\SetKwComment{Comment}{/* }{*/}
\usepackage{textcomp}
\usepackage{makecell}
\usepackage{pifont}
\usepackage{xcolor}
\usepackage{amsthm}
\usepackage[most]{tcolorbox}
\usepackage{xcolor}

\definecolor{lightgray}{gray}{0.96}

\usepackage{booktabs}
\usepackage[colorlinks,
            linkcolor=blue,       
            anchorcolor=blue,  
            citecolor=blue,      
            ]{hyperref}
\usepackage{amsmath,amssymb,amsthm}
\newtheorem{definition}{Definition} 
\newtheorem{theorem}{\bf Theorem}

\newtheorem{proposition}{\bf Proposition}

\usepackage[ruled]{algorithm2e}
\def\BibTeX{{\rm B\kern-.05em{\sc i\kern-.025em b}\kern-.08em
    T\kern-.1667em\lower.7ex\hbox{E}\kern-.125emX}}

\tcolorboxenvironment{definition}{
  colback=lightgray,
  boxrule=0pt,
  frame hidden,
  arc=0pt,
  left=0pt,
  right=0pt,
  top=0pt,
  bottom=0pt,
}

\tcolorboxenvironment{proposition}{
  colback=lightgray,
  boxrule=0pt,
  frame hidden,
  arc=0pt,
  left=0pt,
  right=0pt,
  top=0pt,
  bottom=0pt,
}

\tcolorboxenvironment{theorem}{
  colback=lightgray,
  boxrule=0pt,
  frame hidden,
  arc=0pt,
  left=0pt,
  right=0pt,
  top=0pt,
  bottom=0pt,
}

\begin{document}

\title{Towards Personalized Differentially Private Learning for Decentralized Local Graphs}

\author{Longzhu~He, Peng~Tang, Chaozhuo Li, Jinhu~Fu, Litian Zhang, Li~Sun, Philip S. Yu, and Sen~Su 
\thanks{This work was supported by the National Key Research and Development Program of China (No. 2024YFF0907401), the National Natural Science Foundation of China (No. 62072052), and the BUPT Excellent Ph.D. Students
Foundation (No. CX20260030), and the Shandong Provincial Natural Science Foundation (No. ZR2025MS1038). (Corresponding author: Sen Su.)}
\thanks{Longzhu He, Jinhu Fu, Li Sun, and Sen Su are with
the School of Computer Science, Beijing
University of Posts and Telecommunications, Beijing 100876, China, and also with the State Key Laboratory of Networking
and Switching Technology, Beijing University of Posts and Telecommunications, Beijing 100876, China. (e-mail: helongzhu@bupt.edu.cn; fjhu@bupt.edu.cn; lsun@bupt.edu.cn, susen@bupt.edu.cn).}
\thanks{Peng Tang is with the School of Cyber Science and Technology, Shandong University, Qingdao, China (e-mail: tangpeng@sdu.edu.cn).}
\thanks{Chaozhuo Li and Litian Zhang are with the School of Cyberspace Security, Beijing University of Posts and Telecommunications, Beijing 100876, China (e-mail: lichaozhuo@bupt.edu.cn; litianzhang@bupt.edu.cn).}
\thanks{Philip S. Yu is with the Department of Computer Science, University of Illinois at Chicago, Chicago, IL 60607 USA (e-mail: psyu@uic.edu).}
}

\markboth{IEEE Transactions on Knowledge and Data Engineering}%
{Shell \MakeLowercase{\textit{et al.}}: Bare Demo of IEEEtran.cls for IEEE Journals}

\maketitle

\begin{abstract}
Graph-structured data is increasingly generated and stored in decentralized environments, such as social platforms, mobile applications, and edge networks, where users maintain control over their local graph data. However, collecting and analyzing such decentralized graph data for downstream learning tasks raises significant privacy concerns, as nodes and their attributes often contain sensitive personal information. Local Differential Privacy (LDP) has emerged as a promising solution for privacy-preserving data collection without relying on trusted servers. Nevertheless, existing LDP-based graph learning methods typically assume uniform privacy requirements across users, ignoring the heterogeneous and personalized privacy preferences commonly observed in real-world systems. This uniform treatment leads to inflexible noise injection at the data collection stage, resulting in substantial distortion of graph data and degraded utility in subsequent analysis. To address this limitation, we propose PPGNN, a personalized differentially private framework for decentralized graph data. PPGNN enables user-specific privacy budgets during local perturbation while preserving analytical utility. To handle heterogeneous privacy levels and noise distortion, we design a two-stage solution consisting of a Personalized Perturbation Mechanism (PPM) and a weighted calibration strategy, FlexProp. Extensive experiments on six real-world graph datasets demonstrate that PPGNN effectively balances personalized privacy protection and data utility in decentralized graph learning scenarios.
\end{abstract}

\begin{IEEEkeywords}
local differential privacy, graph-structured data, graph learning, personalized privacy requirements
\end{IEEEkeywords}

\IEEEpeerreviewmaketitle

\section{Introduction}
\begin{figure}
  \centering
  \includegraphics[scale=0.6]{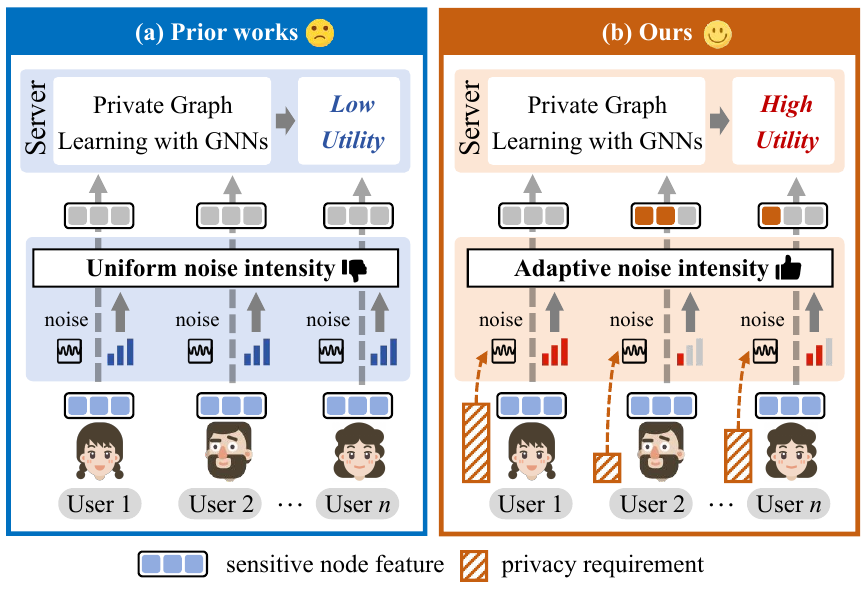}
  \caption{Comparison of \textbf{(a) prior works} and \textbf{(b) ours} in the locally private graph learning scenario. The scenario involves a server (\textit{e.g.}, a social network server) and multiple decentralized users, each with sensitive node features and distinct privacy requirements. The server collects perturbed data and performs private graph learning using a GNN. \textbf{(a)} Previous works overlook personalized privacy requirements of users and \textit{apply uniform noise intensity} to protect node features, leading to \textit{low utility} in private graph learning. \textbf{(b)} In contrast, our approach \textit{adapts the noise intensity} according to user-specific privacy requirements, thereby promoting \textit{high utility} in private graph learning.}
  \label{fig:0}
\end{figure}

\IEEEPARstart{G}{raph}-structured data has become a fundamental data type in modern information systems, widely used to model complex relationships in social networks, mobile applications, and online platforms. In many real-world scenarios, such graph data is decentralized and maintained locally by users, containing sensitive information embedded in node attributes and structural connections. Collecting and aggregating these local graph data for downstream analytical tasks may therefore raise significant privacy concerns~\cite{wang2022group}. Graph Neural Networks (GNNs)~\cite{hamilton2017inductive,wu2020comprehensive} have emerged as a powerful tool for learning from graph data, demonstrating remarkable performance in various downstream tasks such as node classification~\cite{DBLP:conf/iclr/KipfW17}, link prediction~\cite{zhang2018link}, and graph classification~\cite{DBLP:conf/iclr/XuHLJ19}. However, applying GNNs in decentralized environments requires collecting user data to a central server, which further amplifies privacy risks. This is particularly evident in social mobile applications like Instagram and WhatsApp~\cite{paul2014survey}, where servers aim to gather various types of information (\textit{\textit{e.g.}}, user profiles and interactions) for recommendation purposes. Recent studies have also shown that GNNs are vulnerable to various privacy attacks~\cite{shen2022model,wang2022group,wu2022model,zhang2022inference,zhuangunveiling,zhang2022model}, leading to potential leakage of sensitive information. Therefore, there is an urgent need for innovative privacy-preserving graph learning models.

To mitigate privacy threats, local differential privacy (LDP)\cite{dwork2006calibrating} has emerged as a pivotal and well-recognized method for safeguarding sensitive data in distributed systems. LDP~\cite{dwork2006calibrating} offers robust privacy guarantees for secure data collection and analysis, and has been widely adopted by major companies such as Google~\cite{erlingsson2014rappor}, Apple~\cite{patent1}, and Microsoft~\cite{ding2017collecting}. In the LDP setting, there is a server and multiple user clients, where it is assumed that the server is untrustworthy. Each user ensures privacy by locally perturbing their data (typically through noise injection~\cite{zhou2021local}, the magnitude of which depends on the \textit{privacy budget}) before sending it to the server. Given its strong data privacy guarantees, LDP has become a key approach to protecting decentralized local graph node features, which are often sensitive. Recent research has delved into graph learning under LDP for enhanced privacy preservation, including advanced LDP mechanisms, such as the multi-bit mechanism~\cite{sajadmanesh2021locally,lin2022towards,du2021calibrating}, as well as classical LDP mechanisms like the Laplace mechanism~\cite{dwork2006calibrating}, Gaussian mechanism~\cite{balle2018improving}, and 1-bit mechanism~\cite{ding2017collecting}. 
Fig.~\ref{fig:0} illustrates the local private graph learning scenario, where each user has a sensitive node feature. The server collects the perturbed data and performs private graph learning on the noisy data.

However, as shown in Fig.~\ref{fig:0}(a), prior studies~\cite{sajadmanesh2021locally,lin2022towards,pei2023privacy,du2021calibrating,he2025going,he2026devil,he2026devil1,he2026differentially} largely overlook the fact that users may have different overall levels of privacy  requirement. In real-world scenarios, privacy expectations can vary significantly across users due to factors such as gender, geography, and age. For instance, research~\cite{coopamootoo2022feel} shows that females tend to express higher privacy concerns than males, revealing a “\textit{privacy gender gap}” in the context of online tracking~\cite{englehardt2016online}. Similarly, individuals in the UK are reported to be less likely to disclose negative emotions than those in Germany or France, suggesting a “\textit{privacy geography gap}.” These findings indicate that certain user groups may prefer stronger privacy protection across all of their personal data. Despite this, existing locally differentially private graph learning methods assume a uniform privacy level for all users. As shown in Fig.~\ref{fig:0}(a), this leads to the application of a global maximum noise intensity determined by the most privacy-sensitive user to all node features. Such a one-size-fits-all strategy significantly degrades model utility, as excessive noise added to satisfy a small subset of users unnecessarily undermines the effectiveness of the overall learning process.

To address the above issue, as illustrated in Fig.~\ref{fig:0}(b), we propose a novel privacy-preserving graph learning framework grounded in the notion of \textit{personalized} LDP (PLDP)~\cite{chen2016private}. This framework adaptively adjusts noise intensity to meet users' privacy requirements while enhancing the utility of private graph learning. PLDP, an extension of the traditional LDP, is designed to offer varying levels of privacy protection tailored to individual users or data points. Introducing a personalized definition to LDP enables dynamic adjustment of privacy parameters based on factors such as importance, sensitivity, or specific user requirements.

However, under this personalized privacy setting, two key challenges arise. On the one hand, dual protection of privacy elements becomes essential: not only must node features be safeguarded, but the privacy level~\cite{yiwen2018utility} of each user also requires protection. A user's privacy level reflects their valuation of privacy, which can inadvertently reveal sensitive personal attributes that necessitate additional safeguarding. Protecting a user's privacy level necessitates designing a specialized LDP protection algorithm and addressing the interplay between node feature protection and privacy levels. Improper handling of this interaction risks disclosing the user's privacy preferences, making the synergistic protection of both aspects a critical consideration. In this dual privacy setting, rationally allocating the privacy budget between the two elements to maximize the utility of privacy-preserving graph learning is vital. On the other hand, noise calibration in multilevel privacy scenarios poses additional challenges. Directly training GNNs with perturbed features can severely reduce the utility of privacy graph learning due to excessive noise. While previous works attempt denoising via multi-hop aggregation~\cite{morris2019weisfeiler,abu2019mixhop,huang2024higher}, this approach proves suboptimal in a multilevel privacy setting because perturbation levels differ for each node. Simple multi-layer aggregation fails to exploit the effective information from each node fully. Consequently, achieving accurate noise calibration under a personalized privacy framework introduces greater complexity than non-personalized approaches.

In this paper, we propose a novel locally differentially private graph learning
framework, PPGNN (\underline{P}ersonalized \underline{P}rivacy-preserving \underline{G}raph \underline{N}eural \underline{N}etwork), which integrates two core components: the Personalized Perturbation Mechanism (PPM) and the FlexProp algorithm. Specifically, 
PPM is designed to achieve dual privacy protection for node features and privacy levels. It comprises two fundamental elements: the Multi-dimensional Local Randomizer (MLR) and the Extended Square Wave (ESW) mechanism. The MLR protects multi-dimensional node features and ensures their utility through dimensionality reduction and rigorous theoretical considerations. Meanwhile, the ESW mechanism enhances privacy protection by extending the capabilities of the traditional square wave mechanism~\cite{li2020estimating} to cover users' privacy levels across discrete domains, ensuring that the protection is effective and adaptable to varying user requirements. To tackle the challenges of calibrating the perturbed node features, we introduce a weighted aggregation algorithm called FlexProp. FlexProp is essential for integrating individual users' privacy levels and aggregating information from neighboring nodes within $K$ hops. By applying varying weights based on privacy levels, FlexProp facilitates nuanced denoising, thus improving data quality while maintaining robust privacy measures. This aggregation process supports privacy-preserving graph learning, ensuring the GNN performs effectively despite the added noise. Notably, our PPGNN framework is designed to be compatible with various GNN architectures, providing both flexibility and adaptability for different application scenarios. Our contributions are summarized as follows:
\begin{itemize}
    \item \textit{Problem}. We propose a novel and realistic problem in differentially private GNNs, aiming to achieve personalized privacy preservation for users' local data.
    \item \textit{Methodology}. We introduce PPGNN, a novel locally differentially private GNN model designed to meet users' personalized privacy requirements while ensuring the utility of privacy-preserving graph learning.
    \item \textit{Experiments}. We conduct extensive experiments on six real-world datasets and three GNN architectures (GCN, GraphSAGE, GAT), demonstrating that PPGNN consistently outperforms strong baselines and achieves utility close to the non-private upper bound in multiple settings.
\end{itemize}

\textbf{Organization.} 
The rest of this paper is organized as follows. Section~\ref{S2} introduces background knowledge and formulates the problem. Section \ref{S4} describes our personalized privacy-preserving graph learning framework, PPGNN. Section~\ref{S5} presents extensive experimental results. Section~\ref{S6} reviews the related literature, and Section~\ref{S7} concludes the paper.

\begin{table}[t]
	\centering
	\caption{Notations}
	\label{table:1}
	\setlength{\tabcolsep}{5mm}\begin{tabular}{|c|c|}
		\hline
		\textbf{Symbol}  & \textbf{Description}   \\
		\hline
    $|\mathcal{V}|$ & the number of users \\\hline
     $\mathbf{A}$ & the adjacency matrix  \\\hline
      $\mathbf{X}$ & the node feature matrix        \\\hline
       $\tau$ & the privay level of users         \\\hline
    $d$ & the number of feature dimensions          \\\hline
     $m$ & the number of sampled dimensions          \\\hline
     $\epsilon$ & the privacy budget          \\\hline
   $\mathcal{M}$ & the perturbation mechanism   \\\hline
    $\mathbf{x}_v$ & the original feature vector of user $v$  \\\hline
$\mathbf{x}_v^\prime$ & the perturbed feature vector of user $v$  \\\hline
      $\mathcal{N}(v)$ & the set of neighbors of $v$ (including $v$ itself)         \\\hline
       $\mathbf{h}_v$ & the original embedding of node $v$          \\\hline
     $\widehat{\mathbf{h}}_v$ & the estimated embedding of node $v$          \\\hline
     $\gamma$ & \makecell[c]{the privacy budget allocation ratio between PPM \\and FlexProp modules} \\\hline
    $w_{ij}$ & \makecell[c]{the aggregation weight from node $j$ to node $i$ \\in weighted message passing} \\\hline
    \textbf{$g$} &  the user’s true privacy level index (input to ESW) \\\hline
 \textbf{$g'$} &  the randomized output of ESW \\\hline
 \textbf{$f$} &  the size of the input domain in ESW \\\hline
 \textbf{$b$} &  the smoothing window radius in ESW \\\hline
	\end{tabular}
\end{table}

\section{Preliminaries}\label{S2}
In this section, we first provide a formal definition of the problem (§\ref{sec21}). Next, we present essential background knowledge related to graph neural networks (§\ref{sec22}) and local differential privacy (§\ref{sec23}), followed by an introduction to the four primary node feature LDP mechanisms (§\ref{3.1}). We summarize the important notations of our paper in Table~\ref{table:1}.

\vspace{-1em}
\subsection{Problem Definition}\label{sec21} 
We provide a formal definition for the problem of learning a GNN with node data privacy. Consider a graph $\mathcal{G}=(\mathcal{V}, \mathbf{A}, \mathbf{X})$, where $\mathcal{V}=\{v_1,v_2,\dots,v_{|\mathcal{V}|}\}$ is the node/user\footnote{This paper uses ‘node’ and ‘user’ interchangeably, as in many applications, such as social networks, each node corresponds to a user.} set, and $|\mathcal{V}|$ represents the total number of users within the network. Each user $v_i$ holds locally a privacy feature vector $\mathbf{x}_i\in\mathbb{R}^d$, where $d$ denotes the dimension of the node feature vector. In graph $\mathcal{G}$, the adjacency matrix $\mathbf{A}\in\mathbb{R}^{|\mathcal{V}|\times |\mathcal{V}|}$ encodes the connections between nodes, while the node feature matrix $\mathbf{X}$ is defined as $\{\mathbf{x}_1,\mathbf{x}_2,\dots,\mathbf{x}_{|\mathcal{V}|}\}$. Consistent with prior works~\cite{sajadmanesh2021locally,lin2022towards,pei2023privacy,du2021calibrating}, we focus on the node classification task, which serves as a fundamental building block for various applications in social networks and beyond. Specifically, given a labeled node set $\mathcal{V}_l\subset \mathcal{V}$, containing nodes with known class labels drawn from the set $\mathcal{Y}=\{y_1,y_2,\dots,y_c\}$, along with an unlabeled node set $\mathcal{V}_u\subset \mathcal{V}/\mathcal{V}_l$, the objective of node classification is to accurately assign each node $v$ to one of the predefined classes within $\mathcal{Y}$. This task serves as the foundation for numerous real-world applications, including social influence analysis, content recommendation, and predictive behavior modeling.

We assume that the server is an untrusted party, which has access to the node set $\mathcal{V}$ and the adjacency matrix $\mathbf{A}$, but cannot directly observe the sensitive node feature matrix $\mathbf{X}$. The feature data is decentralized and privately held by individual users. Protecting this information is essential for maintaining user confidentiality and mitigating the risk of data leakage.\footnote{While this paper focuses on the privacy of node features, our framework is modular and can be seamlessly extended to incorporate edge privacy mechanisms such as those in~\cite{zhu2023blink,hidano2022degree}.} Each user is associated with a personalized privacy level $\tau \in \{1, 2, \dots, h\}$, where a smaller value of $\tau$ indicates a stronger privacy preference, and thus requires injecting more noise. Users independently perturb their feature vectors based on their assigned $\tau$ using a LDP mechanism, enabling personalized privacy preservation. Importantly, although we focus on feature-level privacy, the graph structure, captured by the adjacency matrix $\mathbf{A}$, remains intact and is fully utilized during training. The server collects the perturbed features and performs graph learning using GNNs, which leverage the underlying topology for message passing and representation learning. Therefore, the graph component remains integral to the framework, and the interplay between noisy features and structural information is central to the model’s design.

\vspace{-1em}
\subsection{Graph Nerual Networks}\label{sec22} 
GNNs~\cite{hamilton2017inductive,wu2020comprehensive,he2026towards,he2026conflict} learn new node representations by combining initial node features and graph topology by aggregating node and neighbor information. These representations are then used for downstream machine learning tasks like node classification. A typical $K\text{-layer}$ GNN consists of $K$ graph convolution layers, where each layer aggregates information from a node's neighbors and updates the node's representation. After $K$ aggregation iterations, the representation of a node captures the structural information in its $K\text{-hop}$ neighborhood. The $k\text{-th}$ layer of the GNN can be defined formally as follows:
\begin{align}
	\mathbf{h}^{k}_{\mathcal{N}(v)} & = \textsc{Agg}_k\left(\{\mathbf{h}_u^{k-1}, \forall u \in \mathcal{N}(v)\}\right), \label{eq:agg} \\
	\mathbf{h}^{k}_{v}     & = \textsc{Upd}_k\left(\mathbf{h}^{k}_{\mathcal{N}(v)}, \Theta^k \right), \label{eq:na}
\end{align}
where $\mathcal{N}(v)$ is the set of neighbors of node $v$ as well as could include $v$ itself, and $\mathbf{h}_u^{k-1}$ is the embedding of an adjacent node $u$ in the $k-1$ layer. $\textsc{Agg}_k(\cdot)$ denotes the aggregator function, such as  $\textsc{Sum}$, $\textsc{Mean}$, or $\textsc{Max}$. $\textsc{Upd}_k(\cdot)$ denotes a learnable update function, such as a neural network, parameterized by $\Theta^k$. The above structured approach allows GNNs to effectively leverage both node features and relational information inherently present in the graph, thereby enhancing their performance on various tasks.

\vspace{-1em}
\subsection{Local Differential Privacy}\label{sec23}  
LDP, as a variant of DP, has been adopted as a powerful privacy-preserving method by companies such as Google~\cite{erlingsson2014rappor}, Apple~\cite{patent1} and Microsoft~\cite{ding2017collecting}, and hundreds of millions of users' private information is being protected by this technology. In the LDP setting, the user perturbs the original data using a randomizer $\mathcal{R}$ on the user side and uploads the perturbed data to the server side. This ensures the privacy of user data, as only the data owner can access the original data~\cite{yang2020local}. The randomizer $\mathcal{R}$ is defined as follows:
\begin{definition}[$\epsilon\text{-LDP}$~\cite{dwork2014algorithmic,yang2024local}]\label{def:1}
A randomizer $\mathcal{R}$ satisfies $\epsilon\text{-LDP}$, where $\epsilon > 0$, if and only if for any user's private data $x$ and $x^\prime$, and for all possible outputs $y \in Range(\mathcal{R})$:
	\begin{equation}
		\Pr[\mathcal{R}(x) = y] \le e^\epsilon \cdot \Pr[\mathcal{R}(x^\prime) = y].
	\end{equation} 
\end{definition}
The parameter $\epsilon$, known as the \emph{privacy budget}, plays a crucial role in LDP by regulating the balance between utility and privacy. A smaller (resp. larger) $\epsilon$ value provides stronger (resp. weaker) privacy guarantees but may result in lower (resp. higher) utility. The above definition implies that no matter what side knowledge the adversary has, they cannot, with high probability, infer the input value by observing the output. In addition, LDP satisfies several important properties~\cite{dwork2014algorithmic} that are essential for subsequent proofs, as follows:
\begin{proposition}[Sequential Composition\cite{dwork2014algorithmic}]\label{pos1}
Consider a sequence of computations $\hspace{-0.2em}\mathcal{A}_1,\dots,\mathcal{A}_k$ applied to different components of the same local data record. If each $\mathcal{A}_i$ satisfies $\epsilon_i$-LDP, then performing these computations sequentially ensures an overall privacy guarantee of $(\sum_i \epsilon_i)$-LDP for that record.
\end{proposition}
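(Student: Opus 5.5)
The plan is to prove Proposition~\ref{pos1} directly from Definition~\ref{def:1}, using the fact that each $\mathcal{A}_i$ uses its own independent internal randomness. I will treat the general \emph{adaptive} case, where $\mathcal{A}_i$ may depend on the earlier outputs $y_1,\dots,y_{i-1}$; the non-adaptive case is a special instance. Formally, I model the sequential computation as a single randomizer $\mathcal{R}(x) = (y_1,\dots,y_k)$, where $y_i \sim \mathcal{A}_i(x; y_1,\dots,y_{i-1})$. The hypothesis I use is that, for every fixed prefix of earlier outputs, $\mathcal{A}_i(\cdot\,; y_{<i})$ is an $\epsilon_i$-LDP randomizer on the record $x$.

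The key step is the chain-rule factorization of the joint output probability. For any $x$ and any output tuple $(y_1,\dots,y_k)$, independence of the internal coins gives
\begin{equation*}
\Pr[\mathcal{R}(x) = (y_1,\dots,y_k)] = \prod_{i=1}^{k} \Pr[\mathcal{A}_i(x; y_{<i}) = y_i].
\end{equation*}
I then bound each factor using Definition~\ref{def:1} with the prefix $y_{<i}$ held fixed:
\begin{equation*}
\Pr[\mathcal{A}_i(x; y_{<i}) = y_i] \le e^{\epsilon_i}\Pr[\mathcal{A}_i(x'; y_{<i}) = y_i].
\end{equation*}
Multiplying these $k$ inequalities, which is valid because all factors are nonnegative, gives
\begin{equation*}
\Pr[\mathcal{R}(x) = (y_1,\dots,y_k)] \le e^{\sum_i \epsilon_i}\,\Pr[\mathcal{R}(x') = (y_1,\dots,y_k)].
\end{equation*}
This is exactly $(\sum_i \epsilon_i)$-LDP for $\mathcal{R}$.

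The statement says the $\mathcal{A}_i$ act on ``different components of the same local data record.'' I will make this compatible with the argument by viewing each $\mathcal{A}_i$ as a randomizer of the whole record $x$ that happens to read only one component. Changing $x$ to $x'$ may change every component simultaneously, and the per-factor bound above still holds because each $\mathcal{A}_i$ is LDP over the full input domain. In our setting, this is what will let us later add the budget spent on node features (MLR) to the budget spent on the privacy level (ESW).

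The main obstacle is measure-theoretic: the outputs may be continuous, as with Laplace or square-wave noise. For continuous outputs I will replace probabilities of single points by densities and run the same product argument. Integrating over an arbitrary measurable set $S$ of output tuples then yields $\Pr[\mathcal{R}(x)\in S] \le e^{\sum_i\epsilon_i}\Pr[\mathcal{R}(x')\in S]$. For this step I need the hypothesis that each $\mathcal{A}_i$ satisfies the density form of Definition~\ref{def:1}, which is standard. I also expect a minor point of care in the adaptive case: the $\epsilon_i$ guarantee must hold uniformly over all prefixes $y_{<i}$.
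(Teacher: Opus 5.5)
Your argument is correct and matches the standard proof behind this result; the paper gives no proof of its own and only cites Dwork and Roth. You factor the joint output probability over the independent internal coins and multiply the per-mechanism $e^{\epsilon_i}$ bounds, and your adaptive-prefix and density-based versions are routine extensions of that same product argument.

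One caution about your closing remark on PPM. There the budgets $\gamma\epsilon^\tau$ and $\epsilon_2=(1-\gamma)\epsilon^\tau$, and hence MLR's output magnitudes $\frac{d}{m}\cdot\frac{e^{\epsilon_2/m}+1}{e^{\epsilon_2/m}-1}$, depend on the private level $\tau$ itself. So MLR is not a fixed-budget randomizer that reads only $\mathbf{x}$, and this proposition alone does not justify adding the ESW and MLR budgets when $\tau$ must also be protected.
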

\begin{proposition}[Post-processing\cite{dwork2014algorithmic}]\label{pos2}
If an algorithm $\hspace{-0.2em}\mathcal{A}(\cdot)$ satisfies $\hspace{-0.2em}\epsilon\text{-LDP}$, then any further processing of its output by another algorithm $\mathcal{B}(\cdot)$ (i.e., $\mathcal{B}(\mathcal{A}(\cdot))$) also maintains $\epsilon\text{-LDP}$.
\end{proposition}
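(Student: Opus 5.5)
The plan is a direct argument from Definition~\ref{def:1}, working with output densities (or probabilities in the discrete case) and conditioning on the independent internal randomness of $\mathcal{B}$. Write $\mathcal{B}$ as a map that, given input $y$, produces $z$ according to a kernel $\Pr[\mathcal{B}(y)=z]$. The internal coins of $\mathcal{B}$ do not depend on the private data $x$; this is the one modelling assumption, and it holds here because the server only ever sees the output of $\mathcal{A}$.

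\textbf{Step 1: the discrete case.} Suppose the output spaces are countable. For any inputs $x, x'$ and any output $z$, expand by the law of total probability:
\begin{align}
\Pr[\mathcal{B}(\mathcal{A}(x))=z] &= \sum_{y} \Pr[\mathcal{A}(x)=y]\,\Pr[\mathcal{B}(y)=z] \nonumber\\
&\le \sum_{y} e^{\epsilon}\Pr[\mathcal{A}(x')=y]\,\Pr[\mathcal{B}(y)=z] \nonumber\\
&= e^{\epsilon}\Pr[\mathcal{B}(\mathcal{A}(x'))=z].
\end{align}
The inequality applies the $\epsilon$-LDP guarantee of $\mathcal{A}$ term by term. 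This is legitimate because every weight $\Pr[\mathcal{B}(y)=z]$ is nonnegative and does not depend on $x$.

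\textbf{Step 2: deterministic $\mathcal{B}$.} A deterministic $\mathcal{B}$ is the special case where the kernel is a point mass. Then $\Pr[\mathcal{B}(\mathcal{A}(x))=z]=\Pr[\mathcal{A}(x)\in\mathcal{B}^{-1}(z)]$. Summing the pointwise LDP inequality over the preimage $\mathcal{B}^{-1}(z)$ gives the same bound.

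\textbf{Step 3: the continuous case (main obstacle).} The main obstacle is that for continuous outputs, such as Laplace-perturbed features, ``$\Pr[\mathcal{R}(x)=y]$'' must be read as a density or as the probability of a measurable set. I would restate Definition~\ref{def:1} in its set form, $\Pr[\mathcal{A}(x)\in S]\le e^{\epsilon}\Pr[\mathcal{A}(x')\in S]$ for every measurable $S$.

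The key intermediate step is to lift this set inequality to integrals of functions. Specifically, I would show that for every measurable $f$ with $0\le f\le 1$,
\[
\mathbb{E}[f(\mathcal{A}(x))]\le e^{\epsilon}\,\mathbb{E}[f(\mathcal{A}(x'))].
\]
This follows by approximating $f$ from below by simple functions, using linearity and nonnegativity of the coefficients, and then passing to the limit with monotone convergence.

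Applying this with $f(y)=\Pr[\mathcal{B}(y)\in T]$ for an arbitrary measurable output set $T$ then gives $\Pr[\mathcal{B}(\mathcal{A}(x))\in T]\le e^{\epsilon}\Pr[\mathcal{B}(\mathcal{A}(x'))\in T]$. That is exactly $\epsilon$-LDP for $\mathcal{B}\circ\mathcal{A}$.
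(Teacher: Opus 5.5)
Your proof is correct. The paper gives no proof of Proposition~\ref{pos2}; it imports the result from \cite{dwork2014algorithmic}. The argument there is essentially your Step~2 followed by a reduction. For deterministic $\mathcal{B}$, one applies the guarantee of $\mathcal{A}$ to the single event $\mathcal{B}^{-1}(S)$. A randomized $\mathcal{B}$ is then treated as a data-independent random choice among deterministic maps, and the inequality survives averaging over that choice.

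You handle the randomness in the opposite order. Instead of conditioning on $\mathcal{B}$'s coins and averaging over deterministic maps, you condition on the output of $\mathcal{A}$ and integrate the kernel $y\mapsto\Pr[\mathcal{B}(y)\in T]$ against the law of $\mathcal{A}(x)$. That is why you need the lifting lemma $\mathbb{E}[f(\mathcal{A}(x))]\le e^{\epsilon}\,\mathbb{E}[f(\mathcal{A}(x'))]$.

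The cited route is shorter and applies the definition to one event at a time. It does implicitly use Fubini, through joint measurability of the preimages, when $\mathcal{B}$'s coin space is uncountable. Your route makes the measure-theoretic requirement explicit (namely that $\mathcal{B}$ is a measurable kernel) and yields a reusable functional form of LDP.

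Two small remarks. First, Step~3 alone already covers Steps~1 and~2: use counting measure for the discrete case, and $f=\mathbf{1}_{\mathcal{B}^{-1}(T)}$ for deterministic $\mathcal{B}$. Second, for pure $\epsilon$ your restriction $0\le f\le 1$ is unnecessary, since any nonnegative measurable $f$ works. However, it is exactly what lets your lemma extend to $(\epsilon,\delta)$-LDP via the layer-cake identity $\mathbb{E}[f]=\int_0^1\Pr[f>t]\,dt$; a naive simple-function decomposition would pick up a $\delta$ for every term. The preimage argument, by contrast, carries the additive $\delta$ through unchanged. This matters here because the paper's Gaussian mechanism is only $(\epsilon,\delta)$-LDP.
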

In real-world scenarios, users often exhibit varying levels of privacy sensitivity and expectations, resulting in diverse privacy requirements. The standard LDP framework assumes a uniform privacy budget for all users, which limits its applicability in personalized settings. To address this limitation, the notion of personalized LDP (PLDP) has been proposed~\cite{chen2016private,yiwen2018utility,murakami2019utility,duan2024ldptube}. Under the PLDP setting, the server provides a set of discrete privacy levels, and each user can independently select a level $\tau \in \{1, \dots, h\}$ based on their individual privacy preferences. This selected level corresponds to a user-specific privacy budget $\epsilon^\tau$, allowing for flexible control over the noise injection process while respecting user autonomy. Importantly, despite this personalization, PLDP retains the formal privacy guarantees and desirable properties of standard LDP~\cite{dwork2014algorithmic}. 

\vspace{-1em}
\subsection{Four Primary Node Feature LDP Mechanisms}\label{3.1}
\subsubsection{Laplace mechanism}
The Laplace mechanism~\cite{dwork2006calibrating} is a classical LDP mechanism known for its simplicity. Assume each user possesses a one-dimensional value, denoted as $x$. Without loss of generality, we assume $x$ is within the normalized value range of $[-1,1]$. The subsequent three LDP mechanisms also adhere to this setting. Next, we define a randomization function that produces a perturbed value, denoted as $x^\prime = x + Lap(2/\epsilon)$, where $Lap(\alpha)$ represents a random variable following a Laplace distribution with probability density function $f(x)=\frac{1}{2\alpha}\mathrm{exp}(-\frac{|x|}{\alpha})$. To extend to $d\text{-dimensional}$ spaces, a straightforward approach is to apply the Laplacian mechanism independently to each dimension. Each dimension is allocated a privacy budget of $\epsilon/d$. According to the sequential composition property~\cite{dwork2008differential} of the LDP mechanism, it adheres to $\epsilon\text{-LDP}$. 

\subsubsection{Gaussian mechanism}
The Gaussian mechanism~\cite{balle2018improving} is another well-established LDP mechanism recognized for its effectiveness and flexibility. Consider a scenario where each user has a one-dimensional value $x$. We define a randomization function that produces a perturbed value denoted as $x=x+N(0,\sigma^2)$, where $N(0,\sigma^2)$ is a Gaussian random variable with zero mean and variance $\sigma^2$. In this context, the variance $\sigma^2$ is calibrated based on the privacy budget $\epsilon$ and the acceptable privacy leakage probability $\delta$. Specifically, we denote $\sigma^2$ as $\sigma^2=\frac{2\ln{(1.25/\delta)} }{\epsilon^2} $, ensuring that $(\epsilon,\delta)\text{-LDP}$ is satisfied, where $\delta>0$ serves as a relaxed privacy parameter. For multidimensional data, the Gaussian mechanism is applied independently to each dimension, mirroring the approach used in the Laplace mechanism. In this case, the privacy budget for each dimension is allocated as $\epsilon/d$. This independent application preserves the 
$(\epsilon,\delta)\text{-LDP}$ properties due to the sequential combination nature of local differential privacy, thus ensuring robust privacy protection across all dimensions.
\subsubsection{1-bit mechanism} In its one-dimensional form~\cite{ding2017collecting}, for any original data $x\in[-1, 1]$, the distribution followed by the perturbed value $x^\prime\in\{-1, 1\}$ is as follows:
\begin{equation}
	\mathrm{Pr}[x^\prime=c|x] =\begin{cases}
  \frac{1}{e^\epsilon +1}+\frac{x+1}{2}\cdot \frac{e^\epsilon -1}{e^\epsilon +1},  & \text{ if } c= 1\\
  \frac{e^\epsilon}{e^\epsilon +1}-\frac{x+1}{2}\cdot \frac{e^\epsilon -1}{e^\epsilon +1},& \text{ if } c=-1
\end{cases}.
\end{equation}
The extension to higher-dimensional spaces is implemented similarly to the first two mechanisms and is omitted here.

\subsubsection{Multi-bit Mechanism} 
The multi-bit mechanism~\cite{sajadmanesh2021locally,pei2023privacy} extends the 1-bit mechanism~\cite{ding2017collecting} to higher-dimensional spaces through a sampling process. Specifically, it operates by uniformly sampling $m$-dimensions from $d$-dimensional node feature data without replacement. Each of the $m$ selected dimensions is then perturbed using noise that satisfies $\epsilon/m\text{-LDP}$. Due to the sequential composition property of LDP mechanisms \cite{dwork2008differential}, the overall process adheres to $\epsilon\text{-LDP}$. After receiving all the perturbed data, the server transforms the reporting data $\hat{x}$ to its unbiased estimate $x^\prime =\frac{d}{m} \cdot \frac{e^{\epsilon/m} +1}{e^{\epsilon/m} -1} \cdot \hat{x}$. The variance of $x^\prime$ is as follows: $Var[x^\prime]=\frac{d}{m}\cdot \left ( \frac{e^{\epsilon/m} +1}{e^{\epsilon/m} -1} \right )^2-x^2$.

\section{Methodology}\label{S4}
In this section, we introduce PPGNN, a locally differentially private graph learning model designed to effectively meet users' personalized privacy requirements while significantly enhancing the utility of private graph learning. Next, we first introduce the overall framework of PPGNN in §\ref{sec4.1}, followed by a detailed discussion of the two key steps of PPGNN in §\ref{sec4.2} and~§\ref{sec4.3}, respectively. Finally, we conduct complexity analysis in §\ref{sec4.4} and privacy analysis in §\ref{sec4.5}.
\begin{figure*}
  \centering
  \includegraphics[scale=0.65]{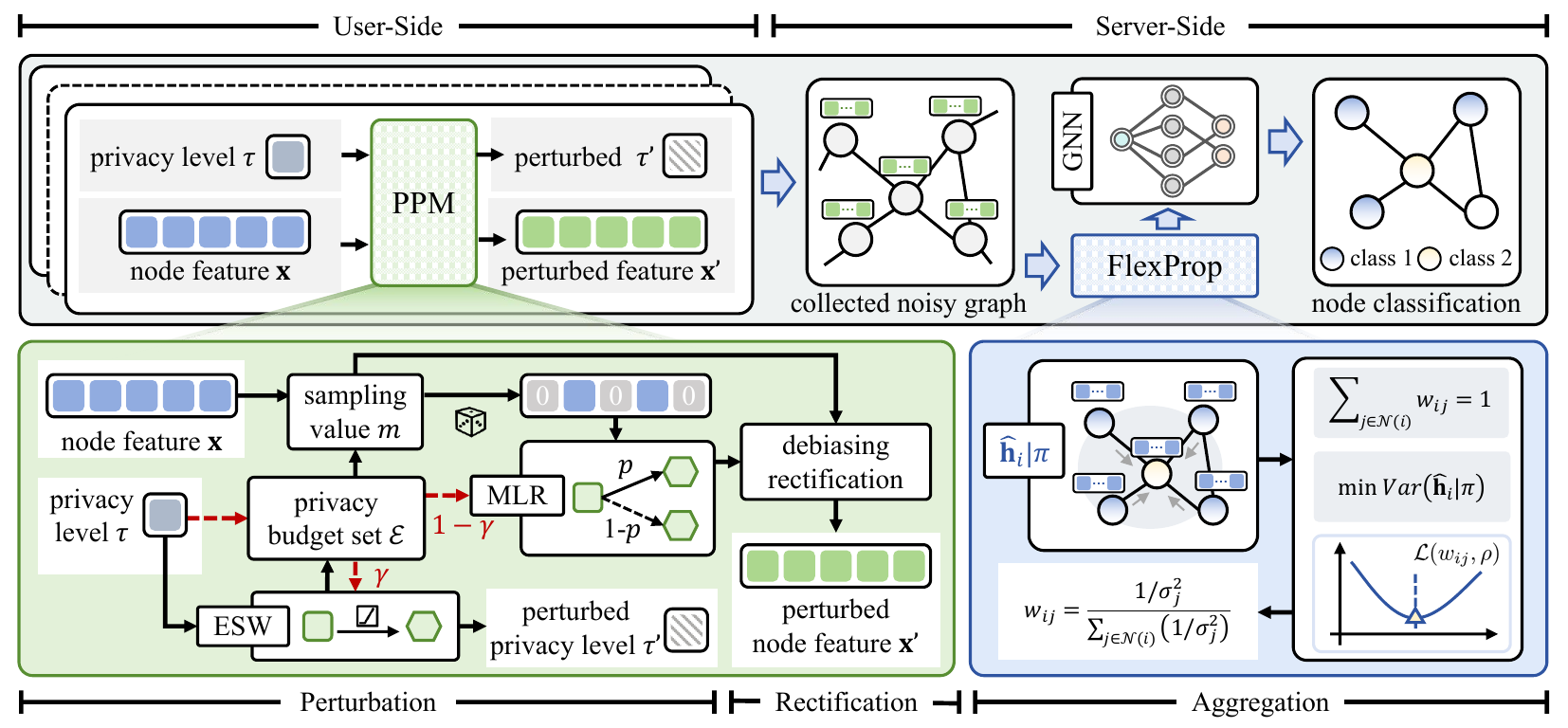}
  \vspace{-0.5em}
  \caption{Overview of our proposed PPGNN. Users run the Personalized Perturbation Mechanism (PPM) to their private data (node features $\mathbf{x}$ and privacy levels $\tau$) and send the output to the server. The server calibrates the collected noisy data using the FlexProp algorithm and performs GNN training for the node classification task. The red dashed path illustrates the allocation scheme for privacy budgets.}
  \label{fig:21}
  \vspace{-0.5em}
\end{figure*}

\vspace{-1em}
\subsection{Overview}\label{sec4.1}
In the scenario of locally private graph learning, the node features located on the user side are transmitted to the server after undergoing noise injection. The server performs private graph learning based on the collected node features. The overall framework of PPGNN is illustrated in Fig.~\ref{fig:21}. PPGNN addresses users' personalized privacy requirements effectively and enhances the utility of privacy graph learning through adaptive personalized aggregation. Specifically, in PPGNN, each user's local data consists of node features and privacy levels, which are protected using a \textit{Personalized Perturbation Mechanism (PPM)} that dynamically adjusts perturbation levels based on individual privacy preferences. On the server side, the \textit{FlexProp} algorithm is applied to correct the noisy node features, mitigating the biases introduced by the perturbation process. Once corrected, the server employs a GNN to train on the refined data, enabling downstream tasks such as node classification. PPGNN integrates personalized privacy protection with adaptive feature correction, achieving a balance between robust privacy guarantees and high utility, thus optimizing the performance of privacy-preserving graph learning. In summary, PPGNN consists of the following two parts:

\begin{algorithm}[t]
    \caption{Local Randomizer}
    \label{alg1}
    \textbf{Input}: Numeric data $x \in [-1,1]$, privacy budget
 $\epsilon$.\\
    \textbf{Output}: Randomized response $x^\prime\in\left \{ -\frac{e^{\epsilon}+1}{e^{\epsilon}-1}, \frac{e^{\epsilon}+1}{e^{\epsilon}-1}\right \}$.
    
    \begin{algorithmic}[1] 
        \STATE Sample a Bernoulli variable $u$ such that\\ $\text{Pr}[u=1]= \frac{e^{\epsilon}-1}{2e^{\epsilon}+2}\cdot x+\frac{1}{2} $.
        \IF{$u=1$}
            \STATE $x^\prime=(e^{\epsilon}+1)/(e^{\epsilon}-1)$.
            \ELSE
             \STATE $x^\prime=-(e^{\epsilon}+1)/(e^{\epsilon}-1)$.
             \ENDIF
        \STATE \textbf{return} $x^\prime$ 
    \end{algorithmic}
\end{algorithm}

\subsubsection{Perturbing node features with PPM}
Although several LDP mechanisms applicable to node features have been proposed (as outlined in Section~\ref{3.1}), these mechanisms fail to address the dual privacy requirements inherent in personalized privacy graph learning scenarios. Additionally, there remains significant potential to improve the utility of these mechanisms by reducing noise perturbation variance to enhance data accuracy. Balancing these challenges is non-trivial, as it requires simultaneously minimizing noise and preserving dual privacy. To tackle this issue, we introduce a novel LDP perturbation algorithm, the Personalized Perturbation Mechanism (PPM), which incorporates two interdependent modules: a multi-dimensional local randomizer and an extended square-wave mechanism. These modules work synergistically to effectively reduce noise variance while addressing dual privacy concerns. In the PPGNN framework, each user perturbs their local data using PPM before transmitting the processed data to the server for further private graph learning.

\subsubsection{Calibrating noise features with FlexProp}
Directly utilizing the noisy data generated by the LDP mechanism for graph learning often leads to poor utility, necessitating further calibration to reduce noise. To address this, we employ multi-hop aggregation, which is theoretically effective in minimizing estimation errors. However, traditional multi-hop aggregation is suboptimal in personalized privacy-preserving scenarios, as it fails to leverage the effective information of each node fully. To overcome this limitation, we propose the FlexProp algorithm, which efficiently minimizes estimation errors through adaptive weighted aggregation, tailoring the process to individual nodes' privacy requirements and data features.

\vspace{-1em}
\subsection{Perturbing Node Features with PPM}\label{sec4.2}
To achieve personalized privacy preservation for node features, we introduce a novel LDP algorithm called the \textit{Personalized Perturbation Mechanism (PPM)}, as outlined in Alg.~\ref{alg3}. The architectural design of PPM is illustrated in Fig.~\ref{fig:21}. In this framework, we strategically divide the privacy budget into $h$ discrete levels, denoted as 
$\mathcal{E}=\cup_{\tau=1 }^{h}\left\{\epsilon^\tau\right\}$. Here, $\Delta_\epsilon =\epsilon ^{\tau+1}-\epsilon ^\tau>0$ indicates that each subsequent level represents a progressively greater privacy budget, while $\epsilon^\tau$ denotes the specific budget at privacy level $\tau$. Notably, a smaller value of $\tau$ corresponds to a reduced privacy budget, leading to the injection of a greater amount of noise. As depicted in Fig.~\ref{fig:21}, the PPM consists of two critical components: the \textit{Multi-dimensional Local Randomizer (MLR)} and the \textit{Extended Square Wave (ESW)} mechanism. The allocation of the privacy budget between these two components is governed by the parameter $\gamma$. Specifically, the ESW utilizes $\gamma\epsilon^\tau$ of the privacy budget, while the MLR is allocated $(1-\gamma)\epsilon^\tau$. This careful distribution ensures that each component effectively contributes to the overall privacy protection mechanism, balancing the trade-off between noise injection and data utility. Next, we explore the details of MLR and ESW respectively.

\subsubsection{Multi-dimensional Local Randomizer} 
To protect multi-dimensional node features, we introduce the MLR algorithm, as presented in Alg.~\ref{alg2}. This algorithm extends the concept of the \textit{local randomizer} (Alg.\ref{alg1})~\cite{duchi2018minimax}, originally designed for unidimensional spaces, to accommodate multidimensional contexts. The MLR algorithm\footnote{The MLR algorithm is not limited to the local randomizer in~\cite{duchi2018minimax}; it serves as a general wrapper that can integrate any one-dimensional fixed-budget perturbation mechanism. This modularity allows MLR to be adapted to diverse privacy mechanisms (\textit{e.g.}, Laplace).} effectively allocates the total privacy budget across $m$ attributes, rather than the original $d$, which significantly reduces the noise variance associated with the perturbation process. As a trade-off, the additional estimation error introduced by sampling $m$ attributes from $d$ can be managed by judiciously selecting an appropriate $m$ value. This balancing act is formally addressed in Theorem~\ref{thm2}. Furthermore, Theorem~\ref{thm3} establishes that the MLR algorithm satisfies $\epsilon$-LDP and proves that $\mathbb{E}\left[\mathbf{x}^\prime\right]=\mathbf{x}$.

\begin{algorithm}[tb]
    \caption{Multi-dimensional Local Randomizer}
    \label{alg2}
    \textbf{Input}: Feature vector $\mathbf{x} \in [-1,1]^{d}$, privacy budget
 $\epsilon$\\
    \textbf{Output}: $\mathbf{x}^\prime\in\left \{ -\frac{d}{m}\cdot\frac{e^{\epsilon/m}+1}{e^{\epsilon/m}-1}, 0,  \frac{d}{m}\cdot\frac{e^{\epsilon/m}+1}{e^{\epsilon/m}-1}\right \} ^d$
    
    \begin{algorithmic}[1] 
        \STATE Let $\mathbf{x}^\prime=\left \langle 0,0,\dots,0  \right \rangle $.
        \STATE Let $m=\max(1,\min (d,\left \lfloor \epsilon/2.2  \right \rfloor)) $.
         \STATE Sample $m$ values uniformly without replacement from $\{1,2,\dots,d\}$.
        \FOR{each sampled value $j$}
            \STATE Feed $\mathbf{x}_j$ and $\epsilon/m$ to the Alg.~\ref{alg1} to obtain $t_j$.
            \STATE $\mathbf{x}_j^\prime = \frac{d}{m}\cdot t_j$.
        \ENDFOR
        \STATE \textbf{return} $\mathbf{x}^\prime$    
    \end{algorithmic}
\end{algorithm}

\begin{theorem}\label{thm2}
To minimize the variance of the perturbation mechanism (as given by Equation (\ref{5})), the optimal sampling parameter $m=\max(1,\min (d,\left \lfloor \epsilon/2.2  \right \rfloor)) $.
    \begin{equation}
		Var\left [ \mathbf{x}_i^\prime \right ]=\frac{d}{m}\cdot\left (  \frac{e^{\epsilon/m}+1}{e^{\epsilon/m}-1} \right )^2-\mathbf{x}_i^2.\label{5}
	\end{equation}
\end{theorem}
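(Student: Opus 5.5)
The plan is to treat the variance in Equation~(\ref{5}) as a function of the sampling parameter $m$ alone. The term $-\mathbf{x}_i^2$ does not depend on $m$, so it suffices to minimize
\begin{equation*}
F(m)=\frac{d}{m}\left(\frac{e^{\epsilon/m}+1}{e^{\epsilon/m}-1}\right)^2 .
\end{equation*}
First relax $m$ to a continuous variable in $(0,\infty)$ and substitute $t=\epsilon/m$, so that $m=\epsilon/t$. Using the identity $\frac{e^{t}+1}{e^{t}-1}=\coth(t/2)$, this gives
\begin{equation*}
F=\frac{d}{\epsilon}\, g(t), \qquad g(t)=t\coth^2(t/2).
\end{equation*}
The factor $d/\epsilon$ is a positive constant, so minimizing $F$ over $m$ is equivalent to minimizing $g$ over $t>0$.

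Next I will locate the minimizer of $g$. Differentiating,
\begin{equation*}
g'(t)=\coth^2(t/2)-t\coth(t/2)\,\mathrm{csch}^2(t/2).
\end{equation*}
Setting $g'(t)=0$ and dividing by $\coth(t/2)\,\mathrm{csch}^2(t/2)>0$ gives $\sinh(t/2)\cosh(t/2)=t$, that is, $\sinh t=2t$.

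I will then establish unimodality from the sign of $g'$. The sign of $g'(t)$ equals the sign of $\sinh t-2t$. The function $\sinh t-2t$ is convex, vanishes at $0$, and is negative near $0$, so it has exactly one positive root $t^\ast$. Hence $g$ is strictly decreasing on $(0,t^\ast)$ and strictly increasing on $(t^\ast,\infty)$. A few Newton steps on $\sinh t=2t$ give $t^\ast\approx 2.18$, which the statement rounds to $2.2$. Translating back through $m=\epsilon/t$, the function $F$ is decreasing for $m<\epsilon/t^\ast$ and increasing for $m>\epsilon/t^\ast$, with continuous minimizer $m^\ast=\epsilon/t^\ast\approx\epsilon/2.2$.

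Finally I will impose the integrality constraint $1\le m\le d$. Because $F$ is unimodal in $m$, its constrained minimizer over the integers is the clamp of $m^\ast$ to $[1,d]$, rounded to one of the two neighbouring integers.
\begin{itemize}
\item If $\epsilon/2.2<1$, $F$ is increasing on $[1,d]$, so $m=1$.
\item If $\epsilon/2.2>d$, $F$ is decreasing on $[1,d]$, so $m=d$.
\item Otherwise the optimum is $\lfloor m^\ast\rfloor$ or $\lceil m^\ast\rceil$.
\end{itemize}
Combining the cases yields $m=\max(1,\min(d,\lfloor\epsilon/2.2\rfloor))$.

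I expect the main obstacle to be this last rounding step. The stationary point is defined only by the transcendental equation $\sinh t=2t$, so the constant $2.2$ is a numerical approximation. Moreover, unimodality alone does not guarantee that the floor beats the ceiling. To handle this, I would bound the ratio $F(\lfloor m^\ast\rfloor)/F(\lceil m^\ast\rceil)$ using the flatness of $g$ near $t^\ast$. If that bound cannot be made uniform in $\epsilon$, I would state the result as ``the floor is optimal up to the neighbouring integer'' and note that the two candidates give nearly identical variance.
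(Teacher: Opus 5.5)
Your argument up to the continuous optimum is correct and follows the same route as the paper, which chooses $m$ by minimizing the variance in Eq.~\eqref{5}. The reduction to $g(t)=t\coth^2(t/2)$, the stationarity condition $\sinh t=2t$, and the convexity argument (unique root $t^\ast\approx 2.18$, hence unimodality) are all sound.

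The genuine gap is the final step, and no argument can close it, because the floor is not the integer minimizer. Take $\epsilon=4$ and $d\ge 2$. The formula returns $m=\lfloor 4/2.2\rfloor=1$, with $m$-dependent term $F(1)=d\coth^2(2)\approx 1.076\,d$, whereas $F(2)=\tfrac{d}{2}\coth^2(1)\approx 0.862\,d$.

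So your planned bound on $F(\lfloor m^\ast\rfloor)/F(\lceil m^\ast\rceil)$ cannot go in the direction you need. Your fallback, that the two neighbours give ``nearly identical variance'', is also false. Flatness of $g$ near $t^\ast$ does not help when $m^\ast$ is small, because $\epsilon/\lfloor m^\ast\rfloor$ is then far from $t^\ast$ (here $t=4$ against $t^\ast\approx 2.18$).

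In general, the tie between $m$ and $m+1$ is where $g(\epsilon/m)=g(\epsilon/(m+1))$. This happens at $\epsilon\approx 3.06$ for $m=1$, and near $\epsilon/t^\ast\approx m+\tfrac12$ for large $m$. The floor therefore picks the wrong neighbour whenever $\epsilon/t^\ast$ lies roughly in the upper half between two integers. The resulting loss only becomes negligible once $m^\ast$ is large.

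What your argument does establish is a corrected statement. The relaxed variance is minimized at $m^\ast=\epsilon/t^\ast$, where $t^\ast\approx 2.177$ is the unique positive root of $\sinh t=2t$. The exact integer optimum is whichever of $\lfloor m^\ast\rfloor$ and $\lceil m^\ast\rceil$, clamped to $[1,d]$, gives the smaller value of $F$. The expression $\max(1,\min(d,\lfloor\epsilon/2.2\rfloor))$ should be presented as a heuristic rounding of this, not as the minimizer.

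A smaller inaccuracy in your case analysis: since $t^\ast<2.2$, for $\epsilon\in(t^\ast,2.2)$ one has $m^\ast$ slightly larger than $1$. There $F$ is not increasing on all of $[1,d]$, although $m=1$ remains the best integer.
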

\begin{proof}

Given that variance is a critical factor that affects the accuracy of the estimation of the LDP mechanisms: the smaller the variance, the more accurate the estimation, the theorem selects the optimal value of $m$ by minimizing the variance of the LDP mechanism. See Appendix A for more details.
\end{proof}

\begin{theorem}\label{thm3}
    Algorithm~\ref{alg2} (MLR)
satisfies $\epsilon$-LDP for each node. In addition, for any $i\in\{1,2,\dots,d\}$, $\mathbb{E}\left[\mathbf{x}^\prime_i\right]=\mathbf{x}_i$.
\end{theorem}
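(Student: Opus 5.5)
We split the statement into its privacy claim and its unbiasedness claim, and handle the single-coordinate randomizer (Alg.~\ref{alg1}) first.

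\textbf{Privacy of Alg.~\ref{alg1}.} Fix budget $\epsilon'$. Over $x\in[-1,1]$, the output probability
\[
\Pr[u=1]=\tfrac{1}{2}+\tfrac{e^{\epsilon'}-1}{2(e^{\epsilon'}+1)}x
\]
is maximized at $x=1$, where it equals $\frac{e^{\epsilon'}}{e^{\epsilon'}+1}$. It is minimized at $x=-1$, where it equals $\frac{1}{e^{\epsilon'}+1}$. The same extremes hold for $u=0$ by symmetry. Hence for any $x,\tilde{x}$ and any output, the likelihood ratio is at most $e^{\epsilon'}$, so Alg.~\ref{alg1} is $\epsilon'$-LDP.

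\textbf{Privacy of MLR.} MLR first samples an index set $S$ of size $m$ uniformly. This sampling does not depend on $\mathbf{x}$. Consider any output $\mathbf{x}'$.
\begin{itemize}
\item The support of $\mathbf{x}'$ (its nonzero coordinates) reveals $S$. The probability that the sampled set equals $S$ is $\binom{d}{m}^{-1}$ for every input, so this factor cancels in the ratio.
\item Conditioned on $S$, the coordinates in $S$ are perturbed independently by Alg.~\ref{alg1} with budget $\epsilon/m$. The final scaling by $d/m$ is deterministic post-processing.
\end{itemize}
Therefore
\[
\frac{\Pr[\mathcal{M}(\mathbf{x})=\mathbf{x}']}{\Pr[\mathcal{M}(\tilde{\mathbf{x}})=\mathbf{x}']}=\prod_{j\in S}\frac{\Pr[t_j\mid \mathbf{x}_j]}{\Pr[t_j\mid \tilde{\mathbf{x}}_j]}\le \left(e^{\epsilon/m}\right)^m=e^{\epsilon}.
\]
Equivalently, this follows from Proposition~\ref{pos1} on each of the $m$ sampled components, together with Proposition~\ref{pos2} for the scaling. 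The choice of $m$ in Alg.~\ref{alg2} depends only on the public $\epsilon$ and $d$, not on the data, so it does not affect the argument.

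\textbf{Unbiasedness.} Fix a coordinate $i$ and condition on whether $i\in S$.
\begin{itemize}
\item If $i\notin S$, then $\mathbf{x}'_i=0$.
\item If $i\in S$, let $c=\frac{e^{\epsilon/m}+1}{e^{\epsilon/m}-1}$. Then $\mathbb{E}[t_i]=c\,(2\Pr[u=1]-1)$. Since $2\Pr[u=1]-1=\frac{e^{\epsilon/m}-1}{e^{\epsilon/m}+1}\mathbf{x}_i$, this gives $\mathbb{E}[t_i]=\mathbf{x}_i$.
\end{itemize}
Under uniform sampling without replacement, $\Pr[i\in S]=m/d$. Combining the two cases,
\[
\mathbb{E}[\mathbf{x}'_i]=\tfrac{m}{d}\cdot\tfrac{d}{m}\,\mathbf{x}_i=\mathbf{x}_i.
\]

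\textbf{Main obstacle.} The only delicate point is the privacy argument for the random subset. One must be explicit that the output reveals $S$ but that $S$ is independent of the data, so its probability cancels in the likelihood ratio. Outputs with a different support are impossible under every input, so they contribute a trivial $0\le e^\epsilon\cdot 0$.
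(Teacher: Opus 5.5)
Your proof is correct and follows essentially the same route as the paper's proof: a direct likelihood-ratio bound in which each of the $m$ sampled coordinates contributes at most $e^{\epsilon/m}$ and the data-independent coordinate sampling cancels, giving $e^{\epsilon}$, plus an expectation computation combining the inclusion probability $m/d$, the scaling $d/m$, and the unbiasedness of Alg.~\ref{alg1}. Conditioning explicitly on the full index set $S$ is a clean way to make the cancellation of the sampling probability rigorous under sampling without replacement.
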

\begin{proof}
We denote the MLR algorithm as $\mathcal{R}^1$. According to Definition~\ref{def:1}, our objective is to demonstrate that for any two input features $\mathbf{x}_1$ and $\mathbf{x}_2$ satisfying the following inequality: $\Pr\left[\mathcal{R}^1(\mathbf{x}_1) = \mathbf{x}^{\prime}\right] \le e^{\epsilon} \cdot\Pr\left[\mathcal{R}^1(\mathbf{x}_2) = \mathbf{x}^{\prime}\right]$, where $\mathbf{x}^{\prime}$ represents a perturbed output. Furthermore, ensuring that the data are unbiased (that is, $\mathbb{E}\left[\mathbf{x}^\prime_i\right]=\mathbf{x}_i$) guarantees the statistical utility of the data. We have also provided a proof of this property. For further details, please refer to the Appendix~B.
\end{proof}

Note that the user's privacy level $\tau$ is also sensitive and utilized in the subsequent FlexProp algorithm. Directly applying the MLR algorithm to protect node features may potentially reveal $\tau$, as $\tau$ determines $\epsilon^\tau$, which in turn affects the final output. To address this issue, we propose the Extended Square Wave (ESW) mechanism. Additionally, considering that $m$ could potentially leak the user's privacy level, instead of directly using $\tau$, we use a randomized version $\tau^\prime$ through the ESW, that is: $m^\prime=\max(1,\min (d,\left \lfloor \epsilon^{\tau^\prime
}/2.2\right\rfloor))$.
\subsubsection{Extended Square Wave mechanism} To protect the users' privacy levels, the ESW extends from the square wave mechanism~\cite{li2020estimating} and can be applied to the protection of discrete domain data after extension. In ESW, the input domain is defined as $\mathcal{D} = \{1, 2, \dots, f\}$, where $f$ is the total number of privacy levels, and the input value $g \in \mathcal{D}$ represents a user’s true privacy level index. The output domain is extended to $\mathcal{D}^\prime = \{1, 2, \dots, f+2b\}$, where $g^\prime \in \mathcal{D}^\prime$ is the perturbed output value. The parameter $b$ controls the width of the high-probability window around $g$ and is defined as: $b = \left \lfloor \frac{\epsilon e^\epsilon - e^\epsilon + 1}{2e^\epsilon (e^\epsilon - 1 - \epsilon)} \cdot f \right \rfloor$. This extension allows the mechanism to assign higher output probability near the true value to preserve utility, while still satisfying $\epsilon$-LDP through randomization across a larger space. The perturbation process is as follows:
\begin{equation}\label{eq1}
    \mathrm{Pr} [\mathrm{ESW}(g)=g^\prime]=\begin{cases}p,&\text{if } |g-g^\prime |\le b\\
  q, &\text{otherwise}
\end{cases},
\end{equation}
where $p=e^\epsilon/({(2b+1)e^\epsilon+f-1})$, $q=1/((2b+1)e^\epsilon+f-1)$. The values of $p$ and $q$ ensure $\epsilon$-local differential privacy over the extended output domain $\mathcal{D}'$, while the use of the high-probability window improves the fidelity of perturbed outputs.

\vspace{-1em}
\subsection{Calibrating Noise Features with FlexProp}\label{sec4.3}
In this section, we first conduct an error analysis to identify the key factors influencing the utility enhancement of perturbed node features. Building on these insights, we introduce FlexProp, a weighted aggregation algorithm specifically designed for personalized scenarios, which efficiently calibrates the perturbed node features to maximize utility while respecting individual privacy requirements.
\subsubsection{Error analysis}
The server proceeds with GNN training after receiving all the perturbed node features. During the first layer of GNN, the embedding for any given node $v\in\mathcal{V}$ is produced through the following process:
\begin{equation}
\widehat{\mathbf{h}}_{\mathcal{N}(v)}= \textsc{Agg}\left(\{\mathbf{x}_u^\prime, \forall u \in \mathcal{N}(v)\}\right),\label{88}
\end{equation}
where $\widehat{\mathbf{h}}_{\mathcal{N}(v)}$ is the estimation of the $\textsc{Agg}(\cdot)$ function that aggregates the feature vectors $\mathbf{x}_u^\prime$ of all perturbed nodes $u \in \mathcal{N}(v)$. The aggregation process provides an unbiased estimate when $\mathbb{E}[\mathbf{x}^\prime_u]=\mathbf{x} _u$ (Theorem~\ref{thm3}) and the $\textsc{Agg}$ function operates linearly,  as established in Theorem~\ref{thm1}.

\begin{theorem}\label{thm1}
The linear aggregation function described by Eq.~\eqref{88} guarantees unbiased estimation. Specifically, for any node $v\in\mathcal{V}$, it holds that: $\mathbb{E}[\widehat{\mathbf{h}}_{\mathcal{N}(v)}] = \mathbf{h}_{\mathcal{N}(v)}$.
\end{theorem}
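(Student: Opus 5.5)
The plan is to reduce the claim to linearity of expectation combined with the per-coordinate unbiasedness already established in Theorem~\ref{thm3}. First I make precise what ``linear aggregation'' means. $\textsc{Agg}$ is linear if there are deterministic weights $\{a_{vu}\}_{u\in\mathcal{N}(v)}$ with
\[
\textsc{Agg}\left(\{\mathbf{z}_u, \forall u \in \mathcal{N}(v)\}\right)=\sum_{u\in\mathcal{N}(v)} a_{vu}\,\mathbf{z}_u .
\]
These weights depend only on the graph structure: $a_{vu}=1$ for $\textsc{Sum}$, $a_{vu}=1/|\mathcal{N}(v)|$ for $\textsc{Mean}$, and the symmetric normalized entries $\tilde{\mathbf{A}}_{vu}$ for GCN. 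The key point is that the weights are functions of $\mathbf{A}$ alone. The server knows $\mathbf{A}$ exactly, and the perturbation never touches it, so the $a_{vu}$ are non-random with respect to the randomness of the mechanism. The true aggregate is $\mathbf{h}_{\mathcal{N}(v)}=\sum_{u} a_{vu}\mathbf{x}_u$.

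Next I write $\widehat{\mathbf{h}}_{\mathcal{N}(v)}=\sum_{u\in\mathcal{N}(v)} a_{vu}\mathbf{x}'_u$ and take expectations coordinatewise over the randomness of PPM. Linearity of expectation gives $\mathbb{E}[\widehat{\mathbf{h}}_{\mathcal{N}(v)}]_i=\sum_u a_{vu}\,\mathbb{E}[\mathbf{x}'_{u,i}]$. Theorem~\ref{thm3} states $\mathbb{E}[\mathbf{x}'_{u,i}]=\mathbf{x}_{u,i}$ for every $i\in\{1,\dots,d\}$, so the sum collapses to $\sum_u a_{vu}\mathbf{x}_{u,i}=(\mathbf{h}_{\mathcal{N}(v)})_i$. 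This argument does not need independence across users; linearity of expectation holds without it. Independence only becomes relevant for variance, which is outside the scope of this statement.

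The one step needing care is that Theorem~\ref{thm3} is stated for a fixed budget $\epsilon$, whereas in PPM the budget $(1-\gamma)\epsilon^{\tau'}$ and the sample size $m'$ depend on the ESW output $\tau'$, which is itself random. I handle this by conditioning on $\tau'_u$. Given $\tau'_u$, MLR runs with a fixed budget and a fixed $m'$, and the rescaling factor $\frac{d}{m'}\cdot\frac{e^{\epsilon/m'}+1}{e^{\epsilon/m'}-1}$ uses that same realized budget. Theorem~\ref{thm3} therefore gives $\mathbb{E}[\mathbf{x}'_u\mid\tau'_u]=\mathbf{x}_u$. By the tower property, $\mathbb{E}[\mathbf{x}'_u]=\mathbb{E}\big[\mathbb{E}[\mathbf{x}'_u\mid\tau'_u]\big]=\mathbf{x}_u$, and the linear argument above then goes through unchanged.

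I expect this conditioning step to be the main point to state carefully, because unbiasedness requires that the server's debiasing constant match the budget actually used by the user. If the user and server ever used mismatched budgets, the estimator would be biased.
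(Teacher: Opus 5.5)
Your proposal is correct and follows essentially the same route as the paper: write the linear aggregator as a structure-only combination of the perturbed features, pass the expectation through it, and invoke the per-coordinate unbiasedness $\mathbb{E}[\mathbf{x}'_u]=\mathbf{x}_u$ from Theorem~\ref{thm3}. Your conditioning on $\tau'_u$ is extra care the paper does not spell out, and it goes through because MLR is unbiased for every fixed $m'$; note that, as written, Alg.~\ref{alg3} passes MLR the true-level budget $(1-\gamma)\epsilon^{\tau}$ and computes only $m'$ from $\tau'$, and the user rescales inside Alg.~\ref{alg1}--\ref{alg2} with the same $m'$ and budget, so the debiasing cannot be mismatched.
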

\begin{proof}
See Appendix~C for the proof.
\end{proof}
The estimation error $\Delta$ between the estimated node embedding $\widehat{\mathbf{h}}_{\mathcal{N}(v)}$ and the true node embedding $\mathbf{h}_{\mathcal{N}(v)}$ for any node $v$ is defined as the maximum absolute difference across each dimension $i$ in the embedding, expressed as:
\begin{equation}
    \Delta=\max_{i\in\{1,2,\dots,d\}} |( \widehat{\mathbf{h}}_{\mathcal{N}(v)} )_i-( \mathbf{h}_{\mathcal{N}(v)})_i|.
\end{equation}
where $( \widehat{\mathbf{h}}_{\mathcal{N}(v)} )_i$ and $( \widehat{\mathbf{h}}_{\mathcal{N}(v)} )_i$ represent the $i$-th dimension of the estimated and true node embeddings, respectively. This error quantifies the deviation introduced by the perturbation and subsequent aggregation process. The detailed analysis of $\Delta$ is provided in Theorem~\ref{thm4}.
\begin{theorem}\label{thm4}
    Let $\textsc{Agg}(\cdot)$ be the \textsc{Mean} aggregator and $\delta>0$, with at least $1-\delta$ probability, the estimation error $\Delta$ for any given node $v$ is: $\Delta = \mathcal{O}\left(\frac{\sqrt{d \log(d/\delta)}}{(1-\gamma)\epsilon \sqrt{|\mathcal{N}(v)|}}\right)+\mathcal{O}\left(\frac{1}{\gamma\epsilon}\right)$.
\end{theorem}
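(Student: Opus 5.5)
The plan is to split the estimation error into two sources and bound each separately, then combine them with a union bound.

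The first source is the sampling-plus-randomization noise of MLR, which runs with budget $\epsilon_1=(1-\gamma)\epsilon$. The second is the distortion caused by the server and user working with the ESW-randomized level $\tau'$ instead of $\tau$; ESW runs with budget $\gamma\epsilon$. Concretely, I would write
\begin{equation*}
(\widehat{\mathbf{h}}_{\mathcal{N}(v)})_i-(\mathbf{h}_{\mathcal{N}(v)})_i=\frac{1}{|\mathcal{N}(v)|}\sum_{u\in\mathcal{N}(v)}\bigl(\mathbf{x}'_{u,i}-\mathbb{E}[\mathbf{x}'_{u,i}\mid\tau'_u]\bigr)+\frac{1}{|\mathcal{N}(v)|}\sum_{u\in\mathcal{N}(v)}\bigl(\mathbb{E}[\mathbf{x}'_{u,i}\mid\tau'_u]-\mathbf{x}_{u,i}\bigr).
\end{equation*}
I would then bound the first sum by concentration and the second by an analysis of ESW.

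\textbf{First term.} Condition on the realized levels. By Theorem~\ref{thm3} and the independence of users, the summands $Z_u=\mathbf{x}'_{u,i}-\mathbf{x}_{u,i}$ are independent, have zero mean, and satisfy
\begin{equation*}
|Z_u|\le B:=\frac{d}{m}\cdot\frac{e^{\epsilon_1/m}+1}{e^{\epsilon_1/m}-1}+1.
\end{equation*}
Their variance is given by Eq.~\eqref{5}. Using the elementary bound $\frac{e^{x}+1}{e^{x}-1}\le 1+\frac{2}{x}$, I would show that, for the choice of $m$ from Theorem~\ref{thm2}, both $\mathrm{Var}[Z_u]$ and $B^2$ are $\mathcal{O}(d/\epsilon_1^2)$. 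This holds up to the absolute constant $2.2$ that appears in the choice of $m$. I would apply Bernstein's inequality with failure probability $\delta/d$ and take a union bound over the $d$ coordinates. The result is that, with probability at least $1-\delta$,
\begin{equation*}
\max_i\Bigl|\frac{1}{|\mathcal{N}(v)|}\sum_u Z_u\Bigr|=\mathcal{O}\Bigl(\sqrt{\tfrac{d\log(d/\delta)}{\epsilon_1^2|\mathcal{N}(v)|}}\Bigr)+\mathcal{O}\Bigl(\tfrac{B\log(d/\delta)}{|\mathcal{N}(v)|}\Bigr).
\end{equation*}
The second Bernstein term is of lower order once $|\mathcal{N}(v)|\gtrsim\log(d/\delta)$, so it can be absorbed. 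This gives the first summand of the theorem with $\epsilon_1=(1-\gamma)\epsilon$.

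\textbf{Second term.} This is the step I expect to be the main obstacle, because the paper does not yet state precisely how $\tau'$ enters the estimator. The plan is to quantify how far the perturbed level index $g'$ strays from $g$ under ESW with budget $\epsilon_2=\gamma\epsilon$.
\begin{enumerate}
\item From Eq.~\eqref{eq1}, $\Pr[|g-g'|\le b]=(2b+1)p$. The remaining mass $(f-1)q$ spreads over a domain of size $\mathcal{O}(f+b)$.
\item With the prescribed $b$, this gives $\mathbb{E}|g-g'|=\mathcal{O}(b+f\cdot (f-1)q)$. A direct calculation should show that this is $\mathcal{O}(f/\epsilon_2)$ in the small-budget regime and that it decays in the large-budget regime.
\item After normalizing the level index to a bounded scale, the induced shift in the per-node contribution (through the rescaling factor $m'$ and the FlexProp weights) is Lipschitz in $|g-g'|/f$ on a bounded feature range $[-1,1]$.
\item The shift is therefore $\mathcal{O}(1/\epsilon_2)=\mathcal{O}(1/(\gamma\epsilon))$, uniformly over coordinates. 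This requires no $\sqrt{|\mathcal{N}(v)|}$ gain, because the mean of such bounded shifts is bounded by their maximum expectation.
\end{enumerate}
To make the bound hold with probability $1-\delta$ rather than only in expectation, I would either spend part of $\delta$ on a Markov or Hoeffding step for the averaged $|g-g'|$, or simply absorb the resulting logarithmic factors into the $\mathcal{O}(\cdot)$.

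\textbf{Combining.} Adding the two bounds via the triangle inequality, with the failure probabilities split between the two events, yields the stated
\begin{equation*}
\Delta=\mathcal{O}\Bigl(\frac{\sqrt{d\log(d/\delta)}}{(1-\gamma)\epsilon\sqrt{|\mathcal{N}(v)|}}\Bigr)+\mathcal{O}\Bigl(\frac{1}{\gamma\epsilon}\Bigr).
\end{equation*}
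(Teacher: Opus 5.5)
\textbf{Gap: the second term.} Your first-term strategy is the right one: condition on the levels, apply Bernstein coordinatewise at level $\delta/d$, and take a union bound over the $d$ coordinates at MLR budget $(1-\gamma)\epsilon$. That is exactly where the $\sqrt{d\log(d/\delta)}/((1-\gamma)\epsilon\sqrt{|\mathcal{N}(v)|})$ shape comes from. The genuine gap is your second term.

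The estimator in the statement is Eq.~\eqref{88} with the \textsc{Mean} aggregator, which is a plain average of MLR outputs, so no FlexProp weights enter it. Moreover, the unbiasedness in Theorem~\ref{thm3} holds for every sampling size. Alg.~\ref{alg2} rescales by $\frac{d}{m}\cdot\frac{e^{\epsilon/m}+1}{e^{\epsilon/m}-1}$ using the same $m$ it samples with. So even when the sampling size is the ESW-dependent $m'$, you have $\mathbb{E}[\mathbf{x}'_{u,i}\mid\tau'_u]=\mathbf{x}_{u,i}$, and your second sum is identically zero.

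Your steps 1--4 therefore bound a vanishing quantity. The load-bearing step 3 (a Lipschitz dependence on $|g-g'|/f$) is asserted, not proved. Suppose instead the server rescaled using $\tau'$. Then the per-node bias is governed by $\epsilon^{\tau_u}/\epsilon^{\tau'_u}$, which is about $2^{\tau_u-\tau'_u}$ under the paper's geometric schedule. Step 3 would then hold only with a constant exponential in $h$. Separately, a Markov step costs a factor $1/\delta$, which is not a logarithmic factor you can absorb.

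\textbf{Where $\tau'$ actually enters, and the fix.} For a \textsc{Mean}-aggregator estimate, $\tau'$ really enters through your first term. Write $\epsilon_1=(1-\gamma)\epsilon$ as you do (note that Alg.~\ref{alg3} uses the opposite labels). The conditional variance is $\frac{d}{m'}\bigl(\frac{e^{\epsilon_1/m'}+1}{e^{\epsilon_1/m'}-1}\bigr)^2-\mathbf{x}_{u,i}^2\approx 4dm'/\epsilon_1^2$ when $\epsilon_1/m'$ is small. Here $m'$ is set from $\epsilon^{\tau'}$, not by Theorem~\ref{thm2} at budget $\epsilon_1$. You therefore need a uniform bound on $m'$; for example, $m'=1$ whenever all budgets are below $4.4$. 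Then condition on all $\tau'_u$, apply Bernstein and the union bound, and integrate $\tau'$ out. This gives the first summand with probability at least $1-\delta$. In this model the $\mathcal{O}(1/(\gamma\epsilon))$ summand is just nonnegative slack, not something ESW contributes.

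Two bookkeeping fixes remain in the first term:
\begin{enumerate}
\item $B=\Theta(d/\epsilon_1)$, not $\mathcal{O}(\sqrt d/\epsilon_1)$; for $m=1$, already $B\approx 2d/\epsilon_1$. So the Bernstein range term $B\log(d/\delta)/|\mathcal{N}(v)|$ is dominated only once $|\mathcal{N}(v)|\gtrsim d\log(d/\delta)$, not $\log(d/\delta)$.
\item The variance is $\mathcal{O}(d/\epsilon_1^2)$ only for bounded budgets; for $m=\Theta(\epsilon_1)$ it is $\Theta(d/\epsilon_1)$. The stated rate thus implicitly assumes $\epsilon=\mathcal{O}(1)$.
\end{enumerate}
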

\begin{proof}
See Appendix~D for the proof.
\end{proof}
The above theorem indicates that, given a fixed $\epsilon$, selecting an appropriate $\gamma$ and expanding the size of $\mathcal{N}(v)$ are both beneficial in reducing the estimation error. The trade-offs of $\gamma$ are discussed in Section~\ref{gamma}.

\subsubsection{FlexProp algorithm}
To further minimize the error, we introduce a denoising aggregation algorithm called FlexProp, which calibrates noise by aggregating larger neighborhoods. Previous works~\cite{morris2019weisfeiler,abu2019mixhop,sajadmanesh2021locally,huang2024higher} have shown that considering higher-order neighborhoods aids in learning superior node representations. As illustrated in Alg.~\ref{alg4}, for a node $v$, FlexProp aggregates features of nodes up to the farthest $K$ steps away from $v$ by invoking the $\textsc{Agg}$ function $K$ consecutive times, without any non-linear transformation in between, which facilitates noise calibration. Additionally, FlexProp examines the influence of aggregation \textit{policy} $\pi$ on enhancing aggregation accuracy. In a personalized privacy-preserving context, varying aggregation policies can substantially impact accuracy due to differing noise scales among neighboring nodes. We define the estimated feature of node $i$ post-aggregation under policy $\pi$ as: $\widehat{\mathbf{h}}_i(\pi)= \textsc{Agg}\left(\{\mathbf{x}_j^\prime, \forall j \in \mathcal{N}(i)\}\right)$. Assuming $w_{ij}$ represents the weight of node $j$ when aggregated to target node $i$, and considering a weighted aggregation function, we have:
\begin{equation}
\widehat{\textbf{h}}_i(\pi)= \frac{1}{ {\textstyle \sum_{j\in\mathcal{N} (i)}w_{ij}}  }  {\textstyle \sum_{j\in\mathcal{N} (i)}w_{ij}\mathbf{x}_j^\prime}.
\end{equation}
Furthermore, assuming the feature vector $\mathbf{x}^\prime_i$ of each node $i$ decomposes into the true signal $\mathbf{s}_i$ and noise $\mathbf{n}_i$, i.e., $\mathbf{x}^\prime_i=\mathbf{s}_i+\mathbf{n}_i$, where $\mathbf{n}_i$ is a random variable with a mean of 0 and a variance of $\sigma^2_i$, we have: $\widehat{\textbf{h}}_i(\pi)= \frac{1}{ {\textstyle \sum_{j\in\mathcal{N} (i)}w_{ij}}  }  {\textstyle \sum_{j\in\mathcal{N} (i)}w_{ij}\left ( \mathbf{s}_j +\mathbf{n} _j\right ) }$. Next, we analyze the variance of the noise after aggregation, denoted as $Var  ( \widehat{\textbf{h}}_i|\pi)$, to better assess the denoising effectiveness and determine the optimal aggregation policy. The variance is calculated as follows:
\begin{equation}
Var \left ( \widehat{\textbf{h}}_i|\pi \right )  = \frac{1}{ \left ( {\textstyle \sum_{j\in\mathcal{N} (v)}w_{ij}}  \right )^2  }  {\textstyle \sum_{j\in\mathcal{N} (v)}w_{ij}^2\sigma_j^2  }.
\end{equation}
To achieve optimal noise removal, the variance of the aggregated noise should be minimized. We aim to select a set of weights $w_{ij}$ that satisfy the constraint $ {\textstyle \sum_{j\in\mathcal{N}(i)}} w_{ij}=1$ while minimizing this variance. To find the optimal solution, we employ the \textit{Lagrange multiplier} method~\cite{bertsekas2014constrained}. The Lagrange function is defined as:
\begin{equation}
 \mathcal{L} (w_{ij},\rho)=\sum_{j\in \mathcal{N} (i)}w_{ij}^2\sigma^2_j+ \rho \left ( \sum_{j\in\mathcal{N} (i)}w_{ij}-1  \right ).   
\end{equation}

\begin{algorithm}[tb]
    \caption{Personalized Perturbation Mechanism}
    \label{alg3}
    \textbf{Input}: User's feature vector $\mathbf{x} \in [-1,1]^{d}$, privacy level $\tau\in\{1,2,\dots,h\}$, privacy budget set $\mathcal{E}=\cup_{\tau=1 }^{h}\left\{\epsilon^\tau\right\}$,  allocation parameter $\gamma$\\
    \textbf{Output}: Perturbed node feature $\mathbf{x}^\prime$ and $\tau^\prime$
    
    \begin{algorithmic}[1] 
        \STATE $\epsilon^\tau\leftarrow \mathcal{E}(\tau)$. \Comment{total privacy budget}
        \STATE $\epsilon_1\leftarrow \gamma\epsilon^\tau$. \Comment{privacy budget of the privacy level}
        \STATE $\epsilon_2\leftarrow (1-\gamma)\epsilon^\tau$. \Comment{privacy budget of the node feature vector}
        \STATE $\tau^\prime\leftarrow \mathrm{ESW}(\tau, \epsilon_1)$ and $\mathbf{x}^\prime\leftarrow \mathrm{MLR}(\mathbf{x}, \epsilon_2)$.
        \STATE \textbf{return} $\mathbf{x}^\prime$ and $\tau^\prime$     
    \end{algorithmic}
\end{algorithm}

The optimal weights are obtained by solving the above equation as $w_{ij}=\frac{1/\sigma_j^2}{ {\textstyle \sum_{j\in \mathcal{N} (i)}}\left ( 1/\sigma_j^2 \right ) }$. For more details, please refer to the Appendix~E. 

In practice, the optimal weights under the given policy $\pi$ are not always feasible. To address this, we approximate the optimal weights using the neighbor aggregation function $\mathbf{\Pi}$ and incorporate it into Alg.~\ref{alg4}.\footnote{Unlike the KProp Layer in~\cite{sajadmanesh2021locally}, which performs multi-hop aggregation under a global privacy budget, our FlexProp is tailored for personalized LDP scenarios, where users have heterogeneous noise levels. FlexProp introduces a privacy-aware weight matrix $\Pi$ to calibrate the aggregation process while preserving privacy, which is not considered in KProp.} The function $\mathbf{\Pi}$ can be represented in matrix form as follows, 
\begin{equation}
\mathbf{\Pi}_{i,j}=\begin{cases}
   0,&\text{ if } \textbf{A}_{i,j}=0 \\
   \alpha_{i,j} ,  &\text{ if } \textbf{A}_{i,j}=1
\end{cases},\label{14}
\end{equation}
where $\mathbf{A}$ is the adjacency matrix and $\alpha_{i,j}=\tau^\prime _j/ {\textstyle \sum_{j\in\mathcal{N}(i)}\tau^\prime_j} $.

\vspace{-1em}
\subsection{Complexity Analysis}\label{sec4.4}
The time complexity of PPGNN is determined by its two key components: the PPM and the FlexProp aggregation algorithm. PPM perturbs the privacy level and feature vector of each node locally, requiring $\mathcal{O}(|\mathcal{V}|\cdot d)$, where $|\mathcal{V}|$ is the number of nodes and $d$ is the feature dimension. FlexProp performs $K$ layer message aggregation, involving $\mathcal{O}(K\cdot E \cdot d)$, where $E$ is the number of edges and $K$ is the number of layers. Combining these, the overall time complexity is $\mathcal{O}(|\mathcal{V}|\cdot d+K\cdot E \cdot d)$, scaling linearly with graph size and feature dimensionality. This linear scalability ensures computational efficiency in practice and makes PPGNN suitable for real-world applications involving large, high-dimensional graphs.

\begin{algorithm}[tb]
    \caption{FlexProp}
    \label{alg4}
    \textbf{Input}: Graph $\mathcal{G}=(\mathcal{V}, \mathbf{A})$, privacy level $\tau_v, \forall v\in\mathcal{V}$, input features $\mathbf{x}_v$, aggregation function \textsc{Agg}, propagation step $K\ge0$\\
    \textbf{Output}: Final embedding vector $\mathbf{h}_v, \forall v \in \mathcal{V}$

    \begin{algorithmic}[1] 
        \STATE Compute propagation weights $\mathbf{\Pi}$ using Eq.~\eqref{14}
        \FOR{\textbf{each} node $v \in \mathcal{V}$}
            \STATE Initialize hidden representation: $\mathbf{h}_v^0 = \mathbf{x}_v$
        \ENDFOR
        \FOR{$k = 1$ \textbf{to} $K$}
            \FOR{\textbf{each} node $v \in \mathcal{V}$}
                \STATE $\mathbf{h}_v^k = \textsc{Agg}\left( \left\{ \mathbf{\Pi}_{v,u} \cdot \mathbf{h}_u^{k-1} \mid u \in \mathcal{N}(v) \right\} \right)$
            \ENDFOR
        \ENDFOR
        \STATE \textbf{return} $\mathbf{h}_v^K, \forall v \in \mathcal{V}$
    \end{algorithmic}
\end{algorithm}
\subsection{Privacy Analysis}\label{sec4.5}
Recalling the definition of LDP (Definition~\ref{def:1}), the PPGNN satisfies  $\epsilon^\tau$-LDP as established by Theorem~\ref{thm5}, which relies on the composition property (Proposition~\ref{pos1}) and robustness to post-processing (Proposition~\ref{pos2}) in differential privacy~\cite{dwork2014algorithmic}. Furthermore, any subsequent predictions made by the PPGNN are governed by the post-processing theorem, since the LDP mechanism is applied only once to the private data.
\begin{theorem}\label{thm5}
   PPGNN satisfies $\epsilon^\tau$-LDP for each node.
\end{theorem}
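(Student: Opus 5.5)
The plan is to decompose PPGNN into the user-side randomization, namely PPM (Alg.~\ref{alg3}), and the server-side processing, namely FlexProp (Alg.~\ref{alg4}) followed by GNN training and prediction. The idea is to show that PPM alone satisfies $\epsilon^\tau$-LDP for a user at privacy level $\tau$. Everything after it acts only on the released pair $(\mathbf{x}^\prime,\tau^\prime)$ and on the public adjacency matrix $\mathbf{A}$. Since $\mathbf{A}$ is assumed to be visible to the server and is not part of the protected record, that stage is covered by Proposition~\ref{pos2}.

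\textbf{Step 1: ESW is $\gamma\epsilon^\tau$-LDP.} This is a direct check from Eq.~\eqref{eq1}. For any inputs $g_1,g_2\in\mathcal{D}$ and any output $g^\prime\in\mathcal{D}^\prime$, each of $\Pr[\mathrm{ESW}(g_1)=g^\prime]$ and $\Pr[\mathrm{ESW}(g_2)=g^\prime]$ lies in $\{p,q\}$. Hence their ratio is at most $p/q=e^{\epsilon_1}$ with $\epsilon_1=\gamma\epsilon^\tau$.

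I will also verify that the probabilities sum to one over $\mathcal{D}^\prime$. Exactly $2b+1$ outputs receive mass $p$, and the remaining $f-1$ outputs receive mass $q$. This matches the normalizer $(2b+1)e^{\epsilon}+f-1$, given that $|\mathcal{D}^\prime|=f+2b$. (I read the window indexing as shifted so that it fits inside $\mathcal{D}^\prime$.)

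\textbf{Step 2: MLR is $(1-\gamma)\epsilon^\tau$-LDP.} Here I invoke Theorem~\ref{thm3} with budget $\epsilon_2=(1-\gamma)\epsilon^\tau$.

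\textbf{Step 3: combine the two.} ESW and MLR are applied to the two components $\tau$ and $\mathbf{x}$ of the same local record. By sequential composition (Proposition~\ref{pos1}), the joint release satisfies $(\gamma\epsilon^\tau+(1-\gamma)\epsilon^\tau)=\epsilon^\tau$-LDP.

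\textbf{Step 4: post-processing.} The server computes the weights $\mathbf{\Pi}$ in Eq.~\eqref{14} from $\tau^\prime$ and $\mathbf{A}$, runs FlexProp on $\mathbf{x}^\prime$, trains the GNN, and makes predictions. All of these are functions of the released outputs alone and never touch the raw data again. Proposition~\ref{pos2} therefore preserves $\epsilon^\tau$-LDP for each node.

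\textbf{Main obstacle: dependence of MLR on $\tau$.} MLR's sampling size $m$, and its budget, depend on the level $\tau$, which is itself private. This matters because the neighbouring inputs in Definition~\ref{def:1} may differ in $\tau$ as well as in $\mathbf{x}$. The paper addresses this by computing $m^\prime$ from the randomized $\tau^\prime$. I would therefore condition on the ESW output, so that $m^\prime$ becomes a post-processing of $\tau^\prime$. This treats the mechanism as an adaptive composition: first release $\tau^\prime$, then run MLR with parameters determined by $\tau^\prime$.

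The budget used inside MLR still depends on the true $\tau$, so a uniform guarantee is only clean if the bound is stated per node at that node's own level $\epsilon^\tau$, which is the PLDP convention. Under that convention I bound the ratio as follows. Factor each output probability as $\Pr[\tau^\prime]\cdot\Pr[\mathbf{x}^\prime\mid\tau^\prime]$. Apply Step 1 to the first factor and Theorem~\ref{thm3} to the second, which holds for any fixed $m^\prime$. I would state this convention explicitly as the interpretation of the theorem.
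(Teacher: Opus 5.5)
Your proposal is correct and follows essentially the paper's route: ESW is $\gamma\epsilon^\tau$-LDP, MLR is $(1-\gamma)\epsilon^\tau$-LDP by Theorem~\ref{thm3}, sequential composition (Proposition~\ref{pos1}) gives $\epsilon^\tau$ for PPM, and FlexProp plus GNN training are post-processing (Proposition~\ref{pos2}), with your adaptive-composition treatment of $m^\prime$ being a refinement the paper does not spell out. One caution: the fixed-budget PLDP convention you invoke for MLR is equally needed in Step~1, because $p$, $q$, $b$ and even $\mathcal{D}^\prime$ depend on $\epsilon_1=\gamma\epsilon^\tau$ (so inputs with different $\tau$ would have different supports), and without the convention the nonzero entries of $\mathbf{x}^\prime$, of magnitude $\frac{d}{m^\prime}\cdot\frac{e^{\epsilon_2/m^\prime}+1}{e^{\epsilon_2/m^\prime}-1}$, would reveal $\tau$ exactly regardless of the conditioning on $\tau^\prime$.
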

\begin{proof}
See Appendix~F for the proof.
\end{proof}

\section{Experiments}\label{S5}

\subsection{Experimental Settings}
\subsubsection{Datasets}
Our experiments use six publicly real-world datasets: Cora~\cite{yang2016revisiting}, CiteSeer~\cite{yang2016revisiting}, Pubmed~\cite{yang2016revisiting}, LastFM~\cite{DBLP:conf/cikm/RozemberczkiS20},  Facebook~\cite{rozemberczki2021multi}, and Wikipedia~\cite{rozemberczki2021multi}. The statistical details of these datasets are presented in Table~\ref{tab1}.
{\begin{itemize}
\item \textbf{Cora}~\cite{yang2016revisiting}: A citation network in which each node corresponds to a scientific paper and edges signify citation links between papers. Node features are derived from paper content, and each node is labeled with a research topic.
\item \textbf{CiteSeer}~\cite{yang2016revisiting}: Also a citation network where nodes represent academic papers and edges denote citation relationships. Each node has a content-based feature vector and belongs to a research category.
\item \textbf{Pubmed}~\cite{yang2016revisiting}: A large-scale citation network in which nodes are biomedical papers and edges represent citations. Node features are TF/IDF weighted term vectors, and the task is multi-class classification among diabetes-related topics.
\item \textbf{LastFM}~\cite{DBLP:conf/cikm/RozemberczkiS20}: A social network where nodes correspond to LastFM music platform users and edges reflect friendship connections. The task is to predict each user's home country based on the artists they have liked.
\item \textbf{Facebook}~\cite{rozemberczki2021multi}: A social network in which each node is an official Facebook page and edges represent mutual ``likes'' between pages. Node features are derived from page descriptions, and the task is page-type classification.
\item \textbf{Wikipedia}~\cite{rozemberczki2021multi}: A co-occurrence network constructed from Wikipedia pages, where nodes represent articles and edges connect pages that share common links. The task is binary classification of page category.
\end{itemize}}
\begin{table}
	\centering
	\small
	\sc
 \caption{Statistics of graph datasets}
	\setlength{\tabcolsep}{1.8mm}\begin{tabular}{l|rrrr}
		\toprule
		Dataset  & \ \#Classes & \ \#Nodes & \ \#Edges & \ \#Features   \\
		\midrule
		Cora     & 7         & 2,708   & 5,278   & 1,433      \\
		Citeseer & 6  & 3,327  & 4,552   & 3,703  \\
        Pubmed     & 3         & 19,717   & 44,324   & 500      \\
		LastFM   & 18         & 7,624  & 27,806  & 7,842       \\
		Facebook & 4         & 22,470  & 170,912 & 4,714   \\
				Wikipedia   & 2  & 11,631 & 170,845 & 13,183 \\
		\bottomrule
	\end{tabular}
 \label{tab1}
\end{table}
\begin{figure*}[h!]
	\centering
	\begin{tabular}{cccc}
		\multicolumn{4}{c}{\textbf{}} \\
    \includegraphics[width=0.23\linewidth]{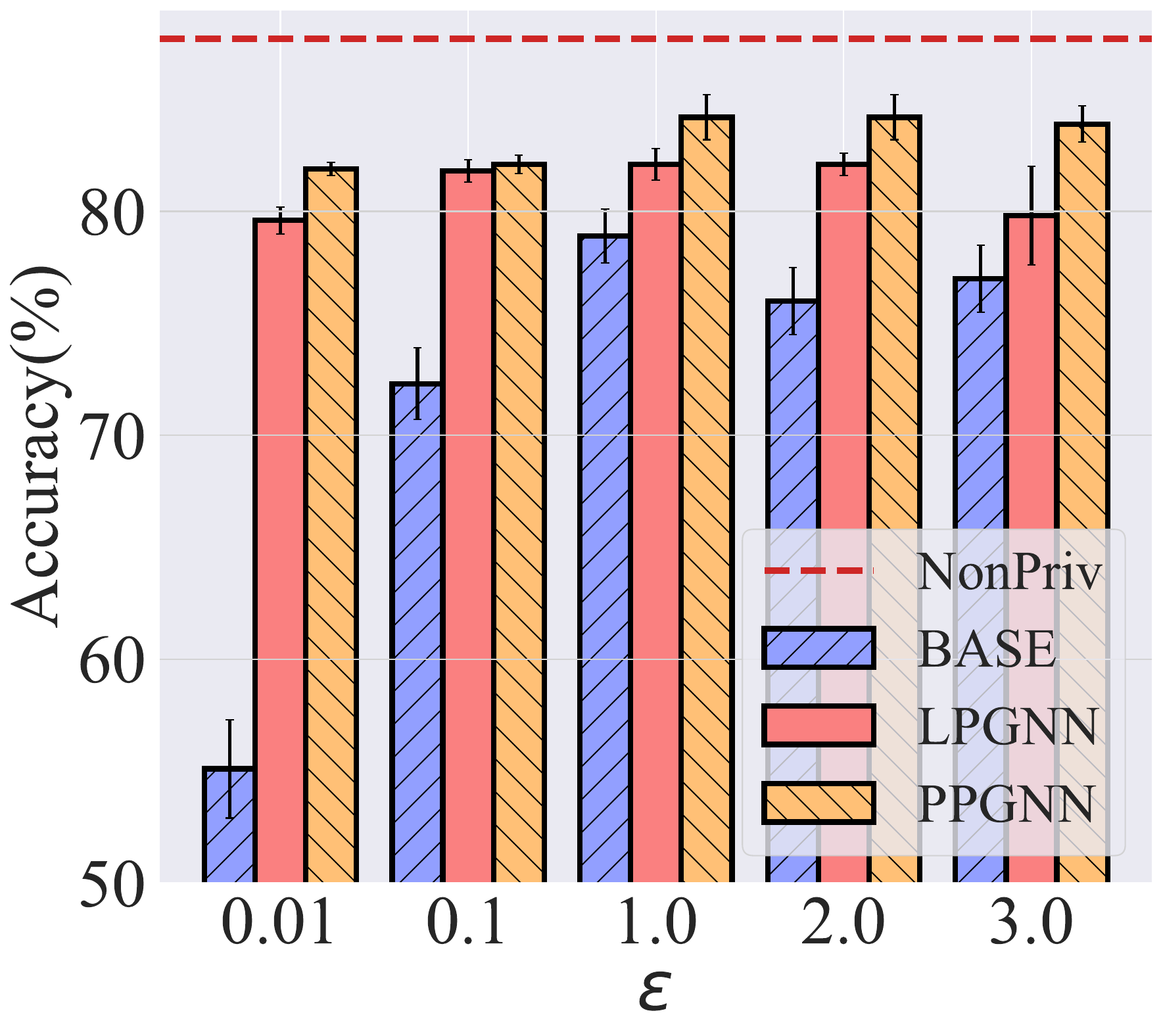} & 
	\includegraphics[width=0.23\linewidth]{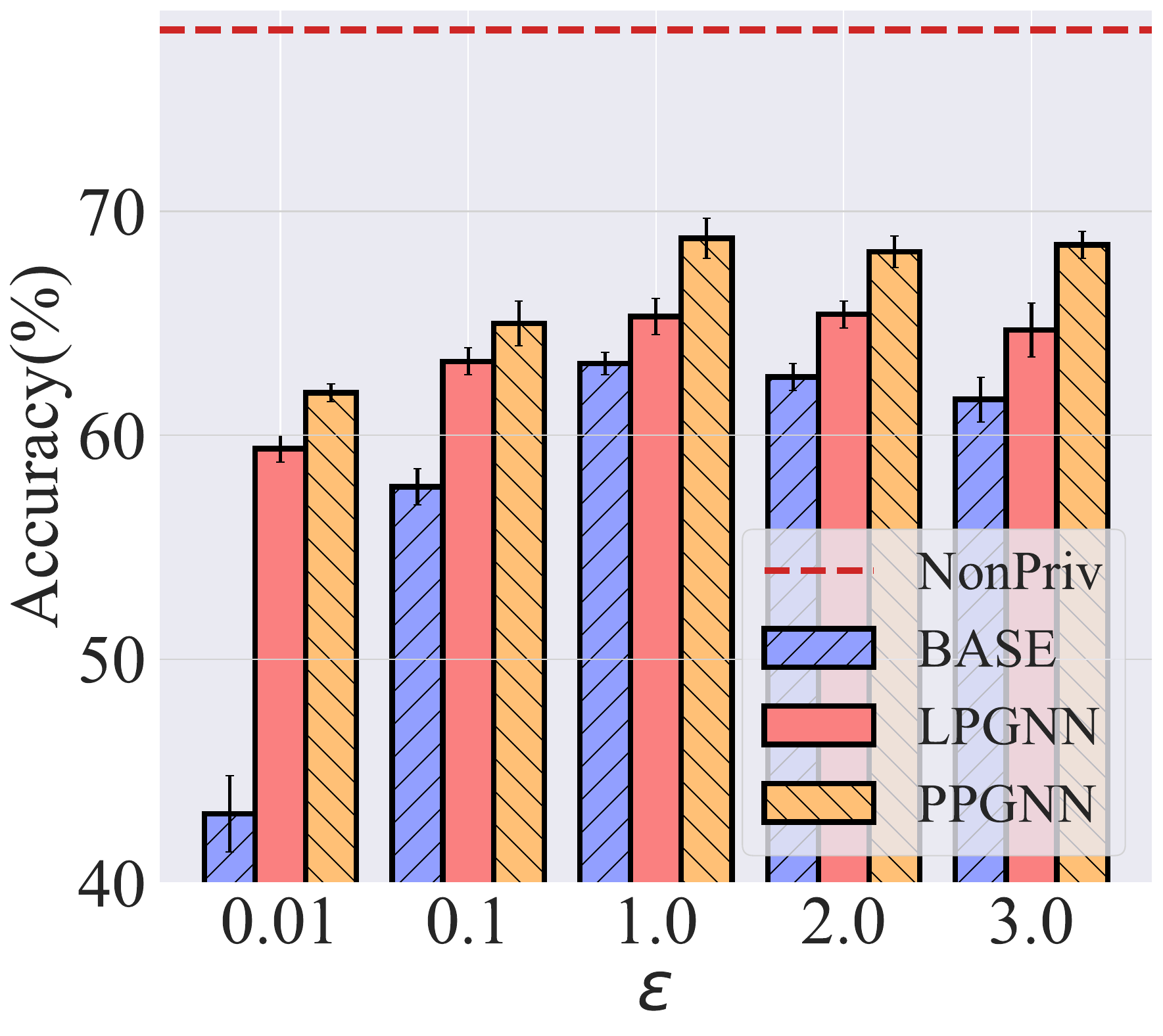} & 
	\includegraphics[width=0.23\linewidth]{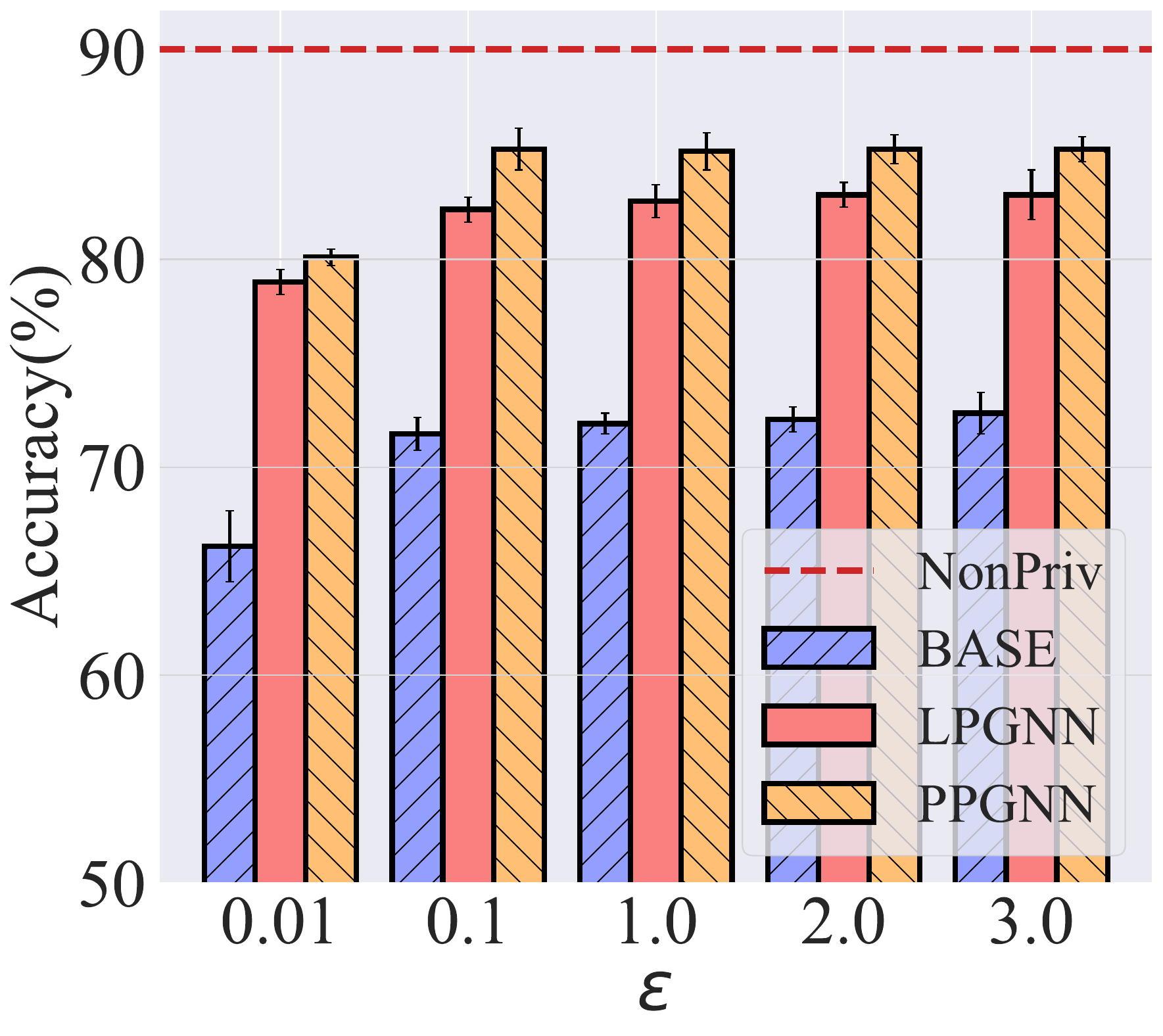} & 
	\includegraphics[width=0.23\linewidth]{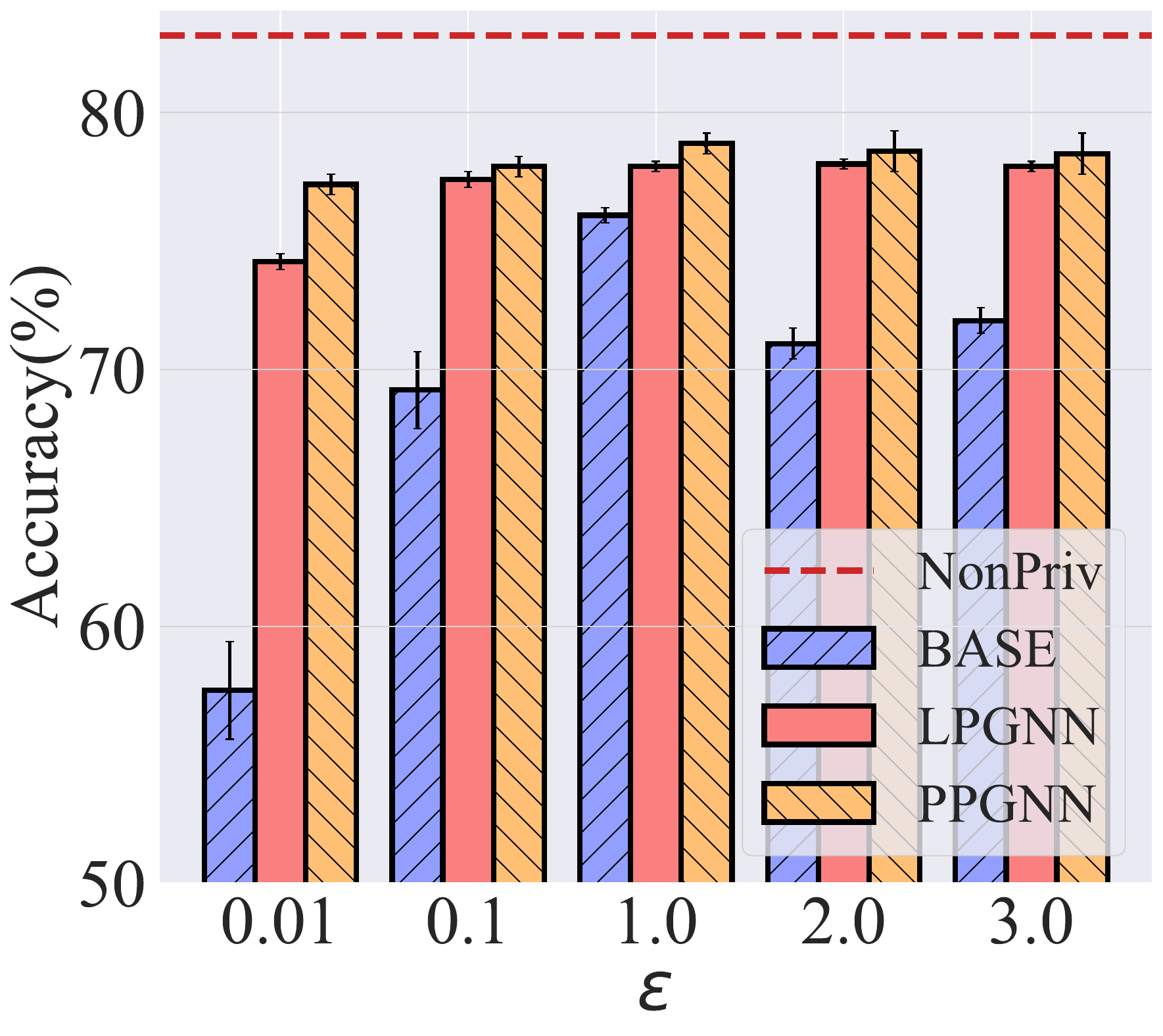}  \\
  \textbf{(a) Cora} (GraphSAGE) & \textbf{(b) CiteSeer} (GraphSAGE) & \textbf{(c) Pubmed} (GraphSAGE) &\textbf{(e) LastFM} (GraphSAGE) \\
  \multicolumn{4}{c}{\textbf{}} \\
    \includegraphics[width=0.23\linewidth]{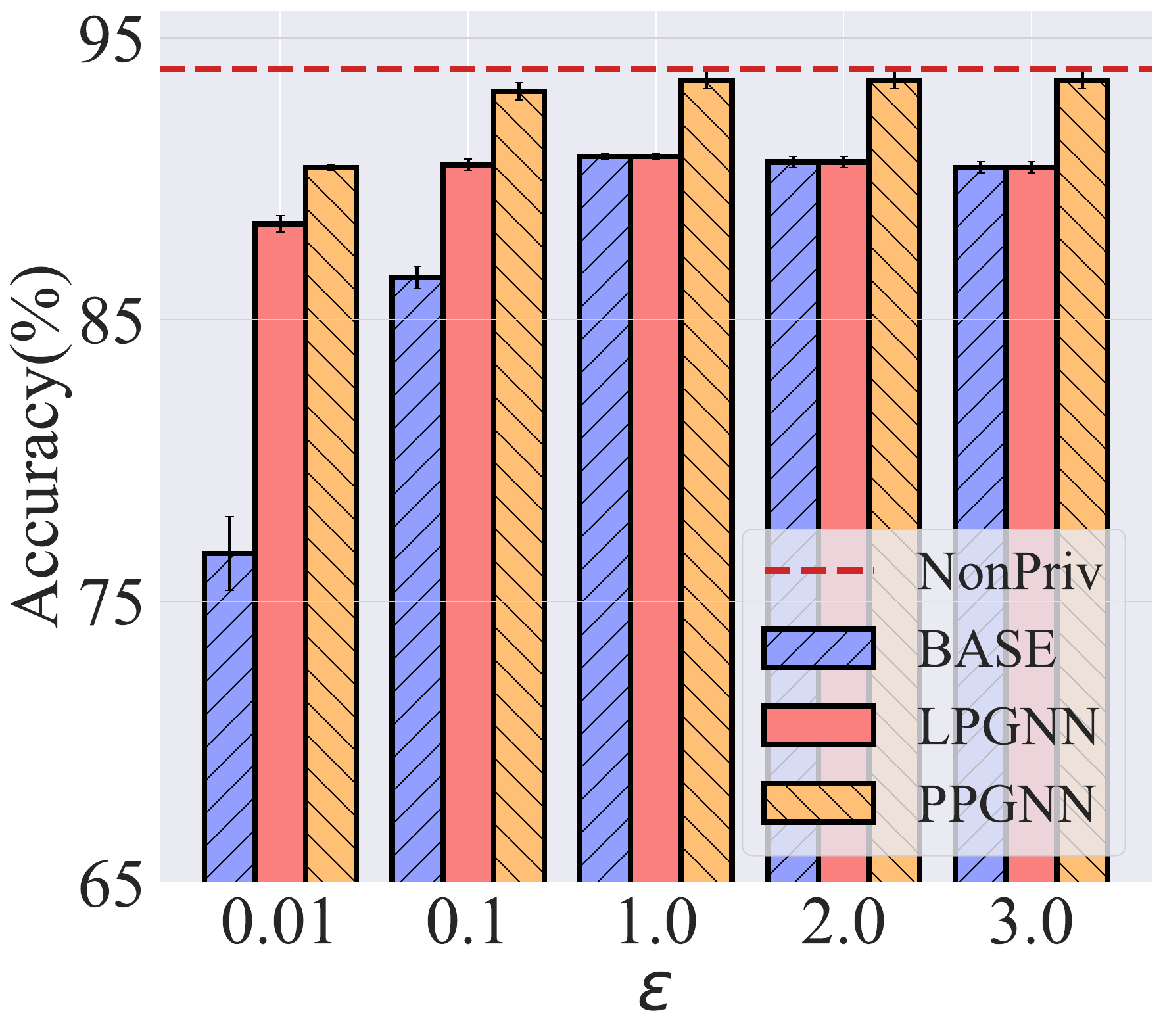} & 
	\includegraphics[width=0.23\linewidth]{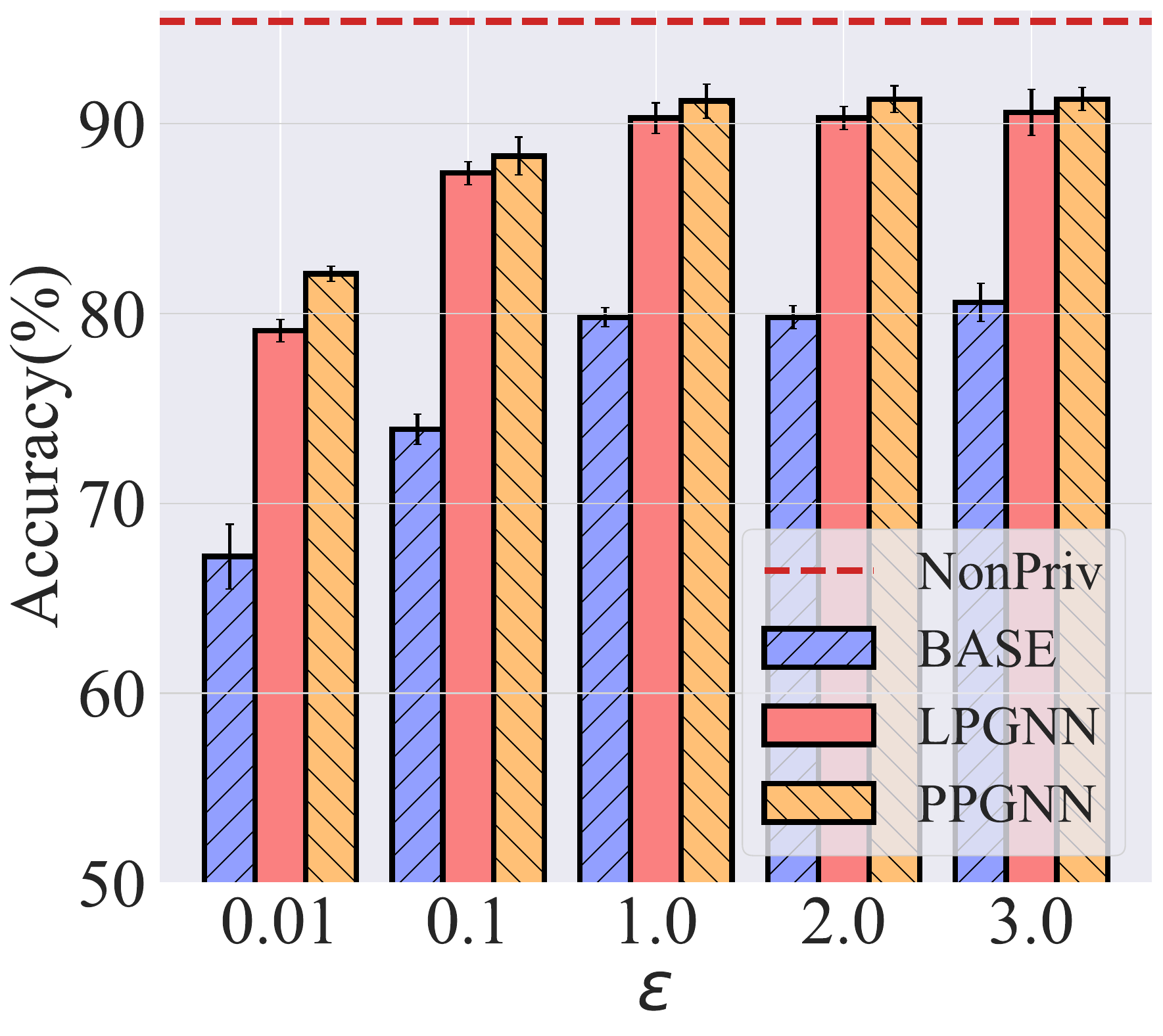} & 
	\includegraphics[width=0.23\linewidth]{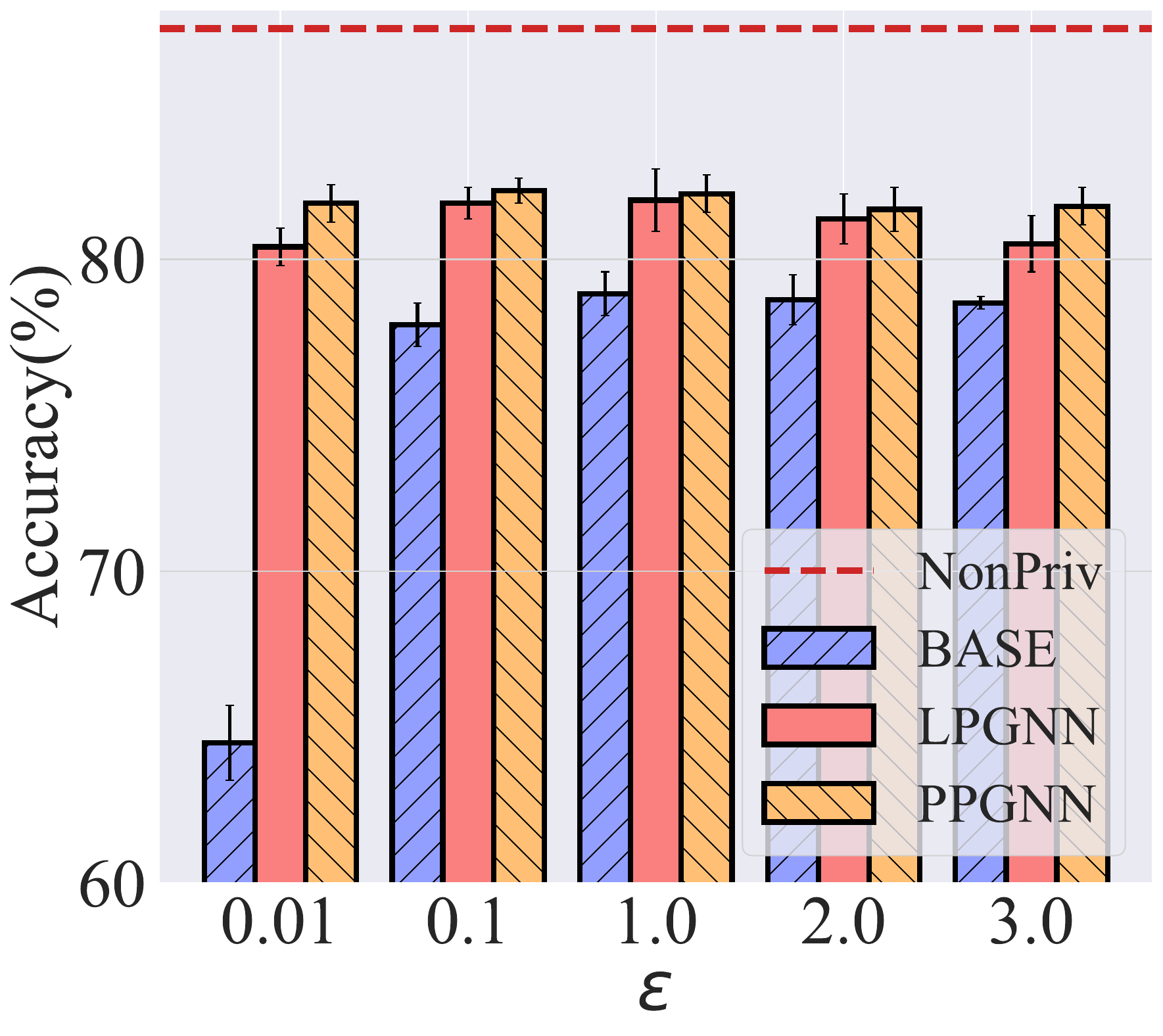} & 
	\includegraphics[width=0.23\linewidth]{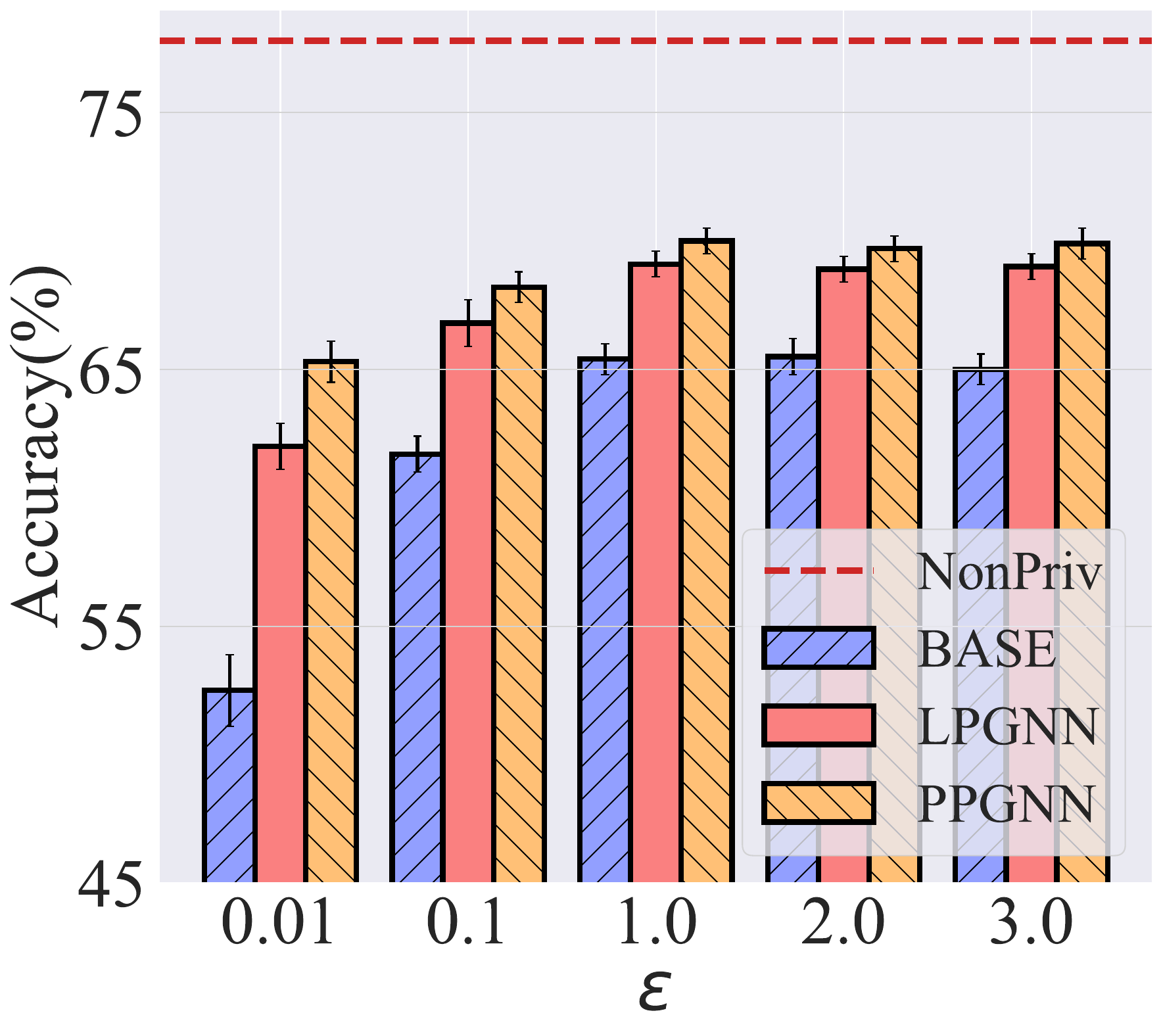}  \\
  \textbf{(e) Facebook} (GraphSAGE) & \textbf{(f) Wikipedia} (GraphSAGE) & \textbf{(g) Cora} (GCN) & \textbf{(h) CiteSeer} (GCN) \\
  \multicolumn{4}{c}{\textbf{}}\\ \includegraphics[width=0.23\linewidth]{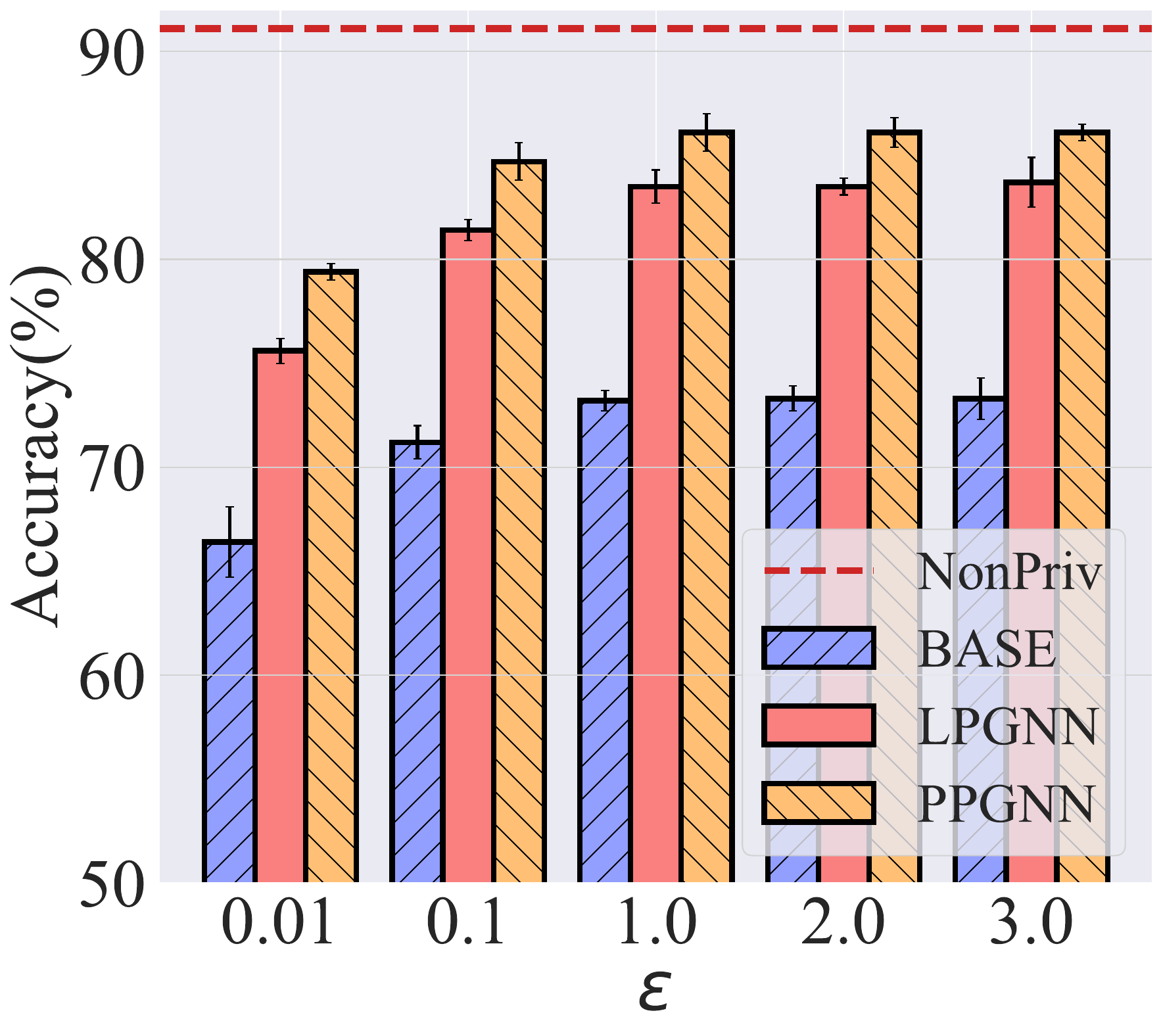} & 
	\includegraphics[width=0.23\linewidth]{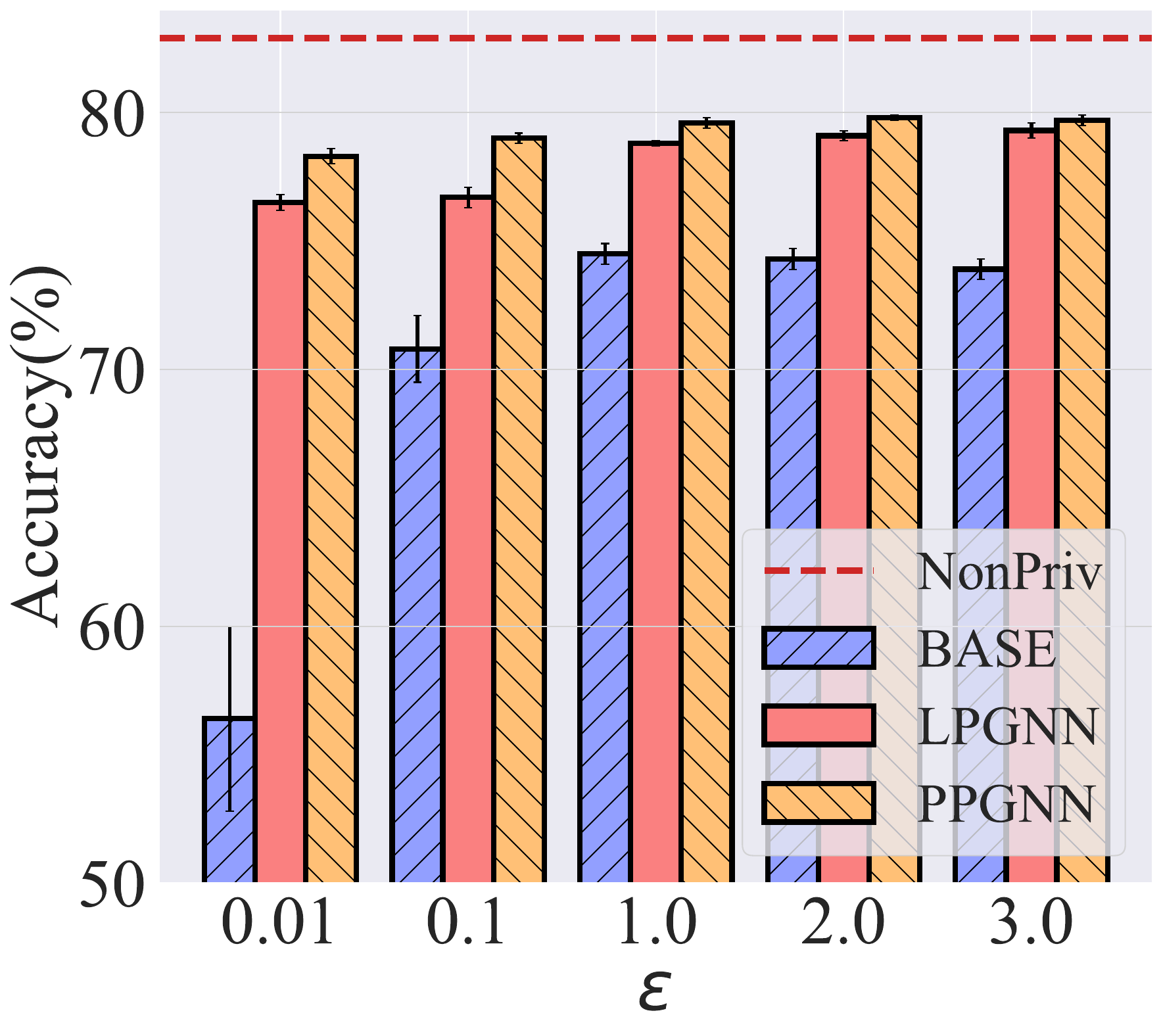} & 
	\includegraphics[width=0.23\linewidth]{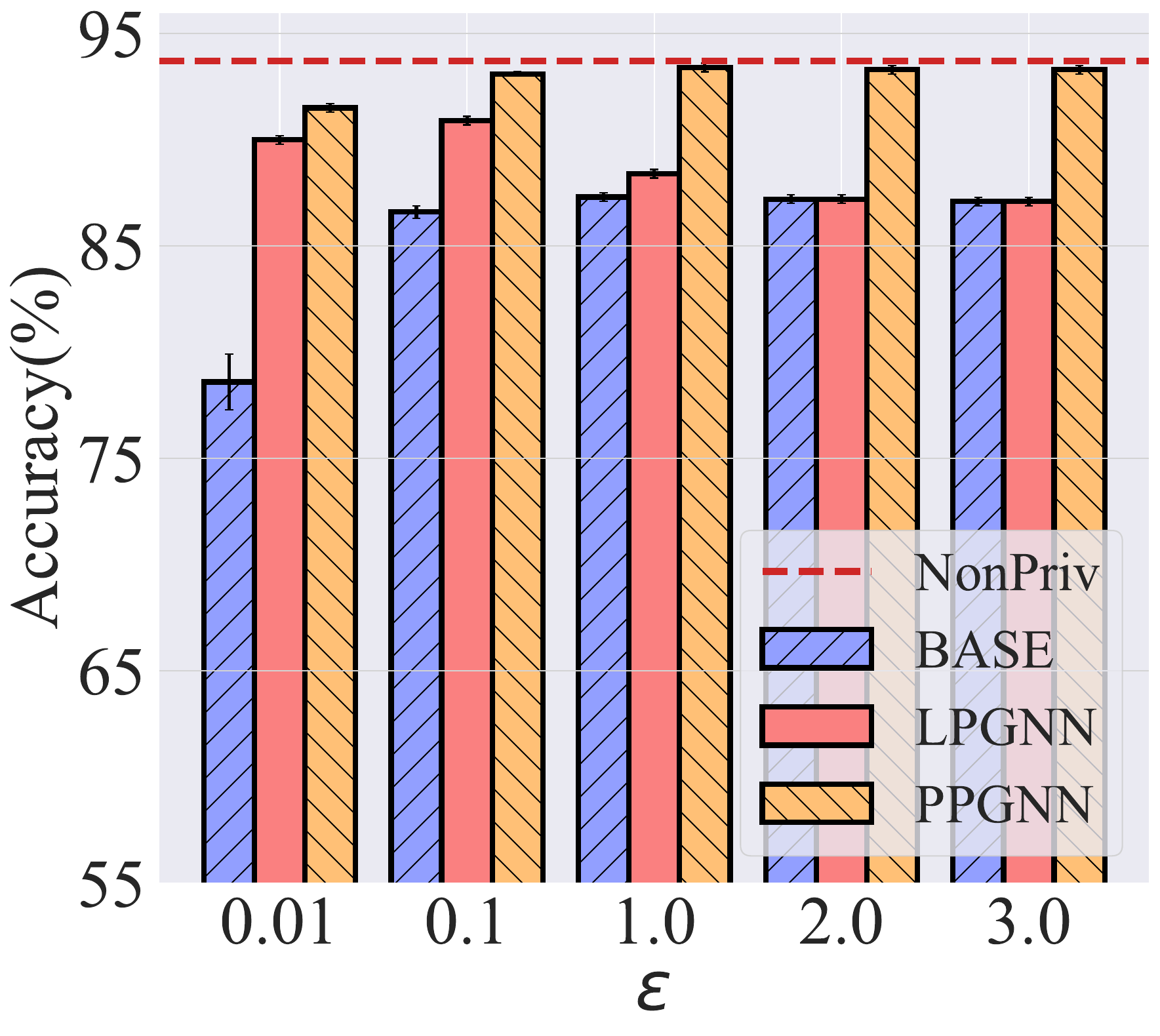} & 
	\includegraphics[width=0.23\linewidth]{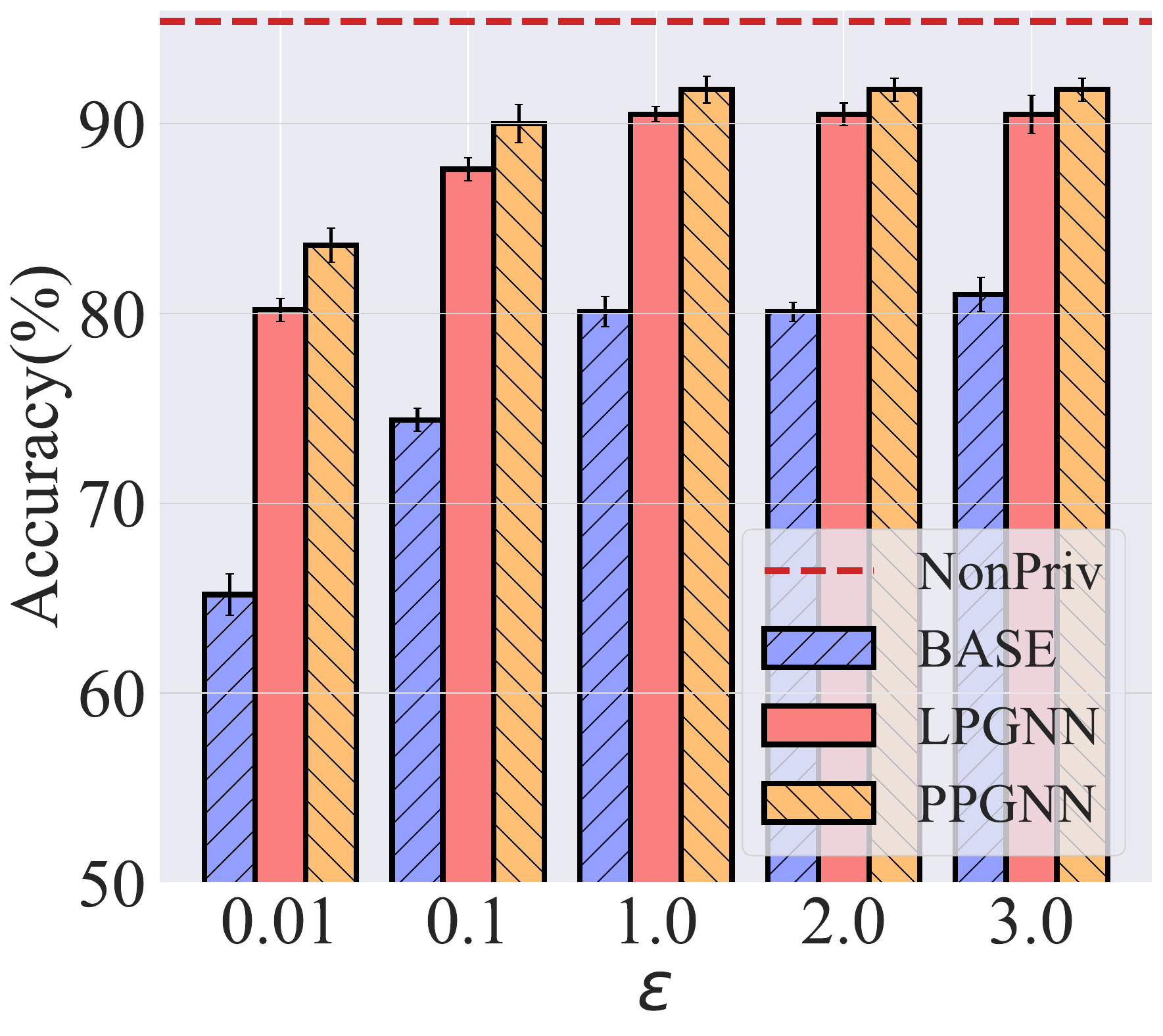}\\
 \textbf{(i) Pubmed} (GCN) & \textbf{(j) LastFM} (GCN) & \textbf{(k) Facebook} (GCN) & \textbf{(l) Wikipedia} (GCN)\\
	\end{tabular}
	\caption{Performance of PPGNN and other baselines. X-axis represents $\epsilon$ and y-axis represents test accuracy (\%).}
	\label{fig:3}
\end{figure*}

\subsubsection{Baselines} 
To demonstrate the effectiveness of PPGNN, we compare it with the following baselines: \textit{NonPriv}, \textit{BASE}, and \textit{LPGNN}~\cite{sajadmanesh2021locally}.\footnote{Considering that the node feature protection method in~\cite{lin2022towards} aligns with LPGNN~\cite{sajadmanesh2021locally}, and the research context of~\cite{pei2023privacy} focuses on subgraph-level protection, a direct comparison is not necessary. Therefore, these works are not included as baselines in this study.} The \textit{NonPriv} employs clean node features for graph learning. In contrast, \textit{BASE} uses features perturbed by the LDP mechanism directly in graph learning, bypassing a separate noise calibration step. \textit{LPGNN}, recognized as the current state-of-the-art in locally private graph learning, perturbs node features using a multi-bit mechanism and calibrates noise using the KProp algorithm. Additionally, to evaluate the performance of PPM, we compare it with the 1-bit mechanism (\textit{1B})~\cite{ding2017collecting}, Laplace mechanism (\textit{LP})~\cite{yang2020local}, Gaussian mechanism (\textit{GM})~\cite{balle2018improving}, and multi-bit mechanism (\textit{MB})~\cite{sajadmanesh2021locally}.
\begin{table*}[t]
	\centering
	\small
	\sc
 \caption{Accuracy of different feature LDP mechanisms}
	\setlength{\tabcolsep}{4.2mm}\begin{tabular}{c|l|cccccc}
		\toprule
		$\epsilon$  & Mech. & Cora & CiteSeer & Pubmed &  LastFM & Facebook & Wikipedia       \\
		\midrule
		\multirow{5}*{$\epsilon=0.01$}  
        & 1b      & 25.9 $\pm$ 2.8  & 20.6 $\pm$ 1.2 &  52.5 $\pm$ 1.2  & 12.2 $\pm$ 3.3  & 30.3 $\pm$ 1.2  &    42.2 $\pm$ 2.5     \\
		& lp   & 23.3 $\pm$ 3.2   & 20.6 $\pm$ 1.3  & 58.1 $\pm$ 0.9 & 9.0 $\pm$ 1.8  &       31.5 $\pm$ 1.3   & 43.5 $\pm$ 2.1  \\
        & gm      &  31.3 $\pm$ 2.6 & 25.4 $\pm$ 1.8 &  59.2 $\pm$ 1.5  & 28.2 $\pm$ 2.1 &  44.7 $\pm$ 1.3  &    51.4 $\pm$ 2.5    \\
        & mb   & 55.1 $\pm$ 2.2   & 43.1 $\pm$ 1.7  & 66.2 $\pm$ 0.7 & 62.5 $\pm$ 1.9  &      76.7 $\pm$ 1.3 & 63.6 $\pm$ 2.3    \\
        & ppm  & \textbf{62.0 $\pm$ 2.1}  & \textbf{47.3 $\pm$ 1.8}   & \textbf{68.6 $\pm$ 1.1} & \textbf{66.4 $\pm$ 1.6}  &  \textbf{79.5 $\pm$ 1.4}    & \textbf{67.2 $\pm$ 2.6}     \\	
		\midrule
		\multirow{5}*{$\epsilon=0.1$}  
        & 1b      &  30.2 $\pm$ 1.2  & 22.5 $\pm$ 1.0 &  57.5 $\pm$ 2.3  & 18.8 $\pm$ 2.8  & 34.5 $\pm$ 0.9 &     47.1 $\pm$ 2.2     \\
		& lp       &  32.5 $\pm$ 0.9 & 21.0 $\pm$ 1.5 &  56.1 $\pm$ 2.1  & 18.3 $\pm$ 1.6  &  35.1 $\pm$ 1.3  &   49.2 $\pm$ 2.2       \\
        & gm      & 37.7 $\pm$ 1.2  &  27.8 $\pm$ 0.9 & 62.2 $\pm$ 1.8   & 41.2 $\pm$ 1.1  &  61.4 $\pm$ 0.8 &    62.6 $\pm$ 2.1     \\
        & mb      &  72.3 $\pm$ 1.6 & 57.7 $\pm$ 0.8 &  71.6 $\pm$ 1.9   &  74.2$\pm$ 1.5 &  86.5 $\pm$ 0.4 &   72.3 $\pm$ 1.8       \\
        & ppm    & \textbf{76.1 $\pm$ 0.8}  & \textbf{61.0 $\pm$ 0.8} &  \textbf{75.3 $\pm$ 2.2}  & \textbf{78.1 $\pm$ 0.9}  & \textbf{88.7 $\pm$ 0.3} &  \textbf{73.9 $\pm$ 2.5}      \\
      \midrule
		\multirow{5}*{$\epsilon=1.0$}  
        & 1b      & 42.8 $\pm$ 1.6  & 27.1 $\pm$ 1.4 & 62.4 $\pm$ 3.3   & 31.0 $\pm$ 3.6  &  63.2 $\pm$ 1.5 &  55.6 $\pm$ 1.7         \\
		& lp       &  39.7 $\pm$ 3.0 & 25.8 $\pm$ 2.1 &  62.7 $\pm$ 2.9  &  29.4 $\pm$ 3.6  &58.7 $\pm$ 2.2  & 58.4 $\pm$ 2.2         \\
        & gm      &  46.5 $\pm$ 2.1  &  31.2 $\pm$ 1.8  &  65.1 $\pm$ 1.4  &   49.1 $\pm$ 2.5 &  71.2 $\pm$ 1.4 &   70.2 $\pm$ 1.9      \\
        & mb      & 78.9 $\pm$ 1.2   & 63.2 $\pm$ 0.5 &  72.1 $\pm$ 2.4  & 81.0 $\pm$ 0.3  & 90.7 $\pm$ 0.1  &  74.1 $\pm$ 1.8        \\
        & ppm    & \textbf{79.1 $\pm$ 0.5}  & \textbf{63.3 $\pm$ 0.5} & \textbf{75.4 $\pm$ 2.1}   &   \textbf{81.2 $\pm$ 0.6} & \textbf{90.9 $\pm$ 0.2} &   \textbf{79.8 $\pm$ 1.9}       \\
		\bottomrule
	\end{tabular}
  \label{tab2}
\end{table*}
        
\subsubsection{Experiment setup} 
For each dataset, we randomly divide 50\% of the data samples as the training set, 25\% as the validation set and 25\% as the test set. {We select three GNN architectures as backbone models: \textbf{GCN}~\cite{DBLP:conf/iclr/KipfW17}, which aggregates neighborhood information via a symmetric normalized adjacency matrix; \textbf{GraphSAGE}~\cite{hamilton2017inductive}, which samples a fixed-size neighborhood and aggregates features using learnable aggregator functions; and \textbf{GAT}~\cite{velivckovic2018graph}, which introduces an attention mechanism to assign learned importance weights to different neighbors during aggregation.} All GNN models have two graph convolutional layers, each with a hidden dimension of size 16 and a SeLU activation function followed by dropout. We implement the GNN models in PyTorch using PyTorch-Geometric (PyG)\footnote{https://www.pyg.org}. All experiments are carried out on a machine running Ubuntu 20.04 LTS, equipped with two Intel\textsuperscript{\textregistered} Xeon\textsuperscript{\textregistered} Gold 6348 CPUs, 100GB RAM, and an NVIDIA\textsuperscript{\textregistered} A800 80GB GPU. We set $\mathcal{E}=\{\epsilon, 2\epsilon,\dots,2^h\epsilon\}$, where $\epsilon \in \{0.01, 1, 2, 3\}$. If not specified otherwise, $h$ is set to 5. To obtain the best hyperparameters, we
use grid search for selection: both learning rate and weight
decay are chosen from $\{10^{-4},10^{-3},10^{-2},10^{-1},0\}$, the dropout rate is chosen from $\{0.75,0.5,0.25,0\}$, and the allocation parameter $\delta$ is chosen from $\{0.1, 0.3, 0.5, 0.7, 0.9\}$. The optimal value of the FlexProp's step parameter $K$ for different $\epsilon \in \{0.01, 1, 2, 3\}$ is selected from the set $\{0, 2, 4, 8, 16\}$. We used the Adam optimizer~\cite{kingma2014adam} for all models. We utilize a random function to generate users’ privacy levels.

\subsubsection{Evaluation metrics}
We use node classification accuracy as a metric to assess the performance of PPGNN with different parameters. All models are trained for 500 epochs, and the best model is selected for testing based on validation loss. We measure the accuracy of the test set over 10 consecutive runs and report the average and 95\% confidence intervals calculated by bootstrapping over 1000 samples.

\subsection{Main Experimental Results}
Fig.~\ref{fig:3} demonstrates the accuracy of PPGNN with various baselines for all cases. In this experiment, the BASE method employs the MB~\cite{sajadmanesh2021locally} to perturb node features. As depicted in Fig.~\ref{fig:3}, PPGNN consistently outperforms both BASE and LPGNN, significantly narrowing the gap with NonPriv. Notably, in some cases, the accuracy of PPGNN closely approaches that of NonPriv. Specifically, as shown in Fig.~\ref{fig:3}(a), when $\epsilon=0.01$, LPGNN and PPGNN exhibit accuracy 24.5\% and 26.8\% higher than that of BASE, respectively. This not only underscores the effectiveness of the noise calibration strategies employed by LPGNN and PPGNN but also emphasizes that relying solely on the LDP mechanism, without adequate noise calibration, leads to diminished utility. Furthermore, as evidenced in Fig.~\ref{fig:3}(e) and (k), PPGNN's accuracy aligns closely with that of NonPriv. Collectively, these observations validate the robust efficacy of PPGNN.

\begin{figure}
	\centering
	\begin{tabular}{cc}
		\multicolumn{2}{c}{\textbf{}} \\ \vspace{-0.5em}  \includegraphics[width=0.45\linewidth]{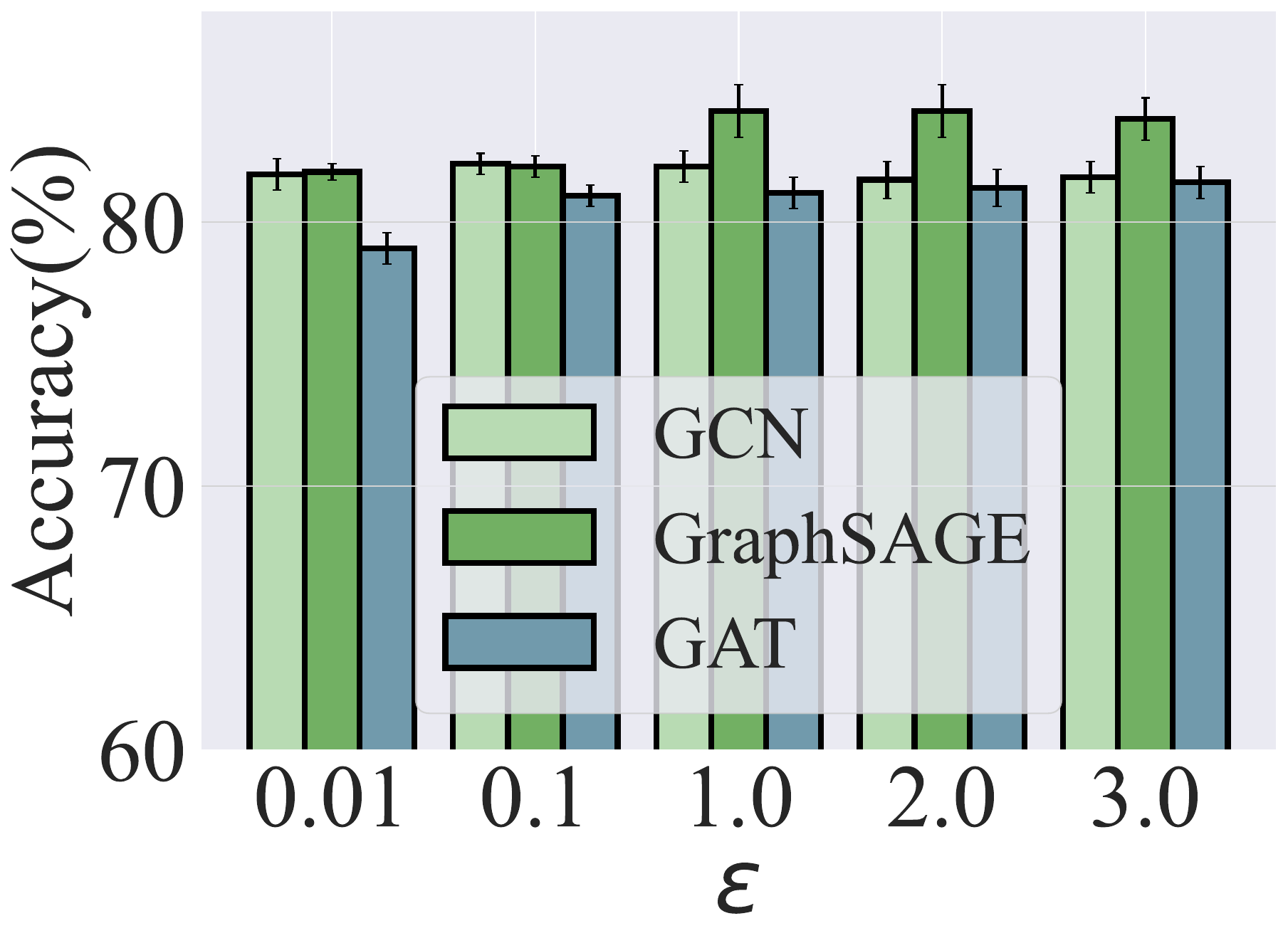} & 
	\includegraphics[width=0.45\linewidth]{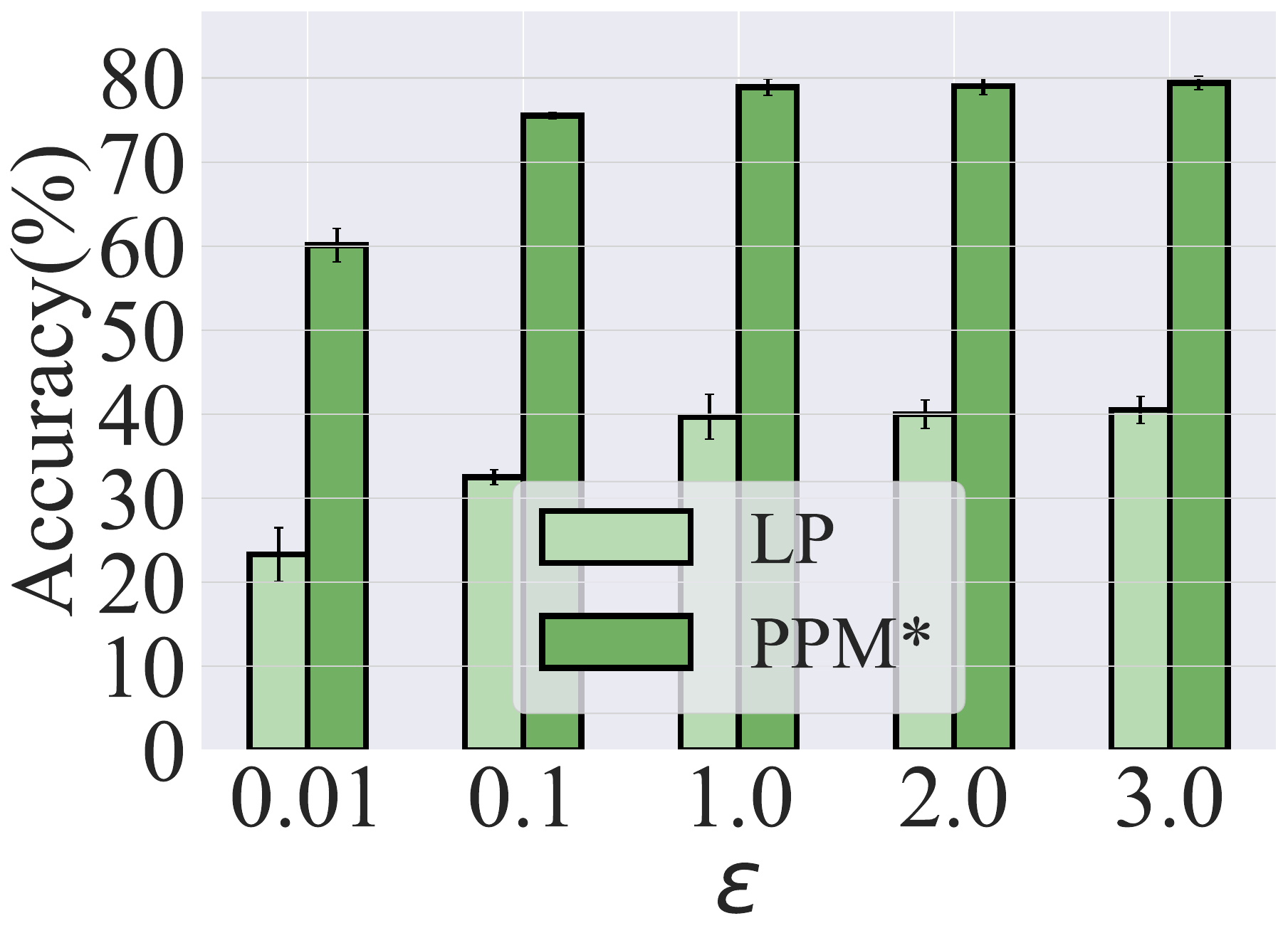} 
	\end{tabular}
    \vspace{-0.5em}
	 \caption{\textbf{Left:} Comparison of the performance of PPGNN under different GNN models. \textbf{Right:} Performance of PPM* (LP integrated) vs. vanilla LP under different $\epsilon$ values, demonstrating utility gains from integrating LP into MLR.}
     \vspace{-1em}
	\label{fig:new2}
\end{figure}
To further understand the impact of different GNN architectures under personalized LDP settings, we compare the performance of PPGNN when instantiated with GCN~\cite{DBLP:conf/iclr/KipfW17}, GraphSAGE~\cite{hamilton2017inductive}, and GAT~\cite{velivckovic2018graph} backbones. As shown in the left panel of Fig.~\ref{fig:new2}, all three models benefit from the proposed framework, but their sensitivity to noise varies. Notably, GAT tends to underperform GCN and GraphSAGE when the $\epsilon$ is small (\textit{e.g.}, $0.01$). We attribute this to GAT’s attention mechanism, which dynamically learns neighbor importance scores based on feature similarity. When input features are heavily perturbed, this attention computation becomes unstable, resulting in greater performance degradation. In contrast, GCN and GraphSAGE adopt more stable aggregation strategies, making them less sensitive to noise introduced by local perturbations. However, as $\epsilon$ increases (i.e., less noise), this gap gradually narrows, and the three models converge in performance.

\subsection{Robustness Under Extreme Privacy Distributions}

To further evaluate the adaptability of PPGNN in realistic settings, we simulate three extreme user privacy level distributions on the Cora dataset with GCN: \ding{172} \textit{Strict Privacy}: all users adopt the lowest privacy budget $\epsilon$; \ding{173} \textit{Relaxed Privacy}: all users adopt the highest budget $2^h\epsilon$; \ding{174} \textit{Bimodal}: half of the users adopt $\epsilon$, the other half $2^h\epsilon$.
As shown in the right panel of Fig.~\ref{fig:new1}, PPGNN consistently outperforms BASE in all cases. Under strict privacy (Case 1), PPGNN and LPGNN achieve similar performance, while both clearly surpass BASE. When privacy becomes more relaxed (Cases 2 and 3), PPGNN benefits more from its personalized design and achieves the best utility. These results confirm that PPGNN remains effective and robust under diverse privacy-level distributions.

\subsection{Ablation Study}
\subsubsection{Analyzing the Performance of PPM}
In Table~\ref{tab2}, we compare the performance of our Personalized Perturbation Mechanism (PPM) with the 1-bit mechanism (1B), the Laplace mechanism (LP), the Gaussian mechanism (GM), and the Multi-bit mechanism (MB). 1B~\cite{ding2017collecting}, GM~\cite{balle2018improving}, and LP~\cite{yang2020local} are widely used LDP mechanisms for locally perturbing single-value and multidimensional data. In this experiment, 1B, GM, and LP perturb each bit of the node features. MB is the LDP mechanism employed in LPGNN and currently stands as the most effective LDP algorithm for node features. According to the results in Table~\ref{tab2}, the accuracy of PPM surpasses that of other mechanisms across almost all cases, especially under smaller $\epsilon$. Specifically, when $\epsilon=0.01$ and the dataset is Cora, PPM achieves an accuracy that is approximately 7\%, higher accuracy on average compared to the second-best mechanism. This underscores the efficacy of PPM. To further demonstrate the effectiveness and generality of our MLR-based design, we conduct a more detailed analysis in Fig.~\ref{fig:new2} Right, where we compare PPM* (i.e., LP~\cite{yang2020local} integrated into the MLR) against the vanilla LP across different $\epsilon$. As shown in the figure, PPM* consistently outperforms LP, especially under tighter budgets. This verifies the utility gains and broad applicability of MLR when combined with various perturbation strategies.

\subsubsection{Analyzing the Performance of FlexProp}

In Fig.~\ref{fig:66} and Fig.~\ref{fig:77}, we contrast our FlexProp with another noise calibration algorithm for node features, KProp, which is the noise calibration algorithm used in LPGNN~\cite{sajadmanesh2021locally}. Based on the results from Fig.~\ref{fig:66} and Fig.~\ref{fig:77}, the accuracy of FlexProp surpasses that of KProp across all cases, underscoring the effectiveness of FlexProp. Additionally, we observe that FlexProp exhibits a more significant improvement with a larger privacy budget $\epsilon$. This is attributed to the reduced noise injection due to the higher $\epsilon$, allowing FlexProp to demonstrate its advantages more prominently, particularly in scenarios with higher node feature variability and more complex graph structures. 
\begin{figure}
	\centering
	\begin{tabular}{cc}
		\multicolumn{2}{c}{\textbf{}} \\   \includegraphics[width=0.45\linewidth]{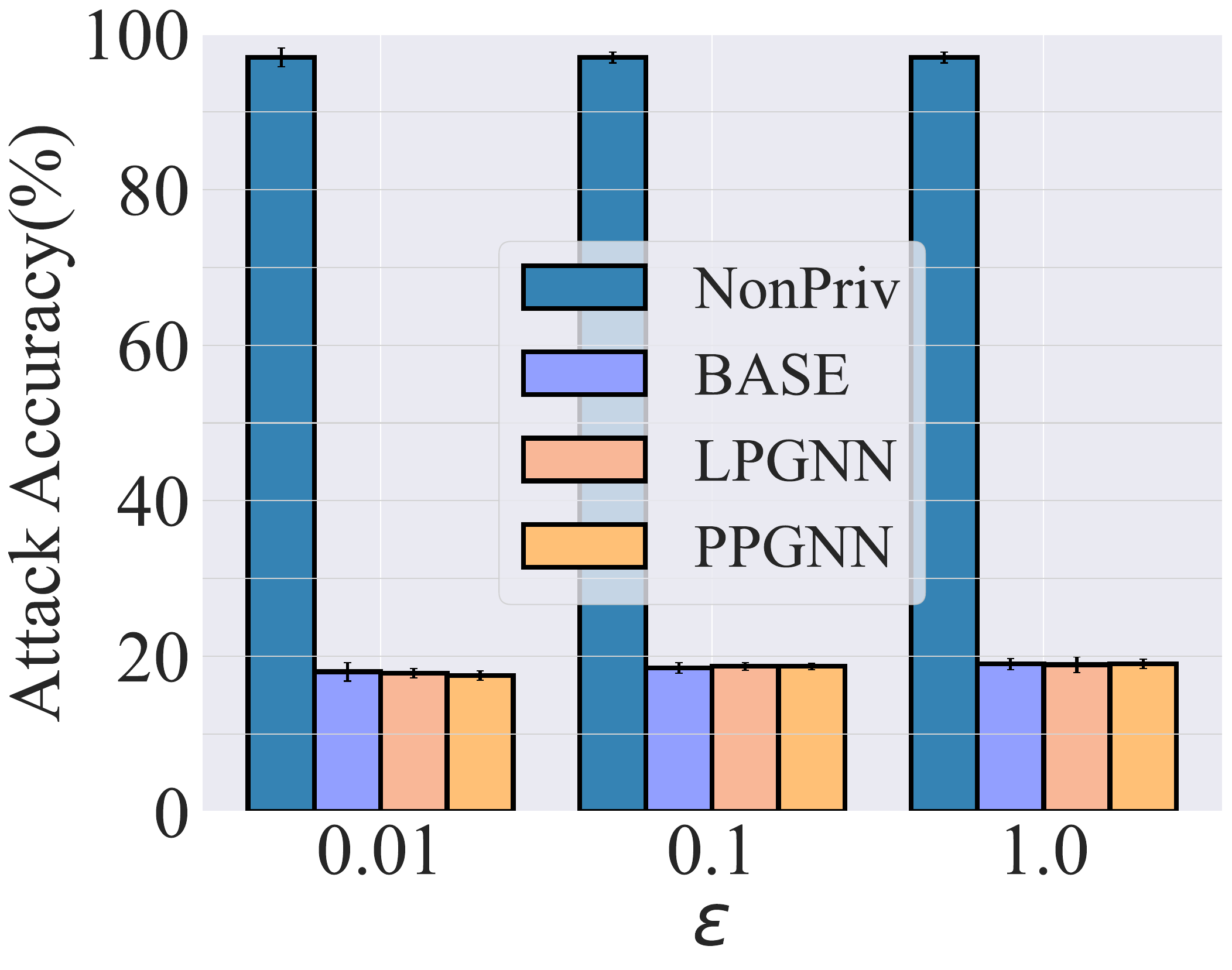} & 
\includegraphics[width=0.45\linewidth]{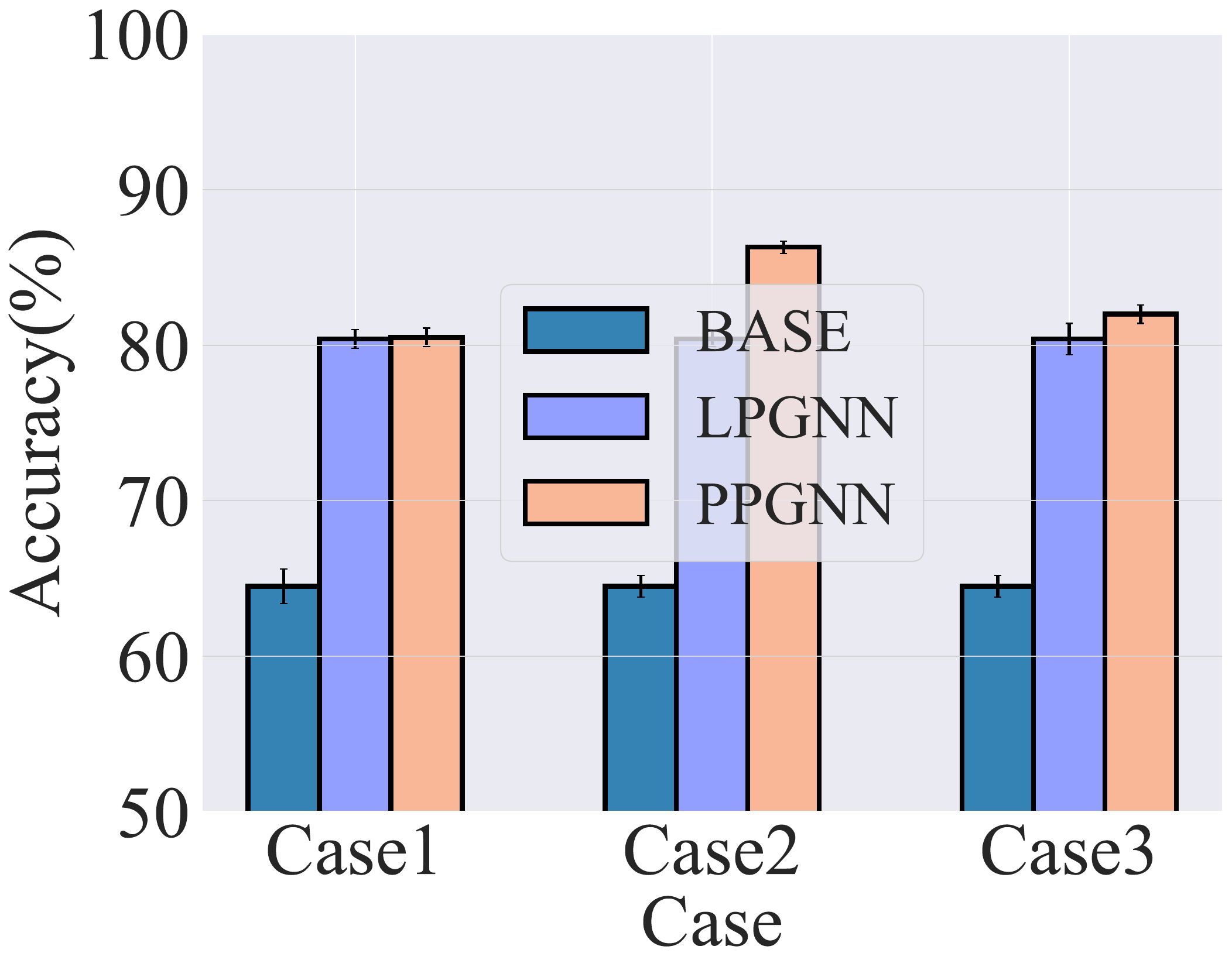} 
	\end{tabular}
    \vspace{-1em}
	 \caption{\textbf{Left:} Attack accuracy of attribute inference under different privacy budgets ($\epsilon$) on the Cora dataset. LDP-based methods (BASE, LPGNN, and PPGNN) effectively mitigate privacy leakage compared to NonPriv, while PPGNN preserves strong utility without compromising privacy. \textbf{Right:} Accuracy of PPGNN under three extreme privacy distributions on the Cora dataset using GCN. PPGNN demonstrates strong adaptability across different user privacy configurations, validating its personalized design.}
     \vspace{-1em}
	\label{fig:new1}
\end{figure}

\begin{figure}
	\centering
	\begin{tabular}{cc}
		\multicolumn{2}{c}{\textbf{}} \\   \includegraphics[width=0.45\linewidth]{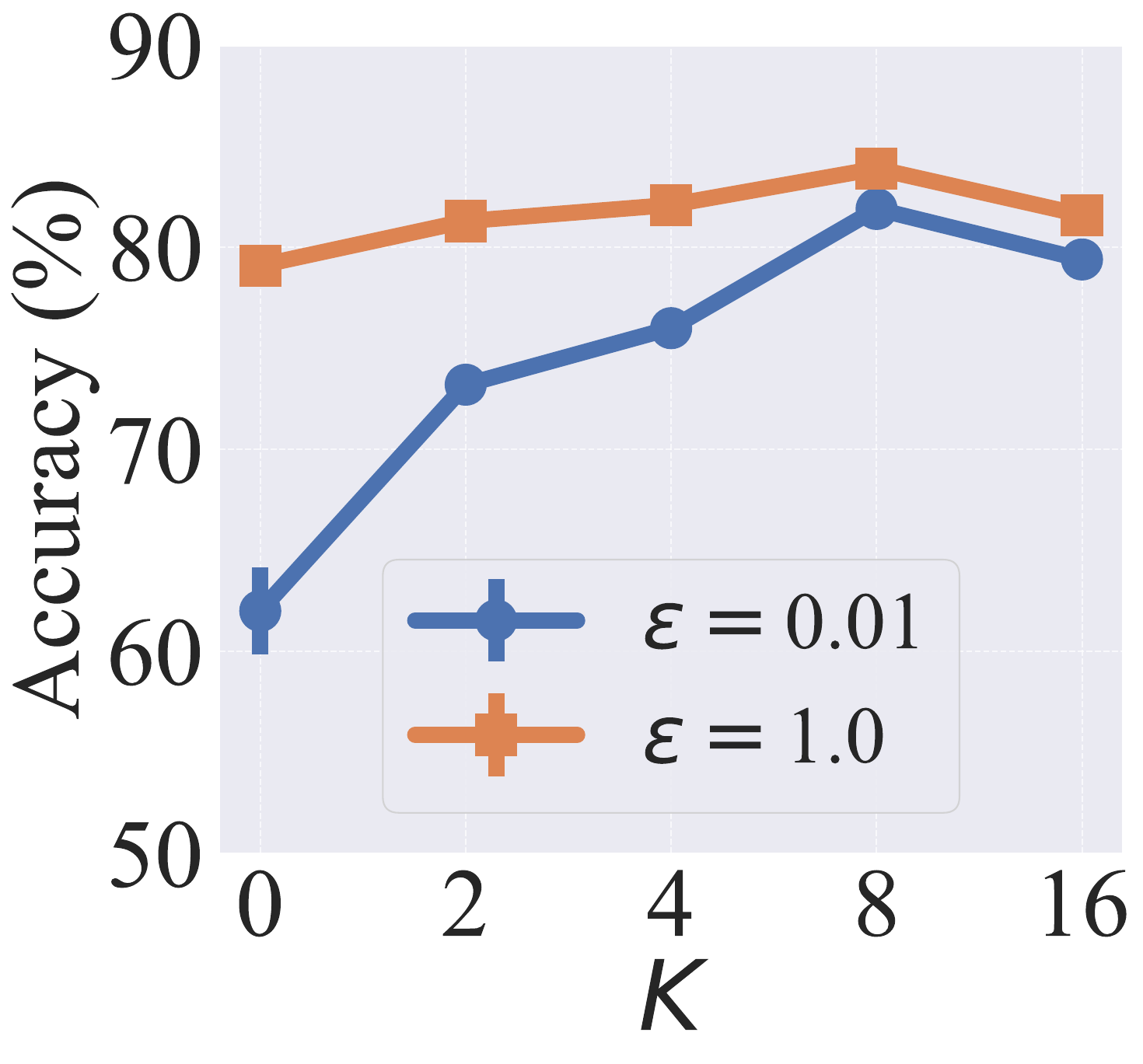} & 
	\includegraphics[width=0.45\linewidth]{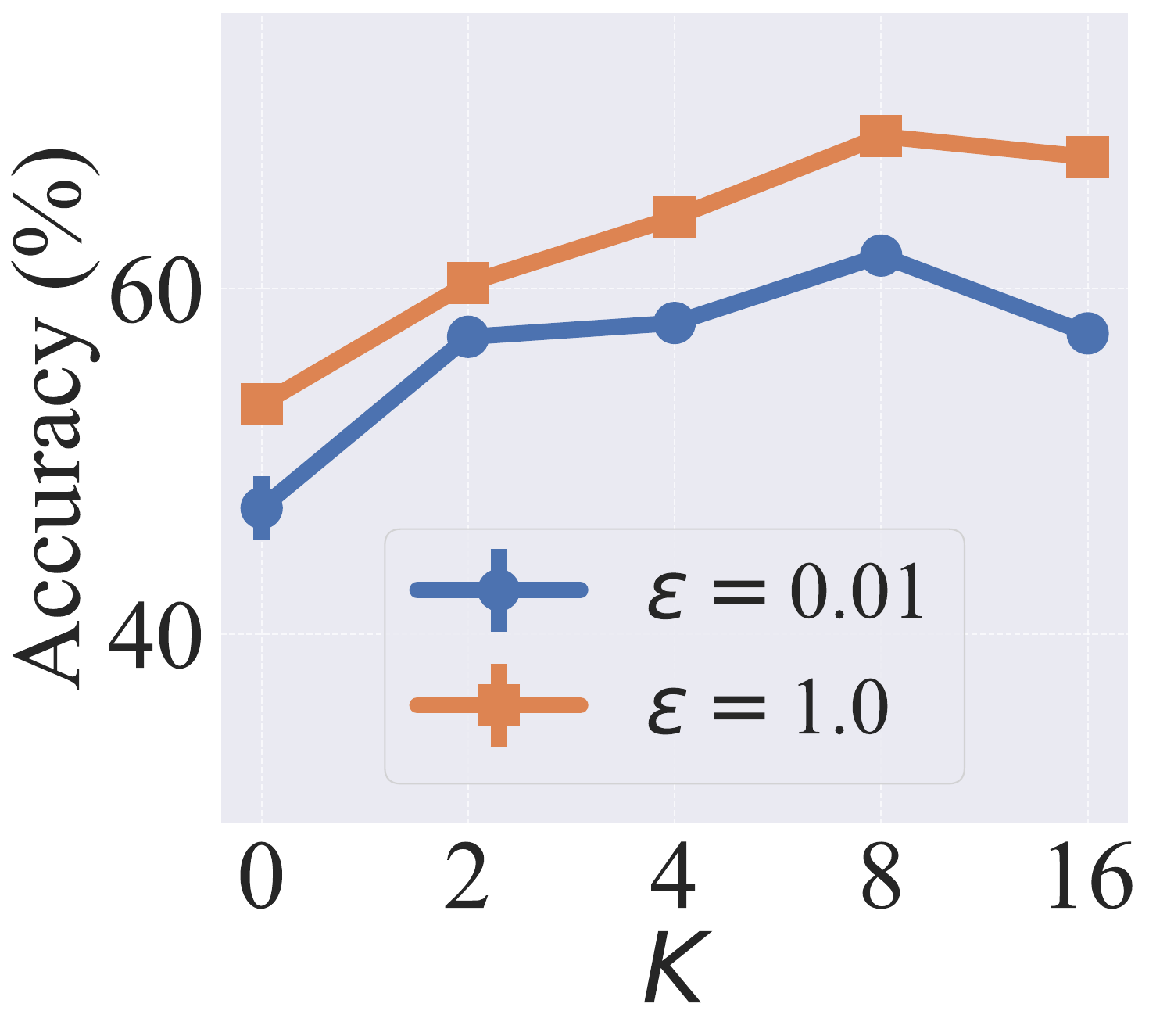} 
    \\(a) Cora & (b) CiteSeer  \\
	\end{tabular}
	 \caption{Accuracy of PPGNN across varied parameter $K$.}
	\label{fig:kkk}
\end{figure}

\subsection{Empirical Privacy Attack Defense}
To empirically assess the privacy protection of PPGNN, we conduct attribute inference attacks (AIA)~\cite{chenunderstanding,meng2023devil}. In this setting, an attacker observes the perturbed features of a target node’s neighbors and infers the target’s sensitive attribute via majority voting, leveraging homophily in the graph. We evaluate four methods, NonPriv, BASE, LPGNN, and PPGNN, on the Cora dataset under varying privacy budgets $\epsilon \in \{0.01, 0.1, 1.0\}$. As shown in the left panel of Fig.~\ref{fig:new1}, NonPriv suffers from high attack accuracy ($>90\%$), indicating severe privacy leakage. In contrast, all LDP-based methods achieve significantly lower attack accuracy, approaching random guessing, validating their strong defense against attribute inference. Notably, PPGNN exhibits similar protection to other LDP methods, as expected under the same privacy guarantees. Its key advantage lies in preserving comparable privacy while achieving superior utility, as demonstrated in earlier sections.

\subsection{Parameter Analysis}
\subsubsection{On the effects of $h$}
Fig.~\ref{fig:5}(a) displays the accuracy of PPGNN based on the GraphSAGE as the GNN backbone model and the Cora dataset, considering privacy budget $\epsilon\in\{0.01, 1.0\}$ and privacy level parameter $h\in\{1, 2, 3, 4, 5\}$. From Fig.~\ref{fig:5}(a), it is evident that as $h$ increases, the accuracy of PPGNN consistently improves for both $\epsilon=0.01$ and $\epsilon=1.0$. This trend occurs because as $h$ grows, users with lower privacy requirements are allocated a larger privacy budget, contributing to the enhancement of PPGNN's accuracy.

\begin{figure}
  \centering
  \vspace{-1.5em}
  \begin{tabular}{ccc}
		\multicolumn{3}{c}{\textbf{}} \\     \hspace{-1ex}\includegraphics[width=0.3\linewidth]{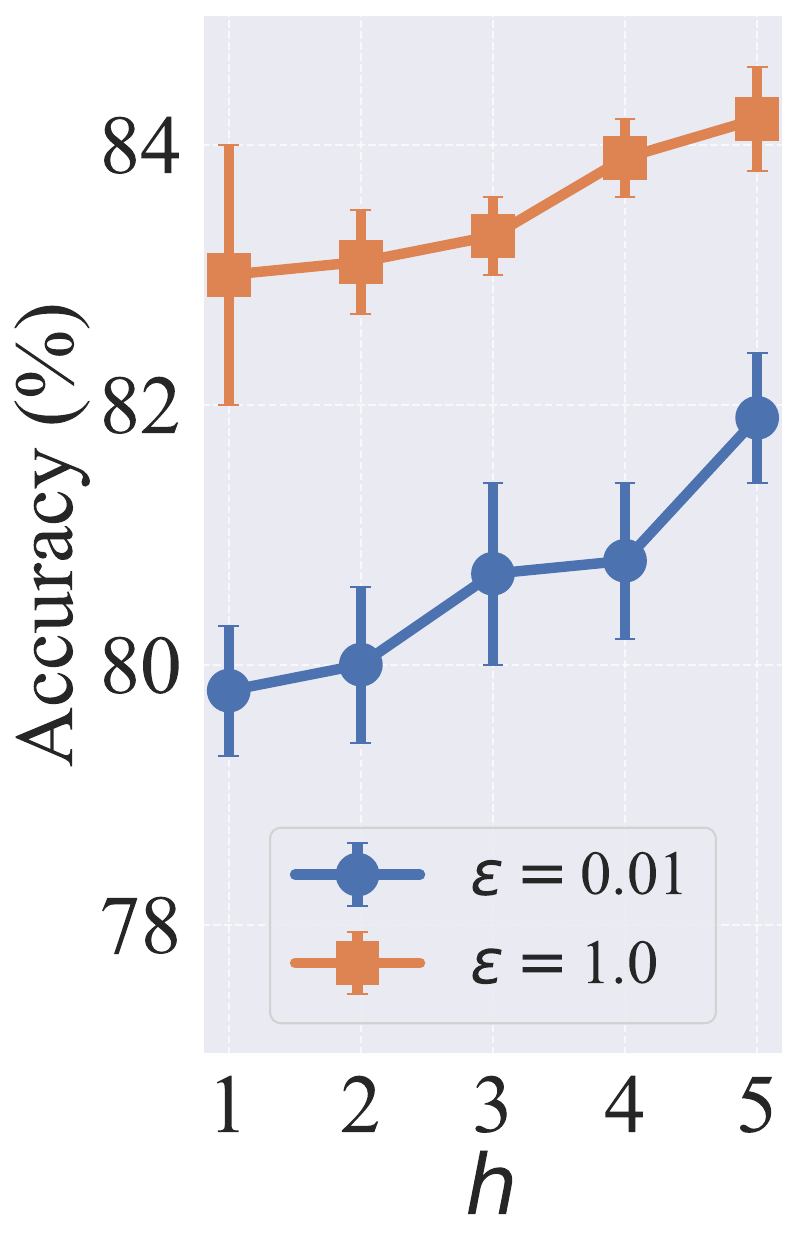} & 
	\includegraphics[width=0.3\linewidth]{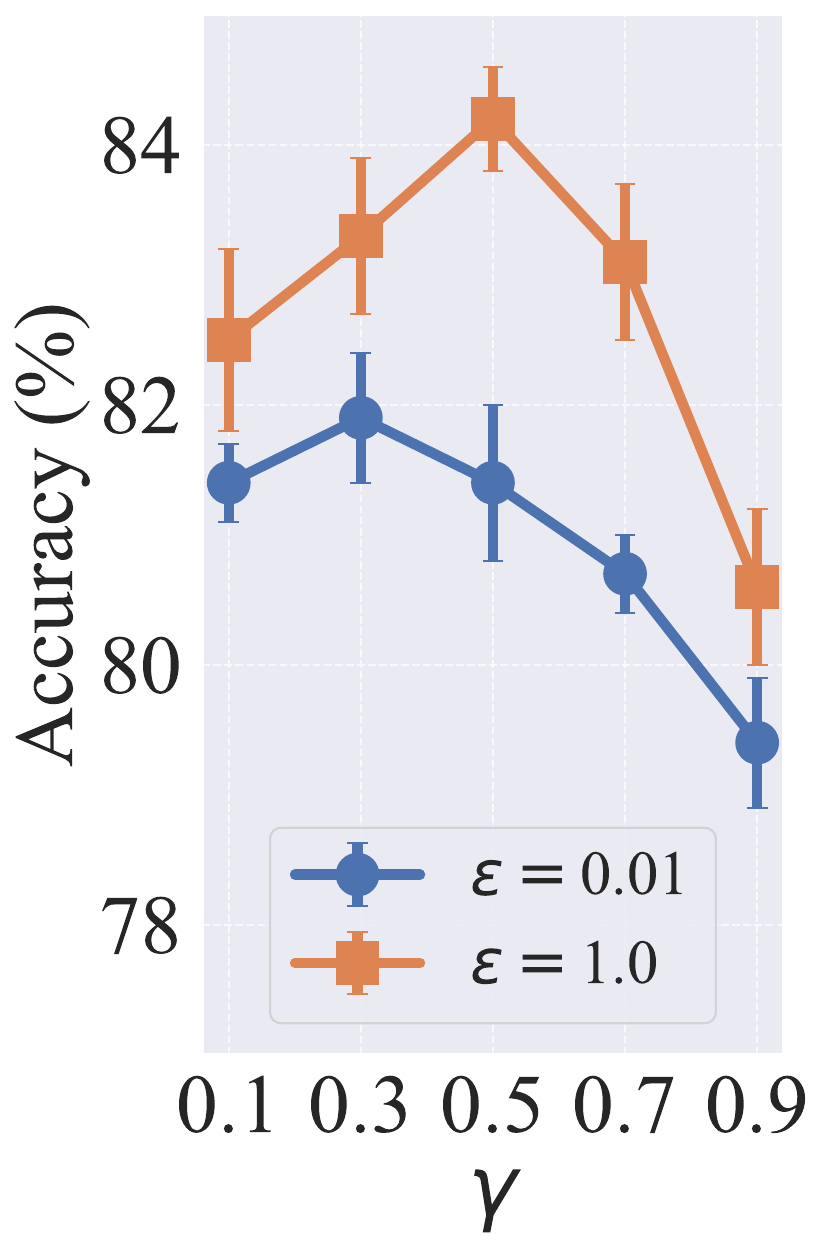}  & 
	\includegraphics[width=0.3\linewidth]{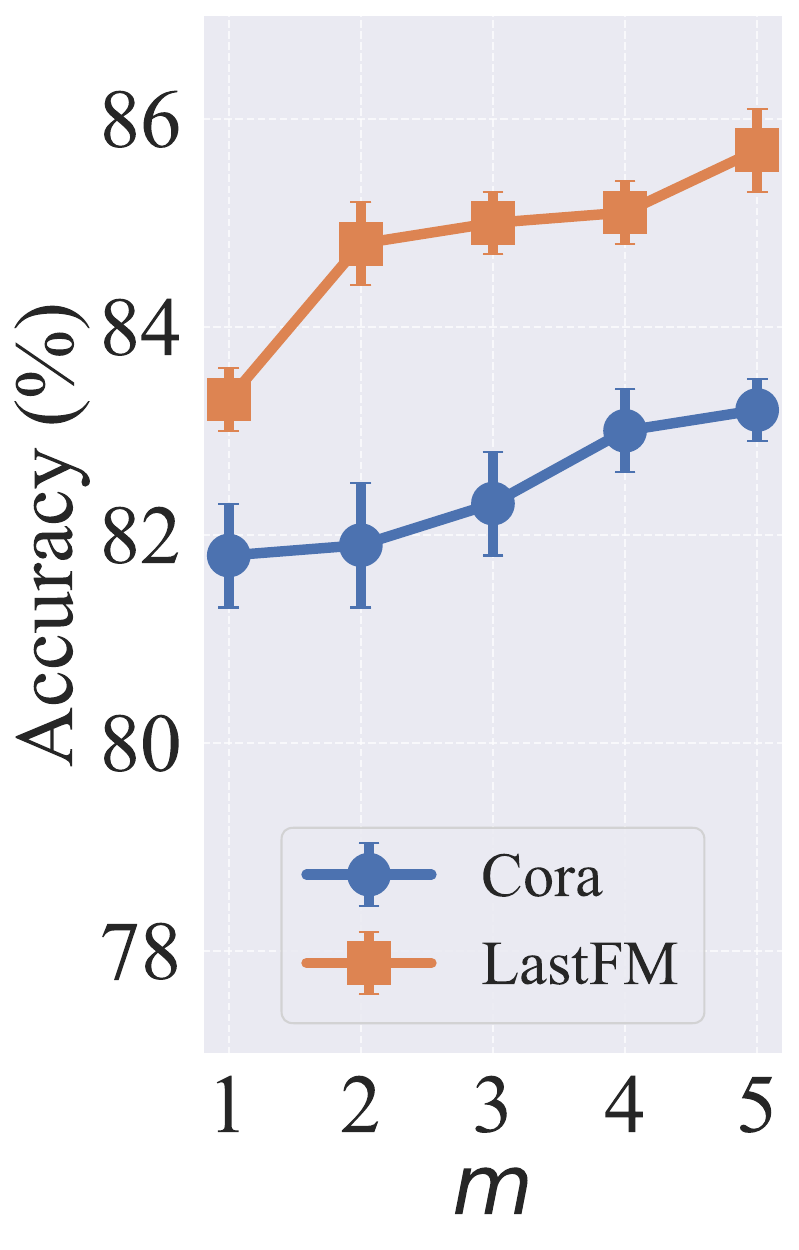} 
    \\(a) & (b) & (c) \\
	\end{tabular}
    
  \caption{(a): Accuracy of PPGNN across varied parameter $h$. (b): Accuracy of PPGNN across varied $\gamma$. (c): Accuracy of PPGNN across varied $m$.}
  \label{fig:5}
\end{figure}

\begin{figure*}[h!]
	\centering
	\begin{tabular}{cccccc}
		\multicolumn{6}{c}{\textbf{}} \\
   \vspace{-1em}\includegraphics[width=0.14\linewidth]{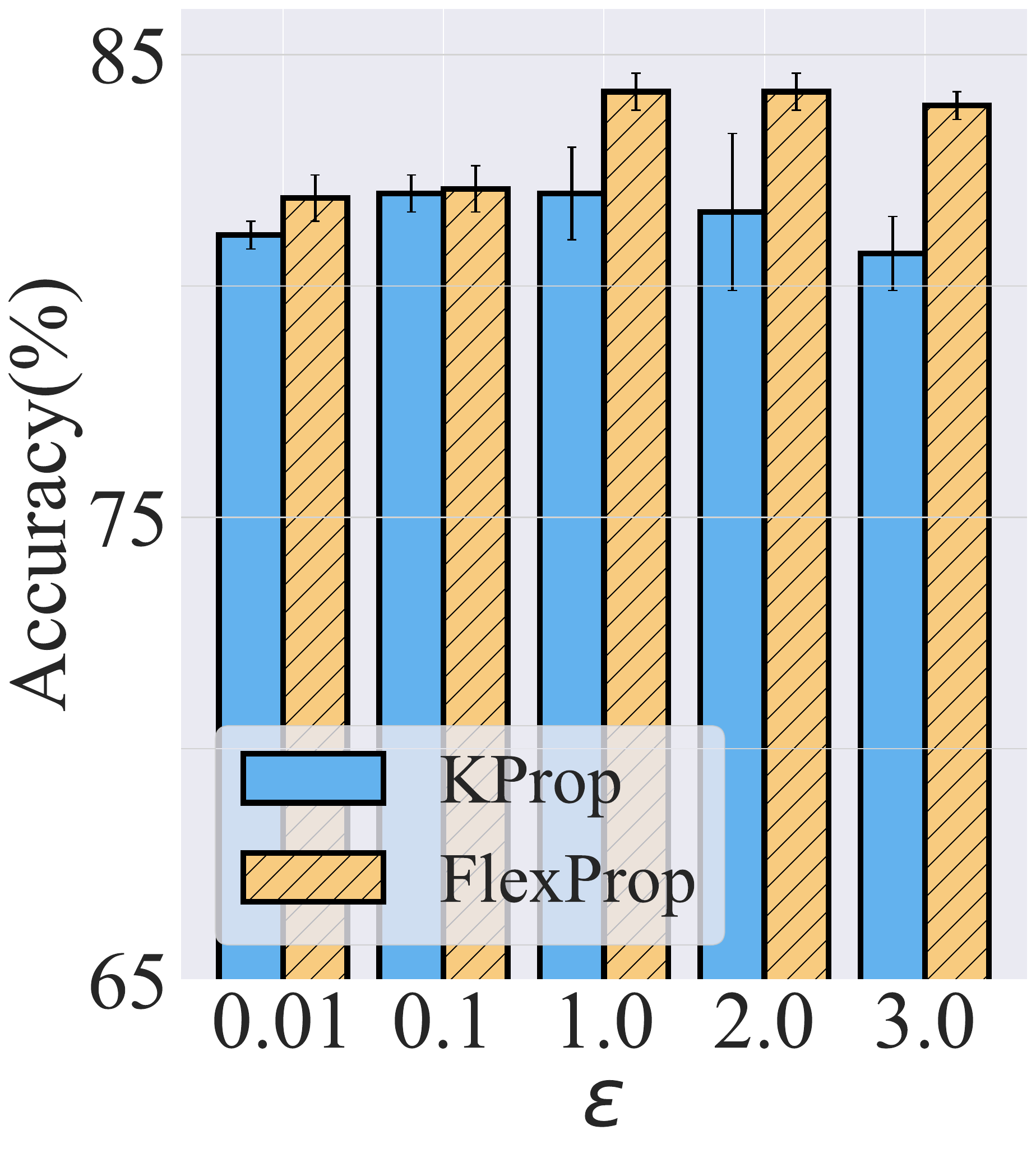} & 
	\includegraphics[width=0.14\linewidth]{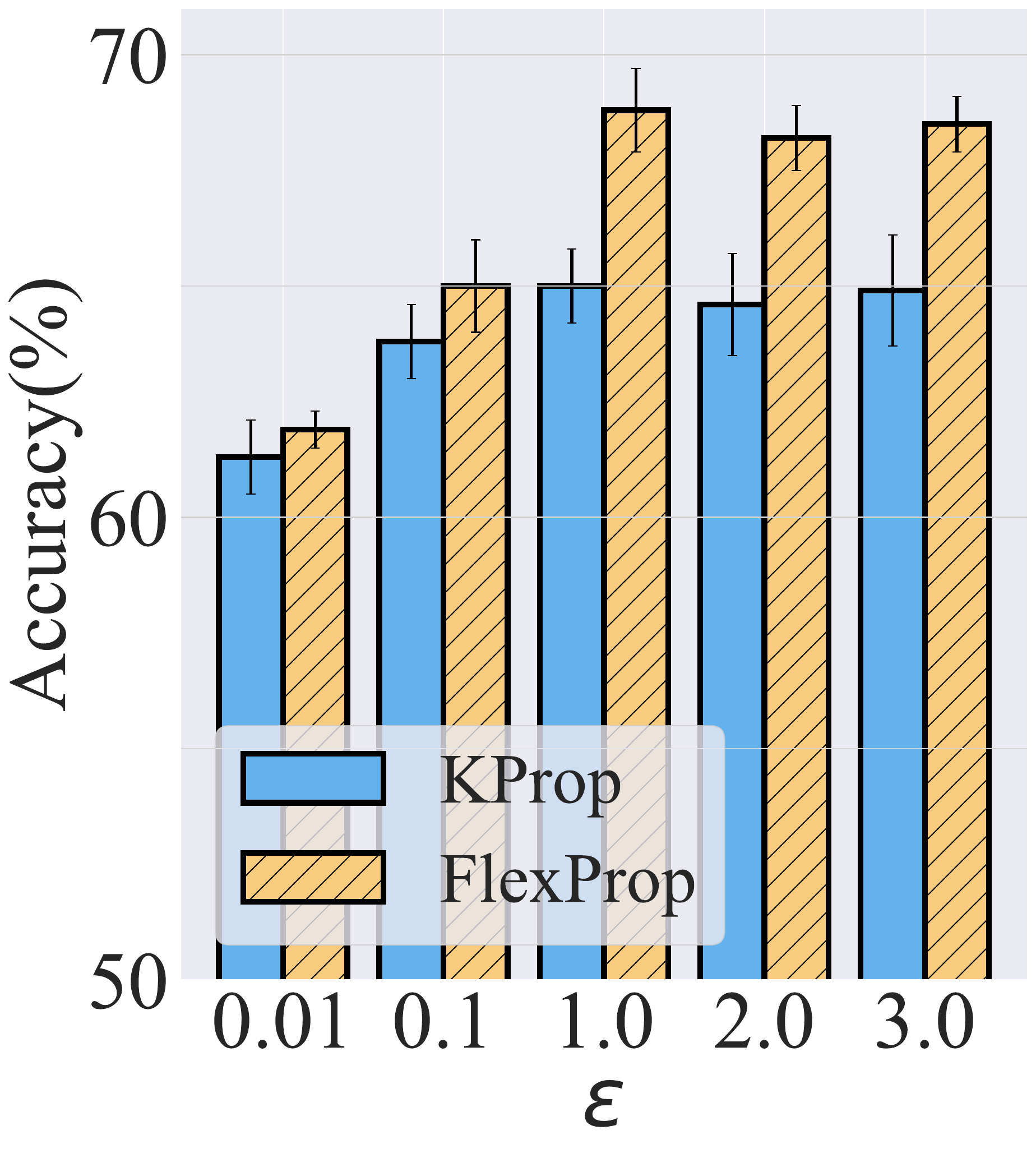} & 
    \includegraphics[width=0.14\linewidth]{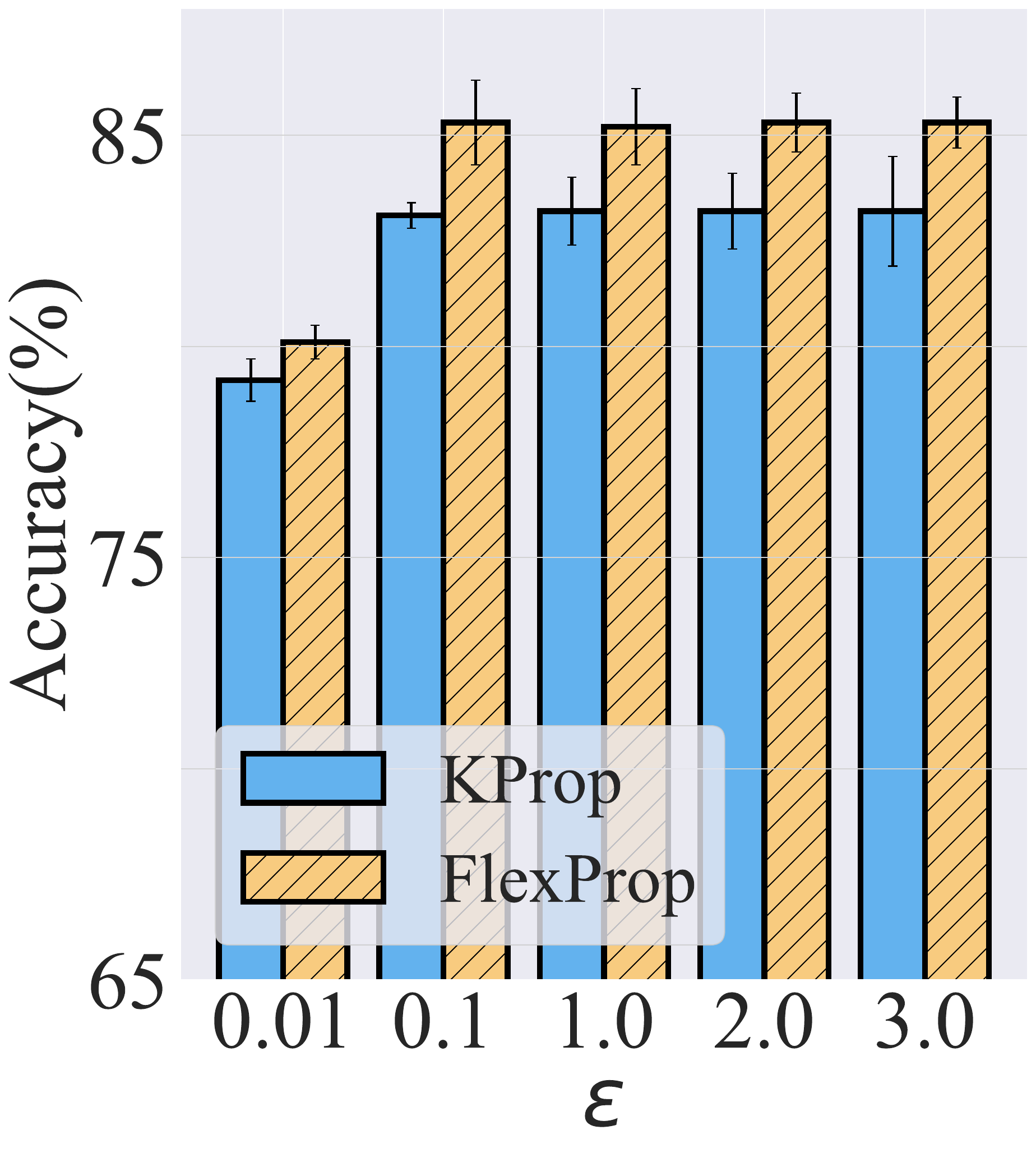} & 
	\includegraphics[width=0.14\linewidth]{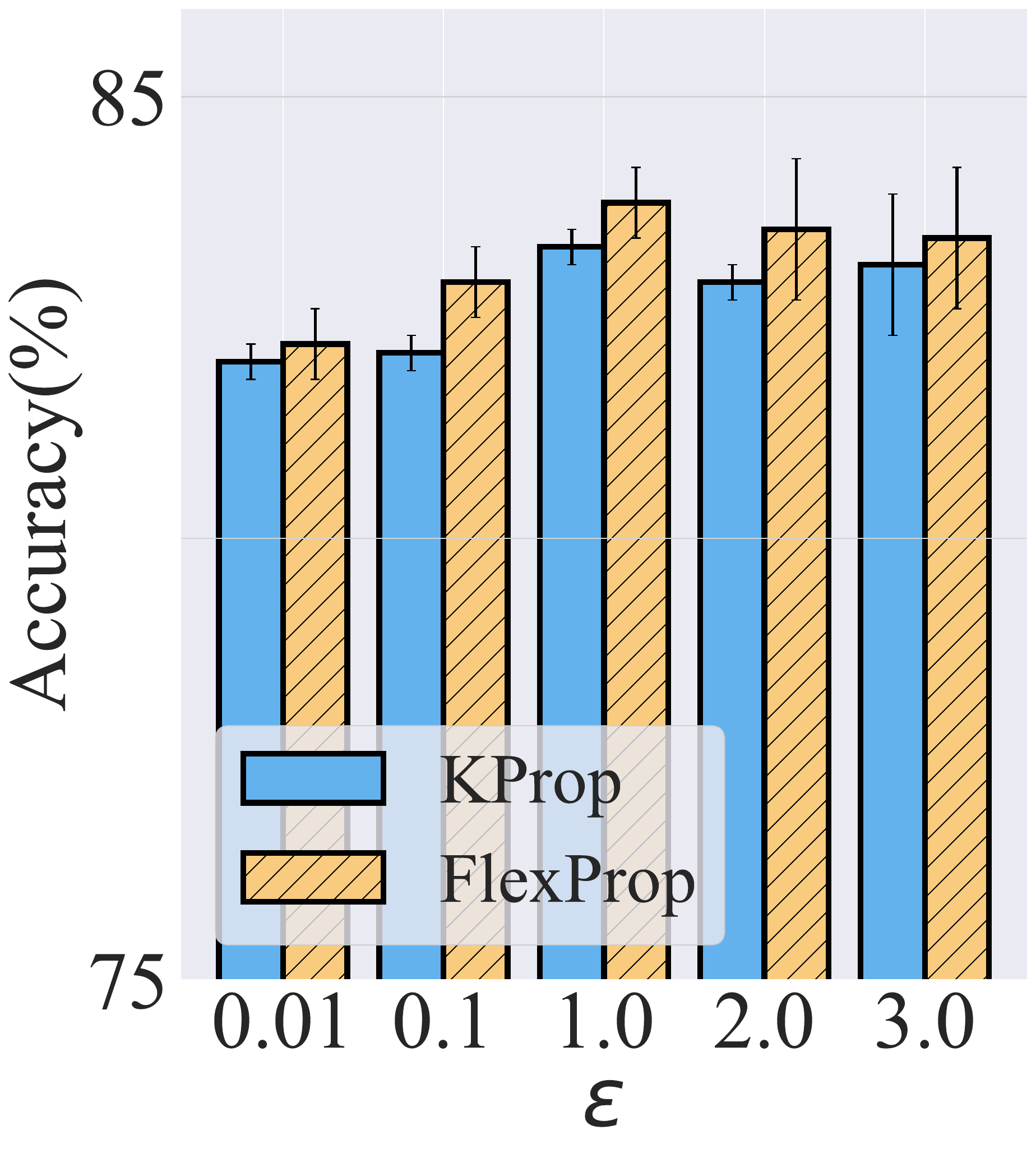} & 
	\includegraphics[width=0.14\linewidth]{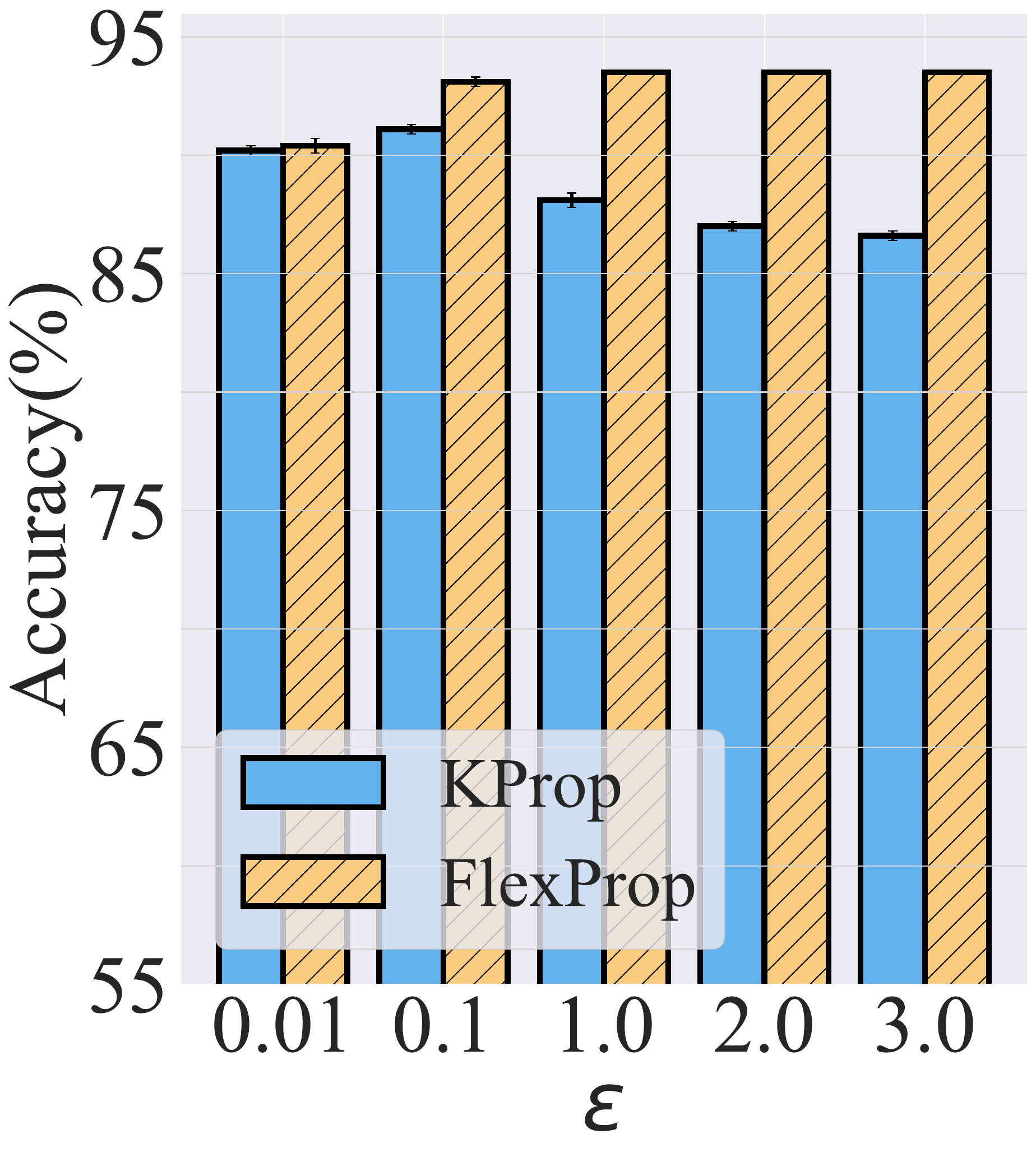} &
    \includegraphics[width=0.14\linewidth]{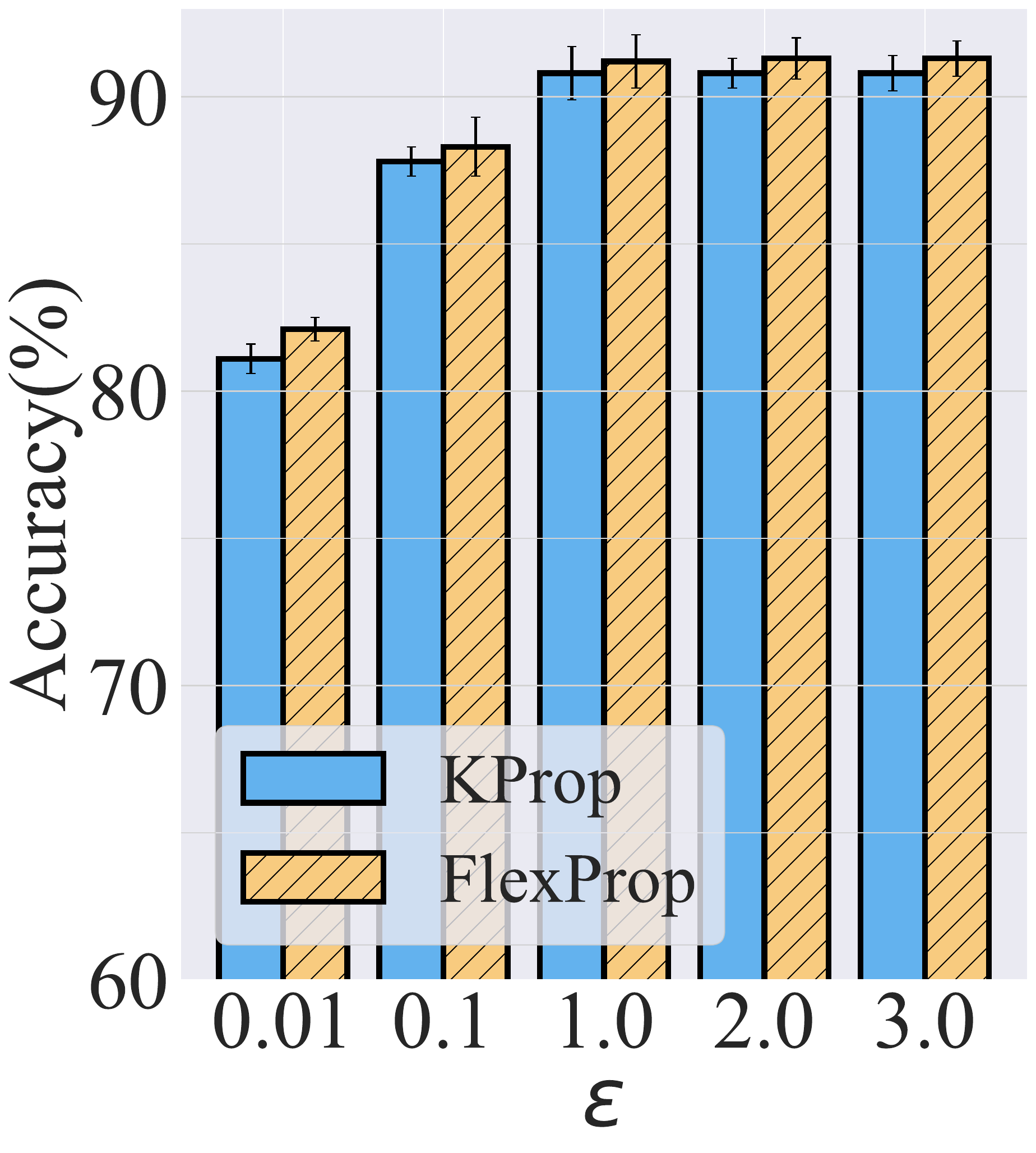}  \vspace{0.5em}\\
(a) Cora  & (b) CiteSeer  &(c) Pubmed  & (d) LastFM & (e) Facebook & (f) Wikipedia \\

	\end{tabular}
	\caption{Effect of FlexProp vs. KProp on graph learning performance across various datasets using the GraphSAGE model.}
	\label{fig:66}
    \vspace{-1em}
\end{figure*}

\begin{figure*}[h!]
	\centering
	\begin{tabular}{cccccc}
  \multicolumn{6}{c}{\textbf{}} \\
   \includegraphics[width=0.14\linewidth]{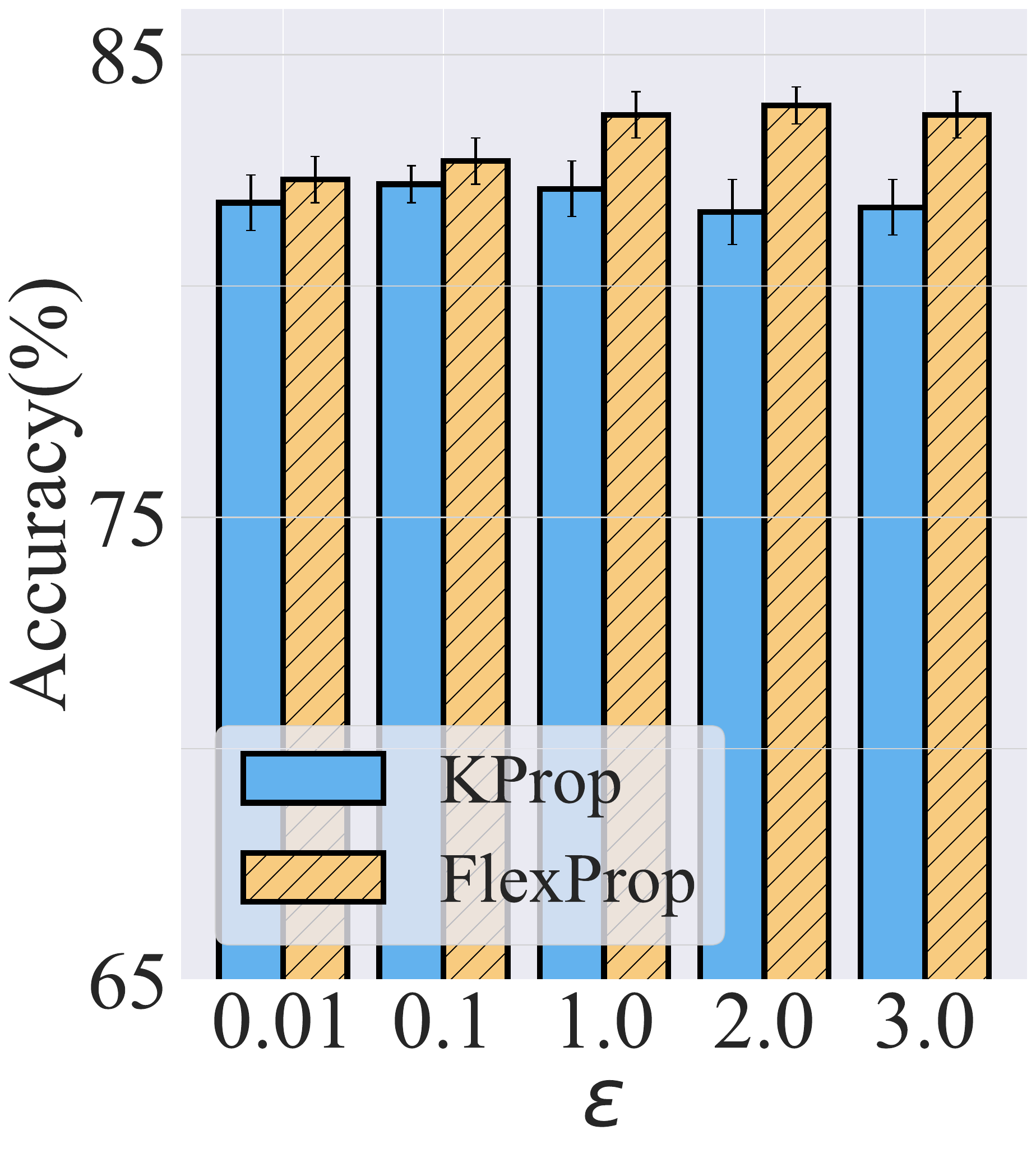} & 
   \includegraphics[width=0.14\linewidth]{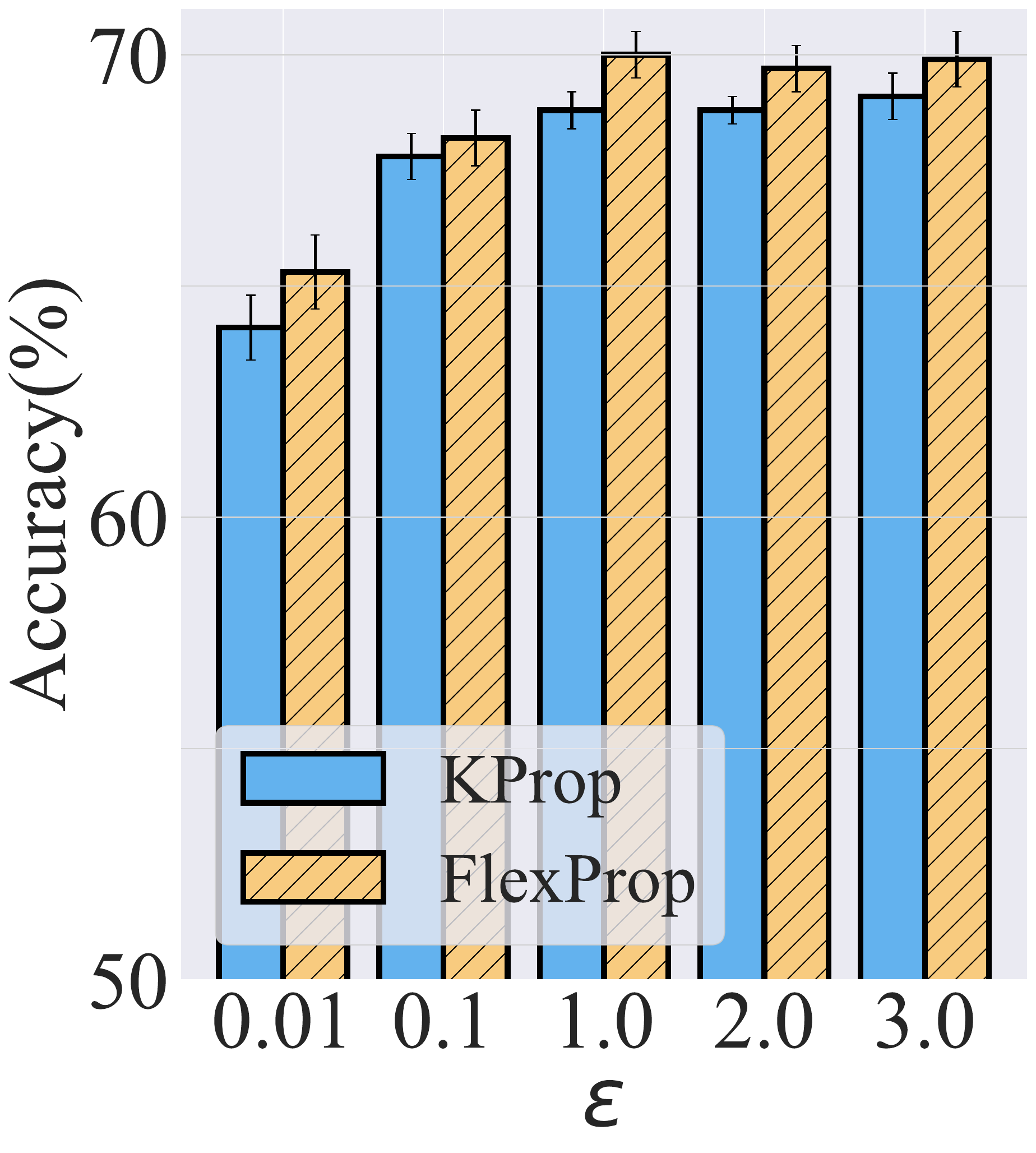} &
	\includegraphics[width=0.14\linewidth]{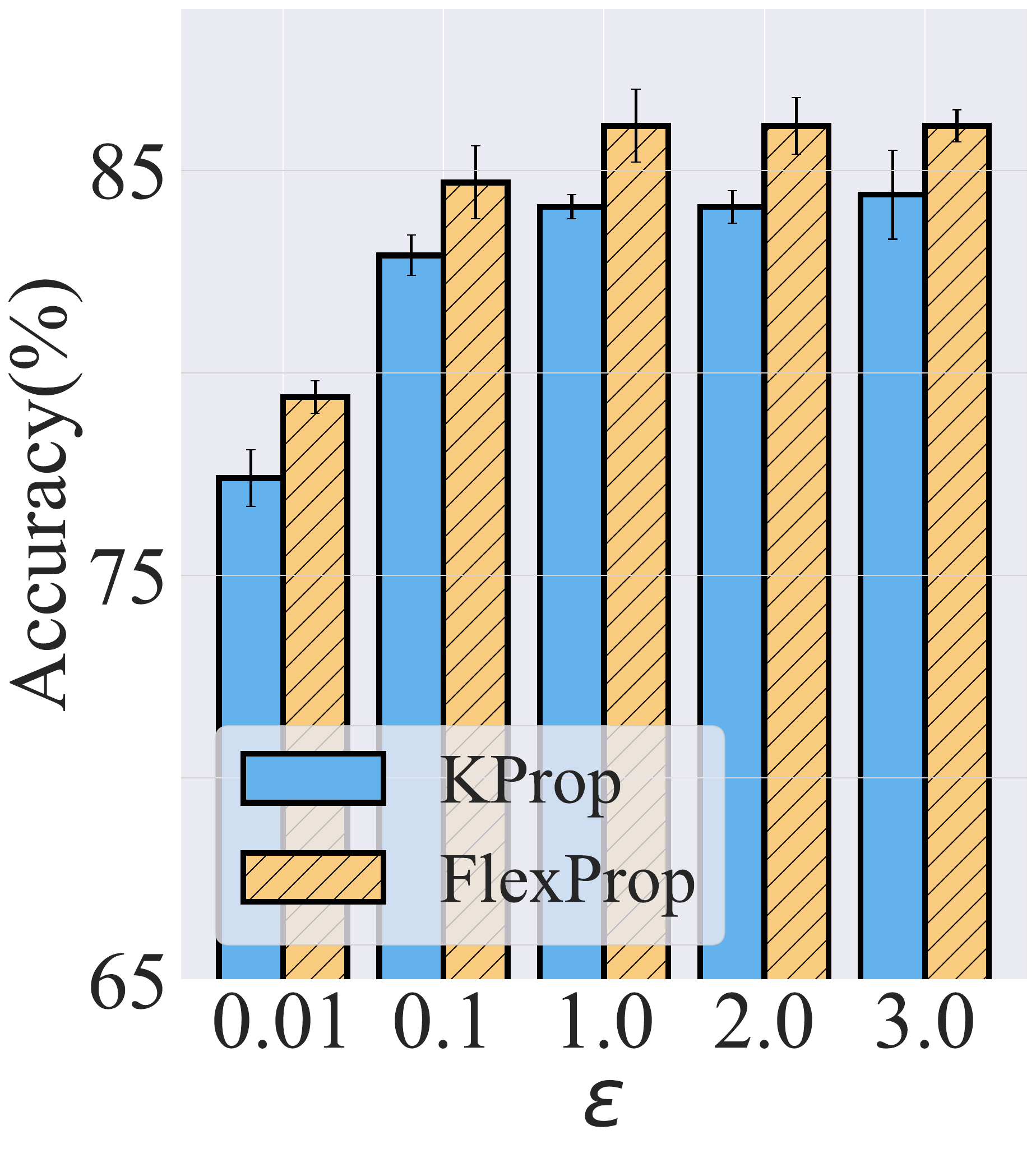} & 
	\includegraphics[width=0.14\linewidth]{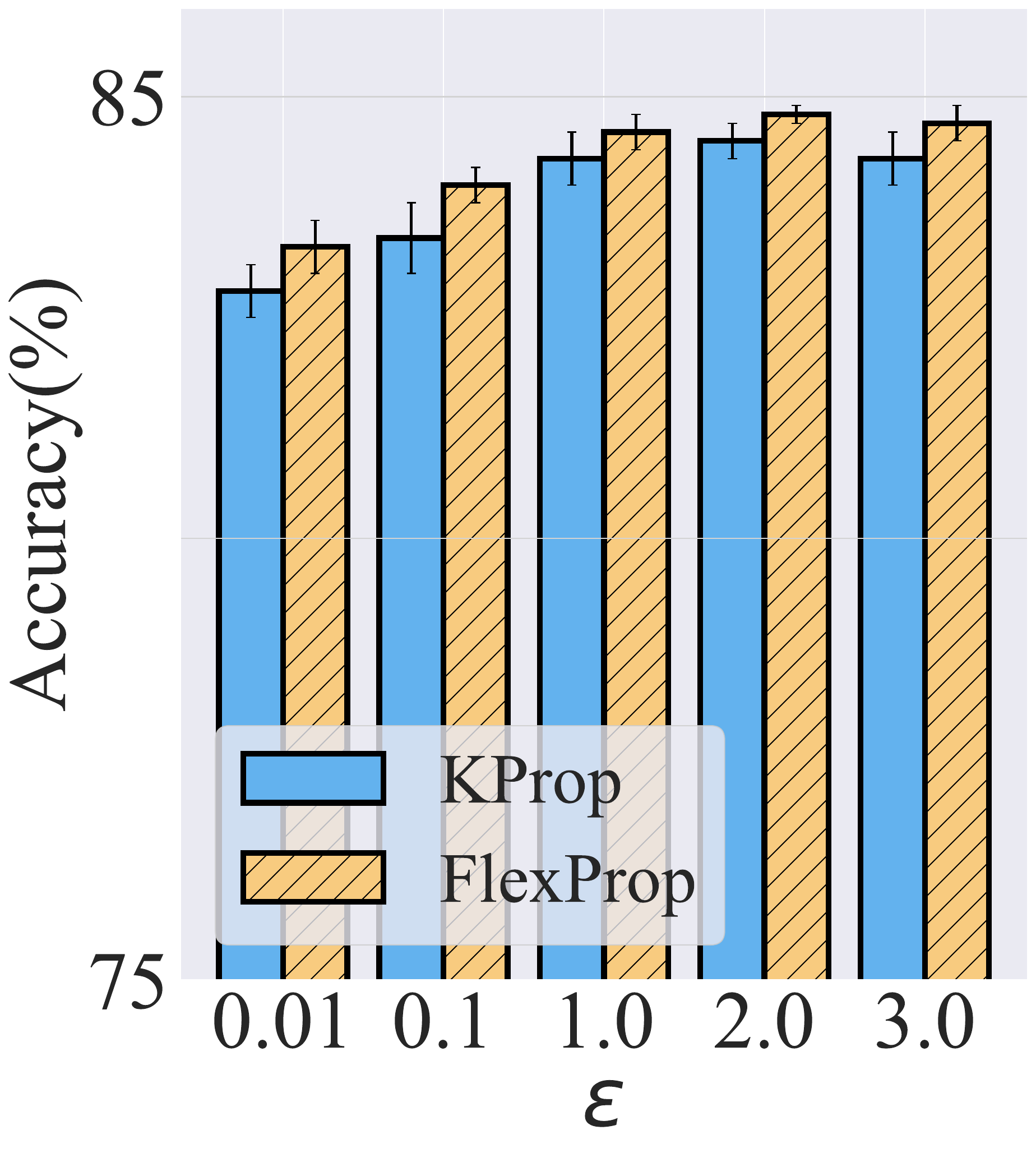} & 
	\includegraphics[width=0.14\linewidth]{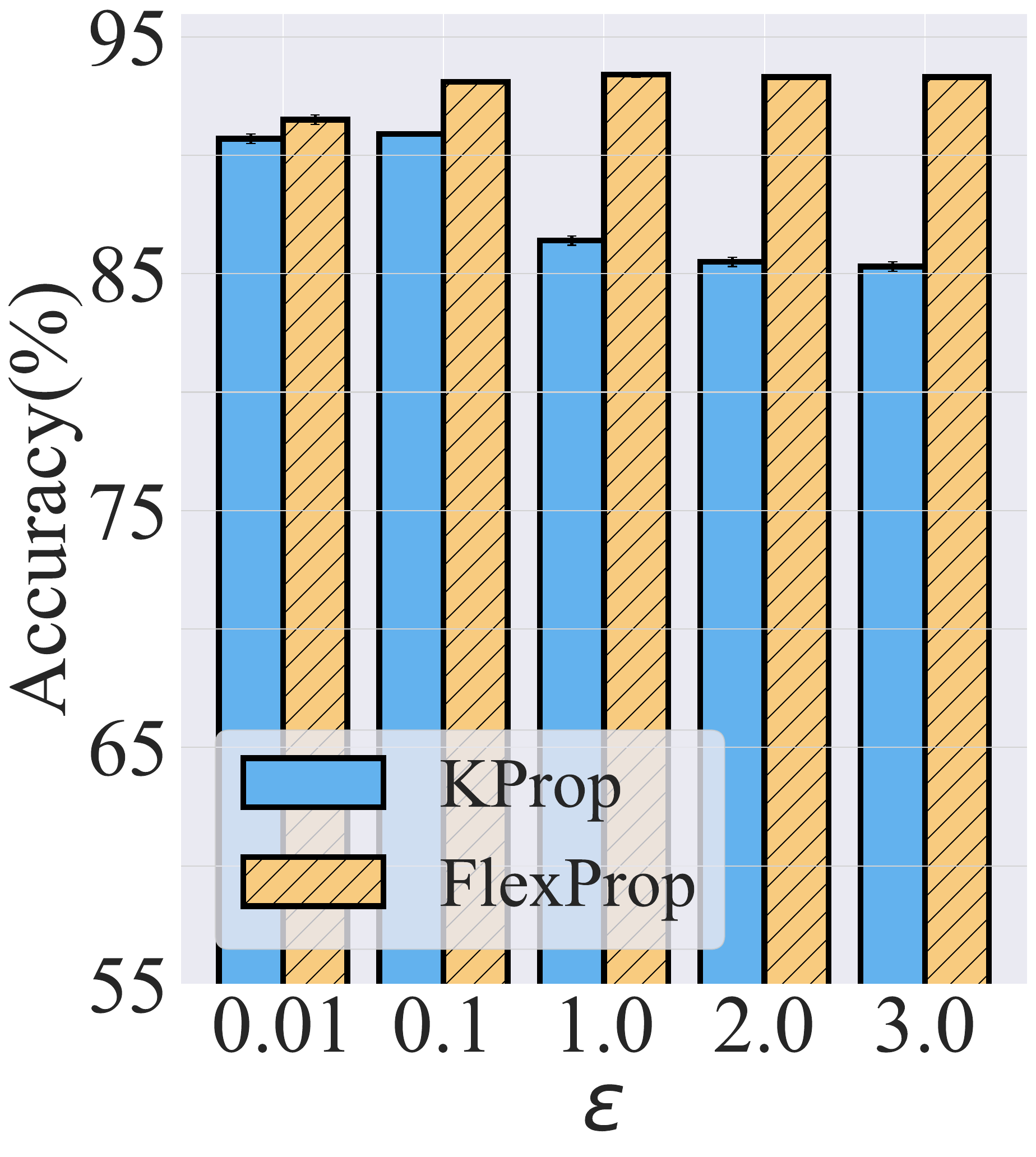} &
    \includegraphics[width=0.14\linewidth]{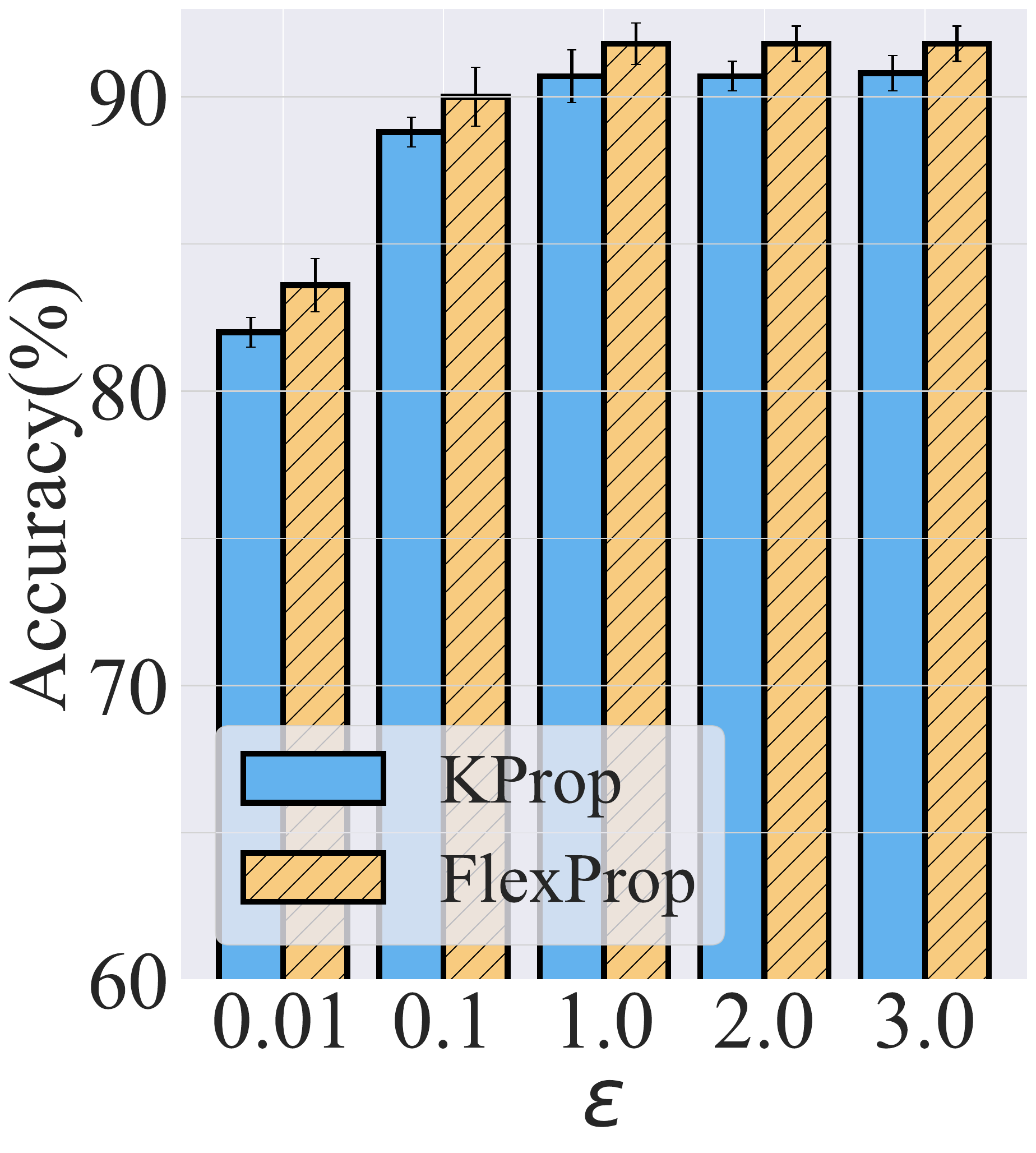} \\
  (a) Cora  & (b) CiteSeer  &(c) Pubmed  & (d) LastFM & (e) Facebook & (f) Wikipedia  \\
  
	\end{tabular}
	\caption{Effect of FlexProp vs. KProp on graph learning performance across various datasets  using the GCN model.}
	\label{fig:77}
\end{figure*}

\subsubsection{On the effects of $\gamma$}\label{gamma}
Fig.~\ref{fig:5}(b) illustrates the accuracy of PPGNN based on the GraphSAGE and the Cora dataset, considering privacy budget allocation parameter $\gamma\in\{0.1,0.3,0.5,0.7,0.9\}$ and privacy budget $\epsilon\in\{0.01,1.0\}$. The results reveal a non-monotonic trend: model accuracy initially increases with $\gamma$, reaches a peak near the middle values, then decreases as $\gamma$ continues to grow. This phenomenon reflects a fundamental trade-off in privacy budget allocation. When $\gamma$ is too small, the privacy budget assigned to protecting user privacy levels is insufficient, causing large noise in the privacy level perturbation. This undermines the effectiveness of the server-side FlexProp noise calibration, ultimately reducing model utility. Conversely, when $\gamma$ is too large, less budget remains for perturbing node features, leading to excessive distortion of node feature data and hence lower accuracy. Therefore, carefully balancing the allocation between protecting privacy levels and node features is crucial. Our experiments suggest that intermediate $\gamma$ values, typically around 0.5, tend to provide the best trade-off, achieving higher utility while maintaining adequate privacy protections.
\subsubsection{On the effects of $K$}\label{gamma}
Fig.~\ref{fig:kkk} displays the accuracy of PPGNN based on the GraphSAGE as the GNN backbone model, considering privacy budget $\epsilon\in\{0.01, 1.0\}$ and step parameter $K\in\{0,2,4,8,16\}$. From Fig.~\ref{fig:kkk}, as $K$ increases, the node classification accuracy initially improves. This improvement is attributed to FlexProp algorithm effectively calibrating noisy node features by considering information from multi-hop neighbors. However, as $K$ continues to increase, accuracy eventually decreases due to over-smoothing of output vectors caused by aggregating messages from too distant nodes, impacting FlexProp's denoising accuracy. Therefore, selecting an appropriate value of $K$ is crucial.

\subsubsection{On the effects of $m$}
Fig.~\ref{fig:5}(c) shows the accuracy of PPGNN on Cora and LastFM with varying $m$. As $m$ increases, the effective privacy budget grows, resulting in reduced noise per dimension and thus improved utility. Our experiments on the Cora dataset confirm that the proposed method consistently outperforms baselines across different values of $m$.

\section{Related Works}\label{S6}

\subsection{Centralized DP-based Graph Learning}
In the centralized DP setting, a trusted curator has full access to the raw user data and adds noise during the training process to ensure privacy. Several works have explored graph learning under this framework. Wu \textit{et al}.~\cite{wu2022linkteller} study the edge re-identification attacks on GNNs and propose \textsc{DpGCN}, a centralized DP mechanism ensuring edge-level privacy. Kolluri \textit{et al}.~\cite{kolluri2022lpgnet} propose a new neural network architecture called \textsc{LPGNet} for training graphs with privacy-sensitive edges. Daigavane \textit{et al}.~\cite{xiang2024preserving} extend the well-known DP-SGD~\cite{abadi2016deep} algorithm to GNN to achieve stronger node-level central DP. Sajadmanesh \textit{et al}.~\cite{sajadmanesh2023gap} propose a novel differentially private GNN, called GAP, which safeguards the privacy of nodes and edges using aggregation perturbation. Sajadmanesh \textit{et al}.~\cite{sajadmanesh2023progap} also propose a novel differentially private GNN called ProGAP, which improves the trade-off between accuracy and privacy by using a progressive training scheme. These approaches represent significant progress in centralized graph privacy, but they rely on a trusted data collector—an assumption that may not hold in many decentralized applications.
\vspace{-1em}
\subsection{Local DP-based Graph Learning}
In contrast, LDP~\cite{xiong2020comprehensive,yang2024local,he2025mitigating} assumes no trusted aggregator, and each user perturbs their own data before sharing. This provides stronger privacy but poses greater challenges for utility preservation. Sajadmanesh \textit{et al}.~\cite{sajadmanesh2021locally} introduce LPGNN, the first LDP-based method for graph learning with private node features and labels. Lin \textit{et al.}~\cite{lin2022towards} propose Solitude, a decentralized LDP framework that protects both graph structure and features. Further, Pei \textit{et al}.~\cite{pei2023privacy} consider privacy-enhanced issues in the context of local subgraphs. To achieve a trade-off between privacy-preserving graph topological relations and graph learning utility in locally private graph learning, Hidano \textit{et al}.~\cite{hidano2022degree} proposed a method for training GNN models with a novel LDP algorithm called DPRR, which provides degree protection for edges of graph nodes. For edge preservation, Zhu \textit{et al}.~\cite{zhu2023blink} introduced the \textsc{Blink}. \textsc{Blink} involves injecting noise into the adjacency lists and node degree to achieve privacy preservation for edges, respectively. However, none of these works consider the personalized privacy requirements of users, which is crucial in practical applications where users may have different privacy concerns. In contrast, our work addresses this gap by proposing PPGNN, the first locally differentially private GNN framework that enables personalized privacy protection across users while retaining utility. Our approach allows each user to independently choose a privacy level and leverages a flexible aggregation mechanism to mitigate the resulting noise.
\vspace{-1em}
\subsection{Personalized Local Differential Privacy}
To accommodate users with personalized privacy preferences, a number of works have extended LDP mechanisms into the personalized setting. Early studies such as~\cite{chen2016private,yiwen2018utility,murakami2019utility} designed utility-optimized mechanisms for spatial data and histogram estimation, where each user may specify a different privacy budget. Shen \textit{et al.}~\cite{shen2021pldp} proposed PLDP for aggregating multi-dimensional data, while Zeng \textit{et al.}~\cite{zeng2022collecting} studied mean estimation under personalized privacy. More recently, LDPTube~\cite{duan2024ldptube} provides theoretical benchmarks for personalized LDP in high-dimensional spaces, and He \textit{et al.}~\cite{he2025personalized} focused on range queries over mobile user data under heterogeneous privacy settings. While these methods demonstrate the importance of personalization in LDP, they focus exclusively on \textit{data collection and aggregation tasks}. In contrast, our work addresses privacy-preserving representation learning in graphs, where local perturbations propagate through network structures and interact with diverse noise scales. Our proposed FlexProp further adapts message passing to account for personalized noise during multi-hop aggregation, which has not been considered in prior PLDP work. To the best of our knowledge, this is the first personalized LDP framework designed for learning representations on graphs.

\vspace{-1em}
\subsection{Privacy-Preserving Graph Learning} 
Beyond DP, several paradigms have been explored for privacy-preserving graph learning. Federated learning~\cite{wang2023automated,zhang2021survey} enables collaborative GNN training across decentralized clients without sharing raw data, but lacks formal privacy guarantees unless combined with DP. Adversarial training-based methods~\cite{hsieh2021netfense} aim to suppress sensitive attribute leakage, while graph perturbation approaches~\cite{liao2021information} obfuscate graph structure to reduce privacy risks. In contrast, our work adopts LDP, which provides provable privacy guarantees without relying on a trusted aggregator, and further extends it to the personalized setting to accommodate heterogeneous user privacy preferences.

\section{Conclusion}\label{S7}
In this paper, we introduce PPGNN (\underline{P}ersonalized \underline{P}rivacy-preserving \underline{G}raph \underline{N}eural \underline{N}etwork), a locally differentially private graph learning framework tailored to meet users' personalized privacy requirements. To safeguard user features while aligning with these personalized privacy preferences, we propose a novel LDP mechanism, the Personalized Perturbation Mechanism. It comprises the Multi-dimensional Local Randomizer, which protects multi-dimensional node features with utility guarantees, and the Extended Square Wave mechanism, which safeguards users' privacy levels across discrete domains effectively. Furthermore, to enhance the efficiency of noise calibration in the perturbed features, we present FlexProp, a weighted aggregation algorithm. Extensive experiments on six real-world datasets demonstrate the effectiveness and robustness of both PPGNN and its key components. 

\textit{Limitations:} While PPGNN demonstrates strong utility across diverse settings, we acknowledge two limitations. First, under highly heterogeneous privacy budget distributions, nodes with very low privacy budgets inject substantially more noise, and when such nodes dominate a target node's neighborhood $\mathcal{N}(v)$, the FlexProp aggregation may suffer from residual bias despite the privacy-aware weighting scheme $\mathbf{\Pi}$. Second, the framework assumes users can explicitly declare a privacy level $\tau \in \{1, \dots, h\}$, which may not always hold in practice. Developing adaptive budget calibration strategies or inferring privacy preferences from behavioral signals remains a promising direction for future work.

\textit{Applicable Scenarios:} PPGNN is particularly well-suited for decentralized graph learning settings where users have heterogeneous privacy preferences, such as social platforms, mobile recommendation systems, and federated user profiling. In these contexts, a one-size-fits-all privacy budget would either over-protect low-sensitivity users (sacrificing utility) or under-protect high-sensitivity users (compromising privacy). PPGNN addresses this by enabling user-specific privacy budgets while preserving analytical utility through adaptive noise calibration.

\bibliographystyle{IEEEtran}
\bibliography{IEEEabrv,bibtex/bib/IEEEexample}

\ifCLASSOPTIONcaptionsoff
  \newpage
\fi

\end{document}